\theoremstyle{plain}
\newtheorem{theorem}{Theorem}[section]
\newtheorem{claim}[theorem]{Claim}
\newtheorem{proposition}[theorem]{Proposition}
\theoremstyle{definition}
\theoremstyle{remark}
\icmltitlerunning{Generalizing Knowledge Graph Embedding with Universal Orthogonal Parameterization}
\begin{document}

\twocolumn[
% \icmltitle{Submission and Formatting Instructions for \\
%            International Conference on Machine Learning (ICML 2024)}
\icmltitle{Generalizing Knowledge Graph Embedding with \\ Universal Orthogonal Parameterization}

% It is OKAY to include author information, even for blind
% submissions: the style file will automatically remove it for you
% unless you've provided the [accepted] option to the icml2024
% package.

% List of affiliations: The first argument should be a (short)
% identifier you will use later to specify author affiliations
% Academic affiliations should list Department, University, City, Region, Country
% Industry affiliations should list Company, City, Region, Country

% You can specify symbols, otherwise they are numbered in order.
% Ideally, you should not use this facility. Affiliations will be numbered
% in order of appearance and this is the preferred way.
% \icmlsetsymbol{equal}{*}

\begin{icmlauthorlist}
\icmlauthor{Rui Li}{gsai}
\icmlauthor{Chaozhuo Li}{by}
\icmlauthor{Yanming Shen}{dlut}
\icmlauthor{Zeyu Zhang}{gsai}
\icmlauthor{Xu Chen}{gsai}
% \icmlauthor{Firstname1 Lastname1}{equal,yyy}
% \icmlauthor{Firstname2 Lastname2}{equal,yyy,comp}
% \icmlauthor{Firstname3 Lastname3}{comp}
% \icmlauthor{Firstname4 Lastname4}{sch}
% \icmlauthor{Firstname5 Lastname5}{yyy}
% \icmlauthor{Firstname6 Lastname6}{sch,yyy,comp}
% \icmlauthor{Firstname7 Lastname7}{comp}
%\icmlauthor{}{sch}
% \icmlauthor{Firstname8 Lastname8}{sch}
% \icmlauthor{Firstname8 Lastname8}{yyy,comp}
%\icmlauthor{}{sch}
%\icmlauthor{}{sch}
\end{icmlauthorlist}

% \icmlaffiliation{yyy}{Department of XXX, University of YYY, Location, Country}
% \icmlaffiliation{comp}{Company Name, Location, Country}
% \icmlaffiliation{sch}{School of ZZZ, Institute of WWW, Location, Country}
\icmlaffiliation{gsai}{Gaoling School of Artificial Intelligence, Renmin University of China, Beijing, China}
% \icmlaffiliation{by}{Beijing University of Posts and Telecommunications, Beijing, China}
\icmlaffiliation{by}{Key Laboratory of Trustworthy Distributed Computing and Service (MOE), Beijing University of Posts and Telecommunications, China}
% \icmlaffiliation{dlut}{Department of Computer Science and Technology, Dalian University of Technology, China}
\icmlaffiliation{dlut}{School of Computer Science and Technology, Dalian University of Technology, China}

% \icmlcorrespondingauthor{Firstname1 Lastname1}{first1.last1@xxx.edu}
% \icmlcorrespondingauthor{Firstname2 Lastname2}{first2.last2@www.uk}
\icmlcorrespondingauthor{Xu Chen}{xu.chen@ruc.edu.cn}
% \icmlcorrespondingauthor{Firstname2 Lastname2}{first2.last2@www.uk}

% You may provide any keywords that you
% find helpful for describing your paper; these are used to populate
% the "keywords" metadata in the PDF but will not be shown in the document
\icmlkeywords{Machine Learning, ICML}

\vskip 0.3in
]

% this must go after the closing bracket ] following \twocolumn[ ...

% This command actually creates the footnote in the first column
% listing the affiliations and the copyright notice.
% The command takes one argument, which is text to display at the start of the footnote.
% The \icmlEqualContribution command is standard text for equal contribution.
% Remove it (just {}) if you do not need this facility.

\printAffiliationsAndNotice{}  % leave blank if no need to mention equal contribution
% \printAffiliationsAndNotice{\icmlEqualContribution} % otherwise use the standard text.

\begin{abstract}
% Orthogonal relation transformations play a pivotal role in knowledge graph embedding (KGE) for capturing logical patterns. 
% Recent advances further exploit distinct geometric spac
% Recent advances in knowledge graph embedding (KGE) rely on orthogonal relation transformations to capture logical patterns, while also exploring non-Euclidean (e.g., hyperbolic) geometry to preserve specific topological structures (e.g., hierarchy).
% Recent advances in knowledge graph embedding (KGE) rely on orthogonal relation transformations to capture logical patterns, while also exploring hyperbolic geometry to preserve hierarchical structures.
Recent advances in knowledge graph embedding (KGE) rely on Euclidean/hyperbolic orthogonal relation transformations to model intrinsic logical patterns and topological structures.
However, existing approaches are confined to rigid relational orthogonalization with \emph{restricted dimension} and \emph{homogeneous geometry}, leading to deficient modeling capability.
% capacity.
% However, existing KGE approaches remain constrained by rigid relational orthogonalization---\emph{limited dimension} and \emph{homogeneous geometry}---leading to insufficient modeling capacity.
% However, existing KGE approaches remain constrained by rigid relational orthogonalization, both \emph{dimensionally} and \emph{geometrically}, leading to insufficient modeling capacity.
% To tackle this challenge, we propose \emph{GoldE}, a generic KGE framework that involves a \emph{Riemannian-unified} orthogonal parameterization based on generalized Householder transformations.
% To tackle this challenge
% In this paper, we propose a generic KGE framework named \emph{GoldE}, which features a \emph{universal} orthogonal parameterization based on generalized Householder transformations, moving beyond existing approaches in terms of both dimension and geometry.
In this work, we move beyond these approaches in terms of both dimension and geometry by introducing a powerful framework named GoldE, which features a \emph{universal} orthogonal parameterization based on a generalized form of Householder reflection.
Such parameterization can naturally achieve \emph{dimensional extension} and \emph{geometric unification} with theoretical guarantees,
% for relational orthogonalization, 
% endowing GoldE with superior capacity to capture crucial logical patterns and inherent topological heterogeneity of knowledge graphs.
enabling our framework to simultaneously capture crucial logical patterns and inherent topological heterogeneity of knowledge graphs.
Empirically, GoldE achieves state-of-the-art performance on three standard benchmarks.
Codes are available at 
\url{https://github.com/xxrep/GoldE}.
\end{abstract}

\section{Introduction}
Knowledge graphs (KGs) structurally organize vast human knowledge into the factual triples, where each triple $(h,r,t)$ signifies the existence of a relation $r$ between head entity $h$ and tail entity $t$.
Imbued with a wealth of factual knowledge, KGs have facilitated a myriad of downstream applications ~\cite{app2, app3, thinkkg}.
However, real-world KGs such as Freebase~\cite{bollacker2008freebase} are plagued by incompleteness~\cite{TransE}.
% This motivates substantial research on Knowledge Graph Embedding (KGE)---learning expressive representations of entities and relations---which excels as an effective technique for automatically inferring missing links.
% This motivates substantial research on Knowledge Graph Embedding (KGE), a simple yet effective technique that learns expressive representations of entities and relations for automatically inferring missing links.
% This motivates substantial research on Knowledge Graph Embedding (KGE), which learns expressive representations of entities and relations to automatically infer missing links.
This motivates substantial research on Knowledge Graph Embedding (KGE), which learns expressive representations of entities and relations for predicting the missing links.

% The principle of KGE lies in designing geometric relation transformations to capture \emph{logical patterns} (e.g., symmetry, antisymmetry, inversion and composition)~\cite{sun2019rotate} while preserving \emph{topological structures} (e.g., cyclicity and hierarchy)~\cite{RotH} as shown in Figure 1.
% The principle of KGE lies in capturing \emph{logical patterns} (e.g., symmetry, antisymmetry, inversion and composition) \cite{sun2019rotate} and \emph{topological structures} (e.g., cyclicity and hierarchy) \cite{RotH} as shown in Figure~\ref{Fig-1}.
The key to KGE lies in capturing crucial \emph{logical patterns} (e.g., symmetry, antisymmetry, inversion and composition) \cite{sun2019rotate} and inherent \emph{topological structures} (e.g., cyclicity and hierarchy) \cite{RotH} as illustrated in Figure~\ref{Fig-1}.
% Recent research has highlighted the effectiveness of \emph{orthogonal} relation transformations (i.e., rotations and reflections) in modeling logical patterns. 
% For this principle, recent works have demonstrated the effectiveness of \emph{orthogonal} relation transformations (i.e., rotations and reflections)
Recent advances highlight the effectiveness of
\emph{orthogonal relation transformations} for this principle.
% For example, RotatE~\cite{sun2019rotate} is the first approach that represents relations as circular rotations in the complex plane.
As a pioneering work, RotatE~\cite{sun2019rotate} represents relations as $2$-dimensional Euclidean rotations, i.e., isometries on $1$-spheres~\cite{sommer2013geometric}, which is able to model the four logical patterns and cyclical structures.
% For example, RotatE~\cite{sun2019rotate} represents relations as circular rotations in the complex plane, which can model the four logical patterns and the cyclical structures.
% Since such $2$-dimensional Euclidean rotations act by isometries on $1$-spheres~\cite{sommer2013geometric}, RotatE is able to model the four logical patterns and the cyclical structures.
% Inspired by this work, a variety of subsequent approaches have flourished, and they can be divided into two branches:
% A variety of subsequent approaches inspired by RotatE can be divided into two branches:
% Inspired by this work, a variety of subsequent approaches have flourished, and they can be divided into two branches:
The subsequent works can be categorized into two branches:
(1) ``\emph{Dimension Branch}''~\cite{quate, rotate3d, duale, house} extends the (Euclidean) orthogonal relation transformations into higher-dimensional spaces for better modeling capacity; 
(2) ``\emph{Geometry Branch}''~\cite{MuRP, RotH} leverages hyperbolic isometries to preserve the hierarchical structures.
% Such Euclidean orthogonal transformations 

% in modeling the four patterns.
% employing orthogonal relation transformations (i.e., rotations and reflections), whose distance-preserving properties enable effective modeling of the four logical patterns.
% works are advanced by orthogonal relation transformations for
% For example, 
% For the first time,
% As a pioneer, 
% RotatE~\cite{sun2019rotate} is the first work to represent relations as circular rotations, motivating two subsequent branches:
% motivating the development of two subsequent branches:
% (1) ``\emph{Dimension Branch}''~\cite{quate, rotate3d, OTE, duale, house} extends the orthogonal relation transformations into higher-dimensional spaces for better modeling capacity; 
% (2) ``\emph{Geometry Branch}''~\cite{MuRP, RotH} introduces hyperbolic geometry to preserve the hierarchical structures.

\begin{figure}[t!]
\centering
\includegraphics[width=0.882\columnwidth]{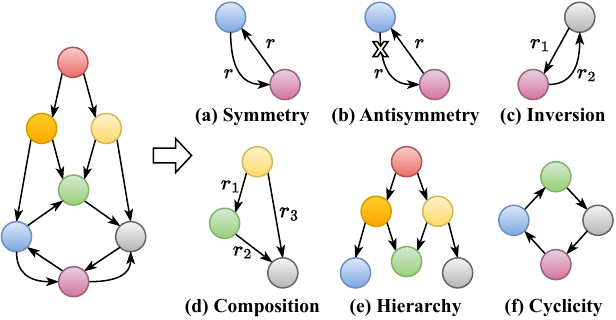}
\vspace{-3.0mm}
\caption{Illustrations of logical patterns (a-d)~\cite{sun2019rotate} and topological structures (e-f)~\cite{RotH}.}
\label{Fig-1}
% \vskip -0.2in
\vspace{-4.0mm}
\end{figure}

\begin{table*}[t]
\caption{Recent models' capability of modeling logical patterns and topological structures of KGs. $\mathbb{E}$, $\mathbb{P}$ and $\mathbb{Q}$ denote Euclidean, elliptic and hyperbolic geometries for orthogonal relation transformations, respectively. Note that Euclidean models essentially act by isometries on hyperspheres~\cite{sommer2013geometric}, which can be viewed as special cases of our elliptic parameterization.}
\label{ability}
\begin{center}
\begin{small}
%\begin{sc}
% \vskip 0.05in
% \vspace{1mm}
\resizebox{0.88\textwidth}{!}{
\begin{tabular}{lcccccccc}
\toprule
\multirow{3}{*}{Model}   & \multirow{3}{*}{\begin{tabular}{@{}c@{}}Geometry\\of Orth.\end{tabular}} &\multirow{3}{*}{\begin{tabular}{@{}c@{}}Dimension\\of Orth.\end{tabular}} & \multicolumn{4}{c}{Logical Patterns} & \multicolumn{2}{c}{Topological Structures}                                           \\ \cmidrule(r){4-7} \cmidrule(r){8-9}
\multicolumn{1}{c}{}  & \multicolumn{1}{c}{} & \multicolumn{1}{c}{} &  Symmetry & Antisymmetry & Inversion & Composition                              & Cyclicity  & Hierarchy\\
% Model & Symmetry & Antisymmetry & Inversion & Composition & RMPs & \makecell[c]{Dim. of Rotation} \\
\midrule
% TransE  & - & - & \XSolidBrush & \CheckmarkBold & \CheckmarkBold & \CheckmarkBold &  \XSolidBrush & \XSolidBrush \\
% TransX    & \CheckmarkBold & \CheckmarkBold &  \XSolidBrush &  \XSolidBrush  & \CheckmarkBold & - \\
% DistMult  & -  & -   & \CheckmarkBold & \XSolidBrush & \XSolidBrush & \XSolidBrush  & \CheckmarkBold \\
% ComplEx   & -  & -  & \CheckmarkBold & \CheckmarkBold & \CheckmarkBold & \XSolidBrush  & \CheckmarkBold \\
RotatE~\cite{sun2019rotate}  & $\mathbb{E}$ & $2$ & \CheckmarkBold & \CheckmarkBold & \CheckmarkBold & \CheckmarkBold & \CheckmarkBold  & \XSolidBrush \\
Rotate3D~\cite{rotate3d}  & $\mathbb{E}$ & $3$ & \CheckmarkBold & \CheckmarkBold & \CheckmarkBold & \CheckmarkBold & \CheckmarkBold  & \XSolidBrush \\
DualE~\cite{duale}  & $\mathbb{E}$ & $3$ & \CheckmarkBold & \CheckmarkBold & \CheckmarkBold & \XSolidBrush  & \CheckmarkBold & \XSolidBrush \\
QuatE~\cite{quate}  & $\mathbb{E}$ & $4$ & \CheckmarkBold & \CheckmarkBold & \CheckmarkBold & \XSolidBrush  & \CheckmarkBold & \XSolidBrush \\
% OTE~\cite{OTE} & $\mathbb{E}$ & $k$ & \CheckmarkBold & \CheckmarkBold & \CheckmarkBold & \CheckmarkBold  & \CheckmarkBold  & \XSolidBrush \\
HousE~\cite{house}  & $\mathbb{E}$ & $k$ & \CheckmarkBold & \CheckmarkBold & \CheckmarkBold & \CheckmarkBold  & \CheckmarkBold  & \XSolidBrush \\
% MuRP   & $\mathbb{Q}$ & -- & \CheckmarkBold & \CheckmarkBold & \XSolidBrush & \XSolidBrush  & \XSolidBrush & \CheckmarkBold \\
% DualE    & \CheckmarkBold & \CheckmarkBold & \CheckmarkBold & \XSolidBrush  & \CheckmarkBold & 3-D \\
RefH~\cite{RotH}  & $\mathbb{Q}$ & $2$  & \CheckmarkBold & \CheckmarkBold & \CheckmarkBold & \CheckmarkBold  & \XSolidBrush & \CheckmarkBold \\
RotH~\cite{RotH}  & $\mathbb{Q}$ & $2$  & \CheckmarkBold & \CheckmarkBold & \CheckmarkBold & \CheckmarkBold  & \XSolidBrush & \CheckmarkBold \\
AttH~\cite{RotH}  & $\mathbb{Q}$ & $2$  & \CheckmarkBold & \CheckmarkBold & \CheckmarkBold & \CheckmarkBold  & \XSolidBrush & \CheckmarkBold \\
% HousE    & \CheckmarkBold & \CheckmarkBold & \CheckmarkBold & \CheckmarkBold  & \CheckmarkBold & $k$-D \\
\midrule
% HousE-r    & \CheckmarkBold & \CheckmarkBold & \CheckmarkBold & \CheckmarkBold  & \XSolidBrush & $k$-D \\
GoldE  & $\mathbb{P}(\mathbb{E}), \mathbb{Q}$ & $k$  & \CheckmarkBold & \CheckmarkBold & \CheckmarkBold & \CheckmarkBold  & \CheckmarkBold & \CheckmarkBold \\
\bottomrule
\end{tabular}}
%\end{sc}
\end{small}
\end{center}
\vspace{-4mm}
\end{table*}

% However, \emph{it takes two to tango}---as shown in Table 1, the dimension-extended models are built on Euclidean geometry that cannot effectively handle hierarchies, while the hyperbolic models are limited by the low dimensions of orthogonal transformations, leading to inferior modeling capacity.
However, 
% both branches remain constrained by the \emph{rigidity} of relational orthogonalization.
% existing approaches remain constrained by the rigidity of relational orthogonalization 
% \emph{it takes two to tango}---none of existing approaches can simultaneously
% break the restrictions of dimension and geometry.
\emph{it takes two to tango}---none of existing approaches can simultaneously break the restrictions of dimension and geometry for orthogonal relation transformations.
As shown in Table~\ref{ability}, 
% the dimensional extensions are entrenched in Euclidean geometry that cannot effectively handle hierarchies,
the dimensional extensions are entrenched in Eu-clidean geometry,
% while the hyperbolic models remain constrained by the low-dimensional orthogonalization.
while the hyperbolic relational orthogo-nalization remains constrained in low-dimensional spaces due to the significant computational demand and complexity.
Moreover, all these approaches are specifically designed on homogeneous geometry, which is inadequate to preserve the topological heterogeneity of KGs---in some regions cyclical, in others hierarchical \cite{mix1, mix2, mix3}.
% In light of such limitations, a challenging question arises: \emph{can we generalize existing approaches in terms of both dimension and geometry to achieve superior modeling capacity?}
% What we want is the best of all worlds, i.e., 
% In light of such limitations, we seek the best of all worlds to generalize existing approaches in terms of both dimension and geometry for fully capturing logical patterns and topological structures with superior modeling capacity.
% In light of such limitations, we aim to seek the best of all worlds, i.e., a unified approach that generalize existing approaches in terms of both dimension and geometry for superior modeling capacity.
% a challenging question arises: \emph{can we generalize existing approaches in terms of both dimension and geometry to fully capture logical patterns and topological structures?}
In light of such limitations, a challenging question arises: 
\emph{can we generalize these approaches in both dimension and geometry to achieve the best of all worlds?}

In this paper, we give an affirmative answer by
presenting a general framework named GoldE, which represents relations with a \emph{universal orthogonal parameterization}.
% simultaneously achieves \emph{dimensional extension} and \emph{geometric unification} with a \emph{universal orthogonal parameterization}.
% simultaneously achieving dimensional extension and geometric unification.
% To derive the universality, 
% we establish a generalized form of Householder transformations~\cite{householder1958unitary} based on diagonal bilinear scalar product, 
% we generalize Householder transformations~\cite{householder1958unitary} with quadratic inner product, 
% As the basic mathematical operator,
To establish the universality, 
we first derive a generalized form of 
Householder reflection \cite{householder1958unitary} based on a quadratic inner product.
By treating such reflection as the elementary operator,
we can then theoretically design elegant mappings to parameterize orthogonal relation transformations of typical geometric types (i.e., Euclidean, elliptic and hyperbolic).
% we first derive a generalized form of 
% Householder transformation \cite{householder1958unitary} 
% based on quadratic inner product.
% in a quadratic inner product space.
% as the elementary operator, 
% Such generalization 
% which provides a natural way to represent orthogonal transformations in the three typical model spaces, i.e., Euclidean, elliptic, and hyperbolic, without any loss of degrees of freedom.
% On top of that, 
% By treating it as the basic operator,
% we can design specific mappings to parameterize orthogonal relation transformations of typical geometric types (i.e., Euclidean, elliptic and hyperbolic).
% based on which a unified mapping is designed to parameterize orthogonal transformations of typical geometric types (i.e., Euclidean, elliptic and hyperbolic).
% without any loss of degrees of freedom.
Each of them breaks through the dimensional rigidity without losing degrees of freedom, 
enabling superior capacity for modeling logical patterns and corresponding topologies.
% On top of that
% On top of that, we further introduce a mixed orthogonal parameterization in a product manifold \cite{product1} to naturally unify multiple geometric types, thereby matching the topological heterogeneity and yielding higher-quality representations.
% On top of that
Considering the inherent topological heterogeneity of KGs, we further integrate the designed mappings within a product manifold \cite{product1} to unify multiple
geometric types of orthogonal transformations.
In this way, such parameterization can simultaneously achieve dimensional extension and geometric unification for relational orthogonalization,
% thus empowering our GoldE framework to match heterogeneous topologies and provide higher-quality representations.
empowering our framework to better match heterogeneous topologies and thus provide higher-quality representations.

\textbf{Contributions:} To the best of our knowledge, GoldE is the first framework that generalizes existing KGE approaches \emph{in both dimension and geometry} of the orthogonal relation transformations.
% At the core of GoldE, a universal orthogonal parameterization is designed to simultaneously achieve dimensional extension and geometric unification
% With theoretical guarantees, we design a universal orthogonal parameterization based on generalized Householder reflections,
% With theoretical guarantees, we establish a universal orthogonal parameterization, based on which our GoldE framework can fully capture crucial logical patterns and inherent topological heterogeneity as shown in Table~\ref{ability}.
With theoretical guarantees, we establish a \emph{universal orthogonal parameterization}, based on which our GoldE can \emph{simultaneously capture crucial logical patterns and inherent topological heterogeneity} as shown in Table~\ref{ability}.
% Such parameterization simultaneously achieves dimensional extension and geometric unification, 
% enabling GoldE to fully capture crucial logical patterns and inherent topological heterogeneity as shown in Table~\ref{ability}.
Empirically, we conduct extensive experiments over three standard benchmarks, and GoldE consistently outperforms current state-of-the-art baselines across all the datasets.

\vspace{-1.5mm}
\section{Preliminaries}

\subsection{Problem Setup}
Given the entity set $\mathcal{V}$ and relation set $\mathcal{R}$, a knowledge graph can be formally defined as a collection of factual triples $\mathcal{F}=\{(h,r,t)\}\subseteq\mathcal{V}\times\mathcal{R}\times\mathcal{V}$, where each triple represents that there is a relation $r$ between head entity $h$ and tail entity $t$.
As an effective technique for automatically inferring missing links, KGE encodes entities and relations into expressive representations, and measures the plausibility of each triple with a pre-defined score function.

\begin{figure*}[t!]
%\vskip 0.2in
\centering
\,\,\,
\subfigure[]{\label{Fig-2a}\includegraphics[width=0.45\columnwidth]{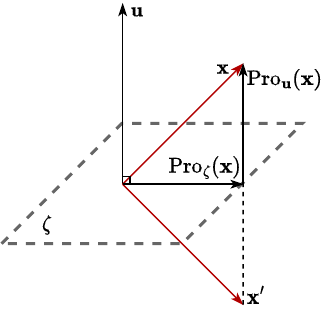}}
%\hfill
\quad
\subfigure[]{\label{Fig-2b}\includegraphics[width=0.45\columnwidth]{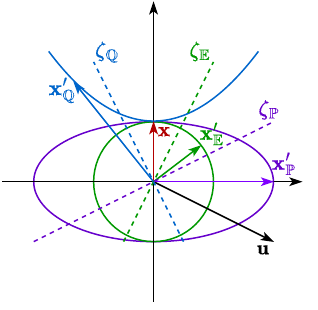}}
\quad
\subfigure[]{\label{Fig-2c}\includegraphics[width=0.45\columnwidth]{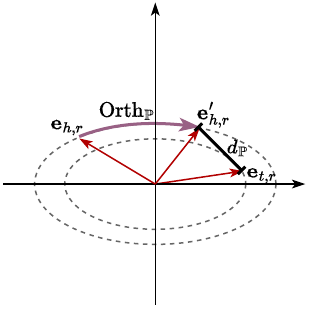}}
\quad
\subfigure[]{\label{Fig-2d}\includegraphics[width=0.45\columnwidth]{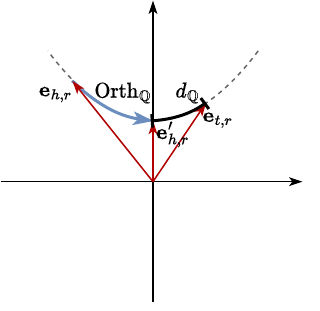}}
\vspace{-4mm}
\caption{(a) Illustration of generalized Householder reflection; (b) Three types of reflections of $\mathbf{x}$ about the hyperplanes $\zeta$ orthogonal to $\mathbf{u}$ under different weighting vectors, i.e., Euclidean (green lines), elliptic (purple lines) and hyperbolic (blue lines). Note that all quantities are displayed in the uniformly weighted Euclidean space, thus the non-Euclidean hyperplanes $\zeta_{\mathbb{P}}$ and $\zeta_{\mathbb{Q}}$ do not appear perpendicular to $\mathbf{u}$; (c) Elliptic orthogonal parameterization in $2$-dimensional space; (d) Hyperbolic orthogonal parameterization in $2$-dimensional space.}
\label{Fig-2}
\vskip -0.14in
\end{figure*}

% \subsection{Non-Euclidean Geometry}
% This paper deals with two typical non-Euclidean manifolds (i.e., elliptic and hyperbolic), which play a significant role in learning better representations for cyclical and hierarchical types of data~\cite{sph1,sph2,hyp1,hyp3,hyp2023}.
\subsection{Geometric Background}
We briefly introduce some necessary background on elliptic and hyperbolic geometry. More relevant details are available in the standard texts~\cite{do1992riemannian, eisenhart1997riemannian}.
% The details are available in standard texts~\cite{eisenhart1997riemannian}.
% Non-Euclidean geometry is a fascinating geometric system that plays a significant role in learning better representations for cyclical or hierarchical types of data~\cite{sph2,hyp1,hyp2,hyp3}. By altering the parallel postulate of Euclidean geometry, non-Euclidean geometry primarily comes in two categories: hyperbolic geometry and elliptic geometry.

% \paragraph{Elliptic Manifold}
\textbf{Elliptic Manifold:}
% Let ${a_i}_{i=1}^{n}$
% An elliptic manifold is a smooth topological space equipped with a positive-definite symmetric scalar product.
% Let $\braket{\cdot, \cdot}: \mathbb{R}^{k+1}\times\mathbb{R}^{k+1}\rightarrow\mathbb{R}$ denotes the elliptic scalar product, 
Let $\mathbf{p}\in\mathbb{R}_{+}^{k}$ be a weighting vector where all elements are positive. For any $\mathbf{x},\mathbf{y}\in\mathbb{R}^{k}$, the elliptic inner product~\cite{e_def} is defined as: 
% \begin{equation}
% \begin{split}
% \label{ell_inner}
% \braket{x, y}_p&=x^\top\mathrm{diag}(p)y \\
% &=p_0x_0y_0+p_1x_1y_1+\cdots+p_kx_ky_k.
% \end{split}
% \end{equation}
\begin{equation}
\begin{split}
\label{ell_inner}
\braket{\mathbf{x}, \mathbf{y}}_{\mathbf{p}}
=\mathbf{x}^\top\mathrm{diag}(\mathbf{p})\mathbf{y}
=p_1x_1y_1+\cdots+p_kx_ky_k. \nonumber
% =\sum_{i=1}^{k-1}\mathbf{p}[i]\,\mathbf{x}[i]\,\mathbf{y}[i].
\end{split}
\end{equation}
% $\braket{\cdot, \cdot}_w=x^\top\mathrm{diag}(w)y=w_0x_0y_0+w_1x_1y_1+\cdots+w_kx_ky_k$
For any $\beta>0$, the elliptic manifold (ellipsoid) $\mathbb{P}_{\mathbf{p},\beta}^{k}$ is the  following submanifold in $\mathbb{R}^{k}$:
\begin{equation}
    \mathbb{P}_{\mathbf{p},\beta}^{k}=\{\mathbf{x}\in\mathbb{R}^{k}:\braket{\mathbf{x}, \mathbf{x}}_{\mathbf{p}}=\beta\}.
\vspace{-1.0mm}
\end{equation}
% The elliptic distance on $\mathbb{P}_\beta^{k}$
If one set ${\mathbf{p}}$ to an all-ones vector $\mathbf{1}$, $\braket{\mathbf{x}, \mathbf{y}}_{\mathbf{p}}=\sum_{i=1}^kx_iy_i$ is the Euclidean inner product, based on which the ellipsoid $\mathbb{P}_{{\mathbf{p}},\beta}^{k}$ is specialized into the hypersphere $\mathbb{S}_\beta^{k}$.

% \paragraph{Hyperbolic Manifold}
\textbf{Hyperbolic Manifold:}
This paper chooses the hyperboloid model of hyperbolic space for its simplicity and numerical stability~\cite{hyperboloid}.
Let $\mathbf{q}\in\mathbb{R}^{k}$ be a weighting vector with the first element equal to $-1$ and the remaining elements equal to $+1$.
For any $\mathbf{x},\mathbf{y}\in\mathbb{R}^{k}$, the hyperbolic (Lorentz) inner product is defined as:
% \begin{equation}
% \begin{split}
% \label{hyp_inner}
% \braket{x, y}_q&=x^\top\mathrm{diag}(q)y \\
% &=-x_0y_0+x_1y_1+\cdots+x_ky_k.
% \end{split}
% \end{equation}
\begin{equation}
\begin{split}
\label{hyp_inner}
\braket{\mathbf{x},\mathbf{y}}_{\mathbf{q}}=\mathbf{x}^\top\mathrm{diag}(\mathbf{q})\mathbf{y}
=-x_1y_1+x_2y_2+\cdots+x_ky_k. \nonumber
% =-x_1y_1+\sum_{i=2}^kx_iy_i.
% &=-x_0y_0+x_1y_1+\cdots+x_ky_k.
\end{split}
\end{equation}
For any $\beta>0$, the hyperboloid $\mathbb{Q}_{\beta}^{k}$ is denoted as:
\begin{equation}
\label{hyp_define}
    \mathbb{Q}_{\beta}^{k}=\{\mathbf{x}\in\mathbb{R}^{k}:\braket{\mathbf{x}, \mathbf{x}}_{\mathbf{q}}=-\beta, x_1>0\}.
\vspace{-1.5mm}
\end{equation}

% For the sake of clarification, notations used in this paper are listed in Appendix~\ref{notations}.
For the sake of clarification, all notations used in this paper are listed in Appendix~\ref{notations}.
% \subsection{Householder Reflection}

\section{Methodology}
\subsection{Generalized Orthogonal Matrix and Mapping}
\label{house}
% In the first step, we seek an elegant parameterization method for modeling orthogonal matrix $\mathbf{G}\in\mathbb{R}^{k\times k}$ in Euclidean, elliptic and hyperbolic spaces.
% To establish the universality of our framework in the three typical model spaces, we take a broader perspective by introducing a generalized space equipped with the inner product of quadratic form.
In the first step, we seek an elegant parameterization method for modeling orthogonal matrices in Euclidean, elliptic and hyperbolic spaces of arbitrary dimension $k$.
To establish this universality, we take a broader perspective by introducing a generalized space equipped with a quadratic inner product.

\textbf{Quadratic form:} For any $\mathbf{x},\mathbf{y}\in\mathbb{R}^{k}$, the quadratic inner product $\braket{\mathbf{x},\mathbf{y}}_{\mathbf{w}}$ with respect to $\mathbf{w}\in\mathbb{R}^{k}$ is defined as:
\begin{equation}
\begin{split}
\braket{\mathbf{x},\mathbf{y}}_{\mathbf{w}}=\mathbf{x}^\top\mathrm{diag}(\mathbf{w})\mathbf{y}=\sum_{i=1}^kw_ix_iy_i, \\
% &=-x_0y_0+x_1y_1+\cdots+x_ky_k.
\end{split}
\end{equation}
where $\mathbf{w}$ is a weighting vector with no elements equal to $0$, ensuring that $\braket{\mathbf{x}, \mathbf{y}}_\mathbf{w}$ is non-degenerate.

\textbf{Generalized orthogonal matrix:} In the generalized space endowed with $\braket{\mathbf{x}, \mathbf{y}}_\mathbf{w}$, a real square matrix $\mathbf{G}\in\mathbb{R}^{k\times k}$ is orthogonal if it satisfies:
\begin{equation}
\label{inner-inv}
% \small
    \braket{\mathbf{Gx}, \mathbf{Gy}}_\mathbf{w}=\braket{\mathbf{x},\mathbf{y}}_\mathbf{w}\Leftrightarrow \mathbf{G}^\top\mathrm{diag}(\mathbf{w})\mathbf{G}=\mathrm{diag}(\mathbf{w}).
\end{equation}
We call the set of all $\mathbf{G}$ as the generalized orthogonal group in dimension $k$, denoted by $\mathbf{O}_{\mathbf{w}}(k)$. 
According to Equation (\ref{ell_inner}) and (\ref{hyp_inner}), the generalized orthogonal matrix $\mathbf{G}\in\mathbf{O}_{\mathbf{w}}(k)$ can be naturally specialized into Euclidean, elliptic and hyperbolic spaces by setting $\mathbf{w}$ to $\mathbf{1}$, $\mathbf{p}$ and $\mathbf{q}$, respectively.
Based on this observation, we turn to modeling such generalized orthogonal matrices for establishing the universality.
%This implies that the generalized orthogonalization is what we need to establish the universality.
%This motivates us to present an elegant way for such generalized orthogonalization.
%\textbf{Orthogonal matrix:} For a square matrix $\mathbf{G}\in\mathbb{R}^{k\times k}$ in the quadratic inner product space, $G$ is orthogonal if it satisfies:
%\begin{equation}
%\label{inner-inv}
%    \braket{Gx, Gy}_w=\braket{x,y}_w\Leftrightarrow G^\top\mathrm{diag}(w)G=\mathrm{diag}(w).
%\end{equation}
%According to Equation (\ref{ell_inner}) and (\ref{hyp_inner}), one can notice that the generalized orthogonal matrix $G$ can be naturally specialized into Euclidean, elliptic and hyperbolic spaces by setting $w$ to $\mathbf{1}$, $p$ and $q$, respectively.
% Euclidean, elliptic and hyperbolic inner products are all the special cases of $\braket{x, y}_w$ by setting $w$ to $\mathbf{1}$, $p$ and $q$.
% This implies that 

\textbf{Generalized Householder matrix:} Before fully covering all $\mathbf{G}\in \mathbf{O}_{\mathbf{w}}(k)$, we start with the simplest case: \emph{elementary reflection matrix}, which is also known as the Householder matrix~\cite{householder1958unitary} in Euclidean space.
Here we derive the generalized form of Householder matrix in the quadratic inner product space.

Geometrically, the generalized Householder matrix $\mathbf{H}$ describes an elementary reflection from $\mathbf{x}$ to $\mathbf{x}'$ about a a hyperplane $\zeta$ as shown in Figure~\ref{Fig-2a}.
Given a normal vector $\mathbf{u}\in\mathbb{R}^{k}$ of $\zeta$, the projection of $\mathbf{x}$ onto $\mathbf{u}$ is denoted as:
% Given a normal vector $\mathbf{u}\in\mathbb{R}^{k}$ of $\zeta$, the projection vector of $\mathbf{x}$ onto $\mathbf{u}$, denoted by $\mathrm{Pro}_\mathbf{u}(\mathbf{x})$, is calculated as:
\begin{equation}
    \mathrm{Pro}_\mathbf{u}(\mathbf{x})=\frac{\braket{\mathbf{u},\mathbf{x}}_\mathbf{w}}{\braket{\mathbf{u},\mathbf{u}}_\mathbf{w}}\mathbf{u}.
\end{equation}
By the law of vector addition, the projection of $\mathbf{x}$ onto the hyperplane $\zeta$, i.e., $\mathrm{Pro}_\zeta(\mathbf{x})$, is calculated as:
\begin{equation}
    \mathrm{Pro}_\zeta(\mathbf{x})=\mathbf{x}-\mathrm{Pro}_\mathbf{\mathbf{u}}(\mathbf{\mathbf{x}}).
\end{equation}
Based on the concept of projections, we move to the elemen-tary reflection of $\mathbf{x}$ with respect to $\zeta$ in the quadratic inner product space. The resulting vector $\mathbf{x}'$ is given by:
\begin{equation}
\begin{split}
\label{ref-new}
    \mathbf{x}'&=\mathrm{Pro}_\zeta(\mathbf{x})-\mathrm{Pro}_\mathbf{u}(\mathbf{x})
    % &=\mathbf{x}-2\mathrm{Pro}_\mathbf{u}(\mathbf{x})\\
    % &=\mathbf{x}-2\mathbf{u}\frac{\braket{\mathbf{u},\mathbf{x}}_\mathbf{w}}{\braket{\mathbf{u},\mathbf{u}}_\mathbf{w}}\\
    = \underbrace{(\mathbf{I}-2\frac{\mathbf{u}\mathbf{u}^\top\mathrm{diag}(\mathbf{w})}{\mathbf{u}^\top\mathrm{diag}(\mathbf{w})\mathbf{u}})}_{\mathbf{H}(\mathbf{u},\mathbf{w})}\mathbf{x},
    % &=\mathbf{H}(\mathbf{u},\mathbf{w})x,
\end{split}
\end{equation}
where $\mathbf{I}\in\mathbb{R}^{k\times k}$ is the identity matrix. 
% Notably, the induced reflection matrix $\mathbf{H}$, taking normal vector $\mathbf{u}$ and weighting vector $\mathbf{w}$ as variables, can be viewed as a generalization of Euclidean Householder matrix ($\mathbf{w}=\mathbf{1}$) in the quadratic inner product space.
Notably, the induced reflection matrix $\mathbf{H}(\mathbf{u},\mathbf{w})$, with normal vector $\mathbf{u}$ and weighting vector $\mathbf{w}$ as variables, can be viewed as a generalization of Euclidean Householder matrix ($\mathbf{w}=\mathbf{1}$) in the quadratic inner product space.
It is straightforward to demonstrate that the generalized Householder matrix $\mathbf{H}(\mathbf{u},\mathbf{w})$ is orthogonal, which satisfies the inner product invariance in Equation (\ref{inner-inv}).
Figure~\ref{Fig-2b} illustrates the Euclidean, elliptic and hyperbolic reflections in $2$-dimensional space, described by $\mathbf{H}$ with $\mathbf{w}$ equal to $\mathbf{1}$, $\mathbf{p}$ and $\mathbf{q}$, respectively.

\textbf{Orthogonal mapping:} By taking the elementary reflections as basic operators, we can design a mapping to represent the generalized orthogonal transformations.
Formally, given a series of vectors $\mathbf{U}=\{\mathbf{u}_c\}_{c=1}^n$ where $\mathbf{u}_c\in\mathbb{R}^{k}$, we define the mapping with respect to $\mathbf{w}\in\mathbb{R}^k$ as follows:
% By taking the elementary reflections as basic operators, we design a mapping to represent generalized orthogonal transformations in the quadratic inner product space.
% Formally, given a series of vectors $\mathbf{U}=\{\mathbf{u}_c\}_{c=1}^n$ where $\mathbf{u}_c\in\mathbb{R}^{k}$ and $n$ is a positive integer, we define the mapping with respect to $\mathbf{w}\in\mathbb{R}^k$ as follows:
\begin{equation}
\label{orth}
    \mathrm{Orth}(\mathbf{U}, \mathbf{w})=\prod_{c=1}^n \mathbf{H}(\mathbf{u}_c,\mathbf{w}).
\end{equation}
Since each $\mathbf{H}(\mathbf{u}_c,\mathbf{w})$ preserves the quadratic inner product, one can easily verify that the output of $\mathrm{Orth}(\mathbf{U}, \mathbf{w})$ is also a generalized orthogonal matrix according to Equation (\ref{inner-inv}).
% since each $\mathbf{H}(\mathbf{u}_c,w)$ preserves the quadratic inner product. 
% since each factor $\mathbf{H}(\mathbf{u}_c,w)$ preserves the quadratic inner product.
% Since each factor $\mathbf{H}(\mathbf{u}_c,w)\in\mathbb{R}^{k\times k}$ is orthogonal, the output of $\mathrm{Orth}(\mathbf{U}, \mathbf{w})$ is also an orthogonal matrix.
Moreover, we have the following theorem:
\begin{theorem}
When $n=k$, the image of $\mathrm{Orth}$ is the set of all $k\times k$ generalized orthogonal matrices, i.e., ${\rm Image( \mathrm{Orth})}=\mathbf{O}_{\mathbf{w}}(k)$. (See proof in Appendix \ref{proof-thm-1})
% , $\mathbf{\rm SO}(k)$ is the $k$-dimensional  special orthogonal group. (See proof in Appendix \ref{proof})
\label{thm-1}
% \vspace{-1mm}
\end{theorem}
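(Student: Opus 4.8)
The plan is to prove the two inclusions separately. The inclusion $\mathrm{Image}(\mathrm{Orth}) \subseteq \mathbf{O}_\mathbf{w}(k)$ is already secured by the remark preceding the theorem: each factor $\mathbf{H}(\mathbf{u}_c,\mathbf{w})$ preserves $\braket{\cdot,\cdot}_\mathbf{w}$, hence so does any product, and by Equation~(\ref{inner-inv}) the product lies in $\mathbf{O}_\mathbf{w}(k)$. The real content is the reverse inclusion $\mathbf{O}_\mathbf{w}(k) \subseteq \mathrm{Image}(\mathrm{Orth})$: every $\mathbf{G} \in \mathbf{O}_\mathbf{w}(k)$ must be realized as a product of at most $k$ generalized reflections $\mathbf{H}(\mathbf{u}_c,\mathbf{w})$. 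I would recognize this as the Cartan--Dieudonn\'e theorem transported to the non-degenerate quadratic space $(\mathbb{R}^k,\braket{\cdot,\cdot}_\mathbf{w})$, where non-degeneracy is exactly the standing hypothesis that no entry of $\mathbf{w}$ vanishes.

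The core argument is an induction on $k$ that strips off one reflection at a time. Given $\mathbf{G} \neq \mathbf{I}$, I would choose an anisotropic vector $\mathbf{v}$ (that is, $\braket{\mathbf{v},\mathbf{v}}_\mathbf{w} \neq 0$) with $\mathbf{G}\mathbf{v} \neq \mathbf{v}$, and manufacture a reflection sending $\mathbf{G}\mathbf{v}$ back to $\mathbf{v}$: when $\mathbf{G}\mathbf{v}-\mathbf{v}$ is itself anisotropic, the reflection $\mathbf{H}(\mathbf{G}\mathbf{v}-\mathbf{v},\mathbf{w})$ does exactly this, so that $\mathbf{H}(\mathbf{G}\mathbf{v}-\mathbf{v},\mathbf{w})\,\mathbf{G}$ fixes $\mathbf{v}$. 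Because $\mathbf{v}$ is anisotropic, $\mathbb{R}^k$ splits $\mathbf{w}$-orthogonally as $\mathbb{R}\mathbf{v}\oplus\mathbf{v}^\perp$ with the form still non-degenerate on $\mathbf{v}^\perp$; the modified map restricts there to an element of $\mathbf{O}_\mathbf{w}(k-1)$, to which the inductive hypothesis applies. Each reflection of $\mathbf{v}^\perp$ extends to one of $\mathbb{R}^k$ fixing $\mathbf{v}$ by keeping its normal vector inside $\mathbf{v}^\perp$, and reassembling the factors expresses $\mathbf{G}$ as a product of at most $k$ reflections.

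The main obstacle is the indefinite (hyperbolic, $\mathbf{w}=\mathbf{q}$) signature, where isotropic vectors exist and the clean one-step reduction can fail: the difference $\mathbf{G}\mathbf{v}-\mathbf{v}$ may be isotropic, rendering $\mathbf{H}(\mathbf{G}\mathbf{v}-\mathbf{v},\mathbf{w})$ ill-defined. I would dispatch this with the standard two-reflection device. Using $\braket{\mathbf{G}\mathbf{v},\mathbf{G}\mathbf{v}}_\mathbf{w}=\braket{\mathbf{v},\mathbf{v}}_\mathbf{w}$ one computes $\braket{\mathbf{G}\mathbf{v}-\mathbf{v},\mathbf{G}\mathbf{v}-\mathbf{v}}_\mathbf{w}+\braket{\mathbf{G}\mathbf{v}+\mathbf{v},\mathbf{G}\mathbf{v}+\mathbf{v}}_\mathbf{w}=4\braket{\mathbf{v},\mathbf{v}}_\mathbf{w}\neq 0$, so at least one of $\mathbf{G}\mathbf{v}\mp\mathbf{v}$ is anisotropic; moreover $\braket{\mathbf{G}\mathbf{v}-\mathbf{v},\mathbf{G}\mathbf{v}+\mathbf{v}}_\mathbf{w}=0$. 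If it is $\mathbf{G}\mathbf{v}+\mathbf{v}$ that is anisotropic, reflecting along it sends $\mathbf{G}\mathbf{v}\mapsto-\mathbf{v}$, and a second reflection along the anisotropic $\mathbf{v}$ restores $\mathbf{v}$, so two well-defined reflections achieve the reduction while keeping the running total bounded by $k$.

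Finally, I would reconcile the bound ``at most $k$'' with the statement's ``exactly $n=k$'' factors. Since every genuine $\mathbf{H}(\mathbf{u},\mathbf{w})$ has determinant $-1$, a product of $m$ of them has determinant $(-1)^m$, so the leftover $k-m$ slots must be filled without altering the result. I would do this by padding with commuting involutive pairs $\mathbf{H}(\mathbf{u},\mathbf{w})^2=\mathbf{I}$ (or the degenerate trivial factor adopted by the parameterization). I expect the delicate bookkeeping here---matching the parity of the reflection count to $\det\mathbf{G}$ so that \emph{both} determinant cosets of $\mathbf{O}_\mathbf{w}(k)$ are genuinely covered---to be the one point requiring care, and I would make the padding convention explicit rather than leaving it implicit.
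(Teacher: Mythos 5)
Your overall route is the same as the paper's: both inclusions, with the hard direction reduced to a Cartan--Dieudonn\'e theorem for the non-degenerate quadratic space $(\mathbb{R}^k,\braket{\cdot,\cdot}_\mathbf{w})$, proved by induction on $k$ by stripping off reflections that fix an anisotropic vector. However, there is a genuine gap in your treatment of the isotropic case. Your two-reflection device is algebraically correct ($\tau_{\mathbf{Gv}+\mathbf{v}}$ sends $\mathbf{Gv}$ to $-\mathbf{v}$ and $\tau_{\mathbf{v}}$ restores $\mathbf{v}$), but it does \emph{not} keep the running total bounded by $k$: spending two reflections to reduce the dimension by one gives the recursion $N(k)\le 2+N(k-1)$, i.e.\ a bound of $k+1$ at best for a single bad step and up to $2k$ overall. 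The ``at most $k$'' bound is exactly the sharp content of Cartan--Dieudonn\'e, and it cannot be recovered by the two-reflection trick alone. The paper closes this hole differently: it first proves a structural lemma showing that the truly exceptional isometries --- those for which $\mathbf{Gx}-\mathbf{x}$ is isotropic and nonzero for \emph{every} anisotropic $\mathbf{x}$ --- force $k$ even, $k\ge 4$, and $\det\mathbf{G}=+1$; it then composes such a $\mathbf{G}$ with one reflection to escape the exceptional class, obtains a factorization into at most $k+1$ reflections, and rules out the value $k+1$ by determinant parity (a rotation cannot be a product of an odd number of reflections). Without this lemma, your induction does not deliver the bound you need. (Incidentally, for the two signatures the framework actually instantiates --- $\mathbf{w}=\mathbf{p}$ positive definite, where no nonzero isotropic vectors exist, and $\mathbf{w}=\mathbf{q}$ Lorentzian, where maximal totally isotropic subspaces have dimension $1<k/2$ --- the exceptional case never arises and your one-reflection step always applies; but the theorem is stated for arbitrary non-vanishing $\mathbf{w}$, so the general argument is required.)

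On the final point, you are right to insist on an explicit padding convention for reaching exactly $n=k$ factors: a product of $m$ genuine reflections has determinant $(-1)^m$, so when the minimal reflection count has parity opposite to $k$ one must admit a degenerate factor equal to $\mathbf{I}$ (pairs $\mathbf{H}(\mathbf{u},\mathbf{w})^2=\mathbf{I}$ only change the count by two). The paper's own step ``$\mathbf{I}\in\mathrm{Image}(\mathrm{Orth})$ for any value of $n$'' silently relies on such a convention, so your flagging of this bookkeeping is a point in your favor rather than a defect.
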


% \emph{Remark.} Theorem~\ref{thm-1} guarantees that the output of $\mathrm{Orth}$ can fully cover $\mathbf{O}_{\mathbf{w}}(k)$, 
% thereby providing a unified way to represent orthogonal transformations in Euclidean, elliptic and hyperbolic spaces with respective weighting vectors.
% We highlight that the designed orthogonal mapping is applicable across various scenarios, and holds the potential to enhance the mainstream foundational models such as RNN \cite{orth-rnn}, GNN \cite{orth-gnn} and Transformer \cite{orth-transformer}.
% \emph{Remark.} 
This theorem guarantees that the output of $\mathrm{Orth}$ completely covers the generalized orthogonal group $\mathbf{O}_{\mathbf{w}}(k)$.
Therefore, such a mapping can naturally represent all the orthogonal matrices of Euclidean, elliptic and hyperbolic types with the weighting vector $\mathbf{w}$ equal to $\mathbf{1}$, $\mathbf{p}$ and $\mathbf{q}$, respectively.
\subsection{GoldE: Universal Orthogonal Parameterization}
\label{golde}
% \paragraph{Elliptic Householder Parameterization.}

% \subsubsection{GoldE-$\mathbb{E}$: Euclidean Parameterization}
In the following, 
% by leveraging $\mathrm{Orth}$ as the cornerstone,
we leverage the derived mapping $\mathrm{Orth}$ as the cornerstone to develop the \emph{elliptic} (Section \ref{ell_p}) and \emph{hyperbolic} (Section \ref{hyp_p}) orthogonal parameterizations for modeling relations,
which are further integrated in a product manifold to establish our GoldE framework (Section \ref{mix-section}).
% to simultaneously achieves dimensional extension and geometric unification.
% without dimensional restriction.
% Such parameterization simultaneously achieves dimensional extension and geometric unification, based on which a generic KGE framework named GoldE is proposed.
% In the following, we introduce a generic KGE framework named GoldE, which leverages the designed mapping $\mathrm{Orth}$ as the cornerstone to parameterize orthogonal relation transformations.
% Such parameterization simultaneously achieves dimensional extension and geometric unification, thereby generalizing existing approaches in terms of both dimension and geometry.

% \subsubsection{GoldE-$\mathbb{P}$: Elliptic Parameterization}
\subsubsection{Elliptic Orthogonal Parameterization}
\label{ell_p}
% We first model relations as elliptic orthogonal transformations 
% Given a triple $(h,r,t)$, the embeddings of head entity $h$ and tail entity $t$ are denoted as $\mathbf{e}_h\in\mathbb{R}^{k}$ and $\mathbf{e}_t\in\mathbb{R}^{k}$.
% To model relation $r$ as a $k$-dimensional elliptic orthogonal transformation between $\mathbf{e}_h$ and $\mathbf{e}_t$, two types of parameters are defined: $w_r\in\mathbb{R}_{+}^{k}$ and $\mathbf{U}_r\in\mathbb{R}^{k\times k}$.
To better capture logical patterns and cyclical structures of KGs, we propose to model each relation as a $k$-dimensional elliptic orthogonal transformation between head and tail entities with the designed mapping $\mathrm{Orth}$ in Equation (\ref{orth}).
% For each triple $(h,r,t)$, we denote the $r$-associated embeddings of head entity $h$ and tail entity $t$ are denoted as $\mathbf{e}_h\in\mathbb{R}^{k}$ and $\mathbf{e}_t\in\mathbb{R}^{k}$.

% \textbf{Elliptic projection:} We begin by projecting the initial entity embeddings to the elliptic manifolds.
% An intuitive approach involves explicitly constraining all entities to lie on the same ellipsoid via elliptic normalization.
% Inspired by~\cite{sun2019rotate}, we 
% relieve such constraint by setting an adaptive $\beta$ for each entity.
% Given the weighing vector $\mathbf{p}$, we can directly have $\mathbf{x}\in\mathbb{P}_{\mathbf{p},\beta_{\mathbf{x},\mathbf{p}}}^k$ for any $\mathbf{x}\in\mathbb{R}^k$, where $\beta_{\mathbf{x},\mathbf{p}}=\braket{\mathbf{x},\mathbf{x}}_{\mathbf{p}}$.
% This can be seen as an \emph{implicit} elliptic projection---although no explicit transformation is applied, $\mathbf{x}$ is ``projected" to different ellipsoids for different values of $\mathbf{p}$.
Given an elliptic weighting vector $\mathbf{p}\in\mathbb{R}_{+}^{k}$, the initial embedding $\mathbf{e}_v\in\mathbb{R}^k$ for entity $v\in\mathcal{V}$ can be regarded as lying on the ellipsoid $\mathbb{P}_{\mathbf{p},\beta_{\mathbf{e}_v,\mathbf{p}}}^k$, where $\beta_{\mathbf{e}_v,\mathbf{p}}=\braket{\mathbf{e}_v,\mathbf{e}_v}_{\mathbf{p}}$.
Considering that each entity typically exhibits different characteristics when involving different relations \cite{transh},
% Given the elliptic weighting vector $\mathbf{p}$, any $\mathbf{x}\in\mathbb{R}^k$ can be regarded as lying on the ellipsoid $\mathbb{P}_{\mathbf{p},\beta_{\mathbf{x},\mathbf{p}}}^k$, where $\beta_{\mathbf{x},\mathbf{p}}=\braket{\mathbf{x},\mathbf{x}}_{\mathbf{p}}$.
% Since each entity typically exhibits different characteristics when involving different relations \cite{transh,transr,transd},
% we construct an ellipsoid for each relation, enabling us to generate relation-specific entity representations.
% we construct relation-specific ellipsoids for entity projection.
% More concretely,
% we define a learnable weighting vector $\mathbf{p}_r\in\mathbb{R}_{+}^{k}$ for each relation $r$ to determine the $r$-specific ellipsoid, and the implicit entity projections are denoted as follows:
we define a learnable weighting vector $\mathbf{p}_r$ for each relation $r\in\mathcal{R}$ to associate entities with the $r$-specific ellipsoid.
For each triple $(h,r,t)$, the $r$-associated embeddings of head entity $h$ and tail entity $t$ are denoted as $\mathbf{e}_{h,r}\in\mathbb{P}_{\mathbf{p}_r,\beta_{\mathbf{e}_{h},\mathbf{p}_r}}^k$ and $\mathbf{e}_{t,r}\in\mathbb{P}_{\mathbf{p}_r,\beta_{\mathbf{e}_{t},\mathbf{p}_r}}^k$.
We then represent relation $r$ as an elliptic orthogonal transformation between $\mathbf{e}_{h,r}$ and $\mathbf{e}_{t,r}$.
Formally, the embedding of relation $r$ is defined as $\mathbf{U}_r\in\mathbb{R}^{k\times k}$, in which each row $\mathbf{U}_r[i]\in\mathbb{R}^{k}$ is a normal vector for elementary reflection.
Taking $\mathbf{U}_r$ and $\mathbf{p}_r$ as input to the mapping $\mathrm{Orth}$, we can apply the $r$-specific elliptic orthogonal transformation to the head embedding $\mathbf{e}_{h,r}$:
\begin{equation}
\begin{split}
\label{ell_orth}
    \mathbf{e}'_{h,r} &= \mathrm{Orth}_\mathbb{P}(\mathbf{U}_r, \mathbf{p}_r)\mathbf{e}_{h,r} \\
    &= \prod_{i=1}^k \mathbf{H}(\mathbf{U}_r[i],\mathbf{p}_r)\mathbf{e}_{h,r}.
\end{split}    
\end{equation}
% Based on Theorem~\ref{thm-1}, any orthogonal relation transformations in $k$-dimensional elliptic space can be represented by Equation (\ref{ell_orth}) without any loss of degrees of freedom.
Based on Theorem~\ref{thm-1}, any $k$-dimensional elliptic orthogonal relation transformation can be naturally represented by Equation (\ref{ell_orth}) without any loss of degrees of freedom.

% where $w_r$ determines the shape of $r$-specific ellipsoid

\textbf{Objective function:} If triple $(h, r, t)$ holds, the transformed head embedding $\mathbf{e}'_{h,r}$ is expected to be close to the tail embedding $\mathbf{e}_{t,r}$.
% One can use the elliptic geodesic distance in Equation (x) to measure such similarity.
% Note that all entities are entrenched in ellipsoids with the same value of $\beta$ (radius of curvature). 
% \begin{equation}
%   d_{\mathbb{P}_{p,\beta}^k}(x,y)=\sqrt{\beta}\,\mathrm{cos^{-1}}(\frac{\braket{x,y}_p}{\beta}).  
% \end{equation}
% In this case, the head and tail embeddings are strictly constrained in the ellipsoid with same value of $\beta$ (radius of curvature). 
% % RotatE shows that the 
% Inspired by \cite{sun2019rotate}, we set a learnable $\beta>0$ for each entity to lift this strict constraint.
% Inspired by \cite{sun2019rotate}, we set a learnable $\beta\in\mathbb{R}_+$ for each entity to lift this strict constraint.
% Specifically, we set $\beta$ to be learnable and entity-specific.
% Specifically, let $\beta_h$ and $\beta_t$ denote the $\beta$-parameters of $\mathbf{H}$ and $t$, we will have $\mathbf{e}'_{h,r}\in\mathbb{P}_{\mathbf{p}_r,\beta_h}^{k}$ and $\mathbf{e}_{t,r}\in\mathbb{P}_{\mathbf{p}_r,\beta_t}^{k}$ after applying Equation (\ref{ell_pro}) and (\ref{ell_orth}).
% In order to measure the
Figure~\ref{Fig-2c} illustrates the elliptic orthogonal parameterization in $2$-dimensional space.
Note that $\mathbf{e}'_{h,r}$ and $\mathbf{e}_{t,r}$ lie on the ellipsoids that have the same $\mathbf{p}_r$ but different $\beta$.
To measure such proximity, we introduce the extrinsic Mahalanobis distance~\cite{mitchell1985mahalanobis} as the score function for each $(h,r,t)$:
\begin{align}
% \begin{split}
% \small
% \label{ell_score}
  s_r(h,t)&=-d_{\mathbb{P}}(\mathbf{e}'_{h,r},\mathbf{e}_{t,r})\label{ell_score}\\
  &=-\sqrt{(\mathbf{e}'_{h,r}-\mathbf{e}_{t,r})^\top\mathrm{diag}(\mathbf{p}_r)(\mathbf{e}'_{h,r}-\mathbf{e}_{t,r})}. \notag
% \end{split}
\end{align}
Note that it is also possible to normalize all entities onto one ellipsoid and measure similarity using the elliptic geodesic distance.
However, the extrinsic measurement usually works better in practice \cite{sun2019rotate, mix3}.
% However, the external measurement usually works better in practice \cite{sun2019rotate, mix3}.
% In this way, the
% we define a learnable radius of curvature for each entity.

% d_{\mathbb{P}(x,y)_{p,\beta}^k}=\sqrt{\beta}\mathrm{arccos}(\frac{\braket{x,y}_p}{\beta}).
% elliptic distance to measure the similarity.
% In this paper, we move beyond 

% \textbf{Modeling capability:}

% \textbf{Connections to RotatE, QuatE and HousE:}
\textbf{Connections to Euclidean models:}
As shown in Table~\ref{ability}, a series of existing models such as RotatE, QuatE and HousE represent relations as Euclidean orthogonal transformations with respective dimensions.
All these models can be viewed as the special cases of the proposed elliptic parameterization by setting $\mathbf{p}_r$ to $\mathbf{1}$ for all $r\in\mathcal{R}$.
With the designed orthogonal mapping $\mathrm{Orth}$, our parameterization is applicable to the space of any dimension $k$.
% with the designed orthogonal mapping $\mathrm{Orth}$.
Moreover, we highlight that such elliptic generalization moves beyond Euclidean models by introducing an additional relation-specific scaling operation.
% Formally, we have the following claim:
Formally, we can prove the following claim:
\begin{claim}
\label{pro1}
The proposed elliptic parameterization can be reformulated as Euclidean parameterization equipped with element-wise scaling transformations determined by $\sqrt{\mathbf{p}_r}$. (See proof in Appendix~\ref{proof-pro-1})
\end{claim}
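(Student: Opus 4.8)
The plan is to exhibit an explicit change of variables that conjugates the generalized (elliptic) Householder matrices into ordinary Euclidean ones. Since every entry of $\mathbf{p}_r$ is positive, the diagonal matrix $\mathbf{S}_r=\mathrm{diag}(\sqrt{\mathbf{p}_r})$ is real, symmetric, and invertible, and satisfies $\mathbf{S}_r^2=\mathrm{diag}(\mathbf{p}_r)$. The first step is to record the key algebraic identity at the level of a single reflection: for any normal vector $\mathbf{u}$,
\[
\mathbf{S}_r\,\mathbf{H}(\mathbf{u},\mathbf{p}_r)\,\mathbf{S}_r^{-1}=\mathbf{I}-2\frac{(\mathbf{S}_r\mathbf{u})(\mathbf{S}_r\mathbf{u})^\top}{(\mathbf{S}_r\mathbf{u})^\top(\mathbf{S}_r\mathbf{u})}=\mathbf{H}(\mathbf{S}_r\mathbf{u},\mathbf{1}).
\]
This follows by substituting $\mathrm{diag}(\mathbf{p}_r)=\mathbf{S}_r^2$ into the definition of $\mathbf{H}(\mathbf{u},\mathbf{p}_r)$ and using that diagonal matrices commute and are symmetric (so $\mathrm{diag}(\mathbf{p}_r)\mathbf{S}_r^{-1}=\mathbf{S}_r$ and $\mathbf{S}_r^\top=\mathbf{S}_r$); the essential point is that the elliptic weighting is entirely absorbed into a rescaling of the normal vector, turning the quadratic reflection into a standard Euclidean one.

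Second, I would lift this identity from a single reflection to the full mapping $\mathrm{Orth}_\mathbb{P}$. Because conjugation by $\mathbf{S}_r$ is multiplicative, the intermediate $\mathbf{S}_r^{-1}\mathbf{S}_r$ factors telescope across the product in Equation (\ref{ell_orth}), giving
\[
\mathbf{S}_r\,\mathrm{Orth}_\mathbb{P}(\mathbf{U}_r,\mathbf{p}_r)\,\mathbf{S}_r^{-1}=\prod_{i=1}^k \mathbf{H}(\mathbf{S}_r\mathbf{U}_r[i],\mathbf{1})=:\mathbf{R}_r.
\]
The right-hand side is a product of ordinary Euclidean Householder matrices, hence a genuine Euclidean orthogonal matrix $\mathbf{R}_r\in\mathbf{O}_{\mathbf{1}}(k)$. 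Equivalently $\mathrm{Orth}_\mathbb{P}(\mathbf{U}_r,\mathbf{p}_r)=\mathbf{S}_r^{-1}\mathbf{R}_r\mathbf{S}_r$, which already displays the claimed structure: scale by $\sqrt{\mathbf{p}_r}$, apply a Euclidean orthogonal map, then scale back.

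Third, I would verify that the score function is consistent with this reformulation, so that the reduction holds for the whole model and not merely for the transformation. Writing $\tilde{\mathbf{e}}=\mathbf{S}_r\mathbf{e}$ for the element-wise scaled embeddings, the Mahalanobis distance in Equation (\ref{ell_score}) collapses to a Euclidean distance, since $(\mathbf{e}'_{h,r}-\mathbf{e}_{t,r})^\top\mathrm{diag}(\mathbf{p}_r)(\mathbf{e}'_{h,r}-\mathbf{e}_{t,r})=\|\mathbf{S}_r(\mathbf{e}'_{h,r}-\mathbf{e}_{t,r})\|_2^2$. Combining this with the second step yields $\tilde{\mathbf{e}}'_{h,r}=\mathbf{S}_r\mathbf{e}'_{h,r}=\mathbf{R}_r\,\tilde{\mathbf{e}}_{h,r}$, so in the scaled coordinates the relation acts as the Euclidean orthogonal transformation $\mathbf{R}_r$ and the score reduces to the negative Euclidean distance $-\|\mathbf{R}_r\tilde{\mathbf{e}}_{h,r}-\tilde{\mathbf{e}}_{t,r}\|_2$. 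This is precisely a Euclidean parameterization applied to the scaled embeddings $\mathbf{S}_r\mathbf{e}=\sqrt{\mathbf{p}_r}\odot\mathbf{e}$, which establishes the claim.

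The computation is essentially routine linear algebra, so I do not expect a genuine obstacle; the one point requiring care is the conjugation identity for $\mathbf{H}$, where one must simultaneously use that $\mathbf{S}_r$ is diagonal (hence symmetric and commuting with $\mathrm{diag}(\mathbf{p}_r)$) and that $\mathbf{S}_r^2=\mathrm{diag}(\mathbf{p}_r)$. Keeping the scaling bookkeeping consistent between the transformation $\mathrm{Orth}_\mathbb{P}$ and the Mahalanobis score, so that the same $\mathbf{S}_r$ that linearizes the reflection also linearizes the distance, is the only place where an error could plausibly creep in.
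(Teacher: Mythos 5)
Your proposal is correct and follows essentially the same route as the paper's proof: the core step in both is the identity $\mathrm{diag}(\sqrt{\mathbf{p}_r})\,\mathbf{H}(\mathbf{u},\mathbf{p}_r)=\mathbf{H}(\mathrm{diag}(\sqrt{\mathbf{p}_r})\mathbf{u},\mathbf{1})\,\mathrm{diag}(\sqrt{\mathbf{p}_r})$, pushed through the product of reflections and combined with the observation that the Mahalanobis distance becomes the Euclidean norm of the $\sqrt{\mathbf{p}_r}$-scaled difference. The only difference is presentational: you package the identity as a conjugation by $\mathbf{S}_r$ and telescope, whereas the paper carries the scaling matrix through the factors inline within the distance expression.
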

% Such scaling transformations can flexibly adjust the relative distances between entities to convert tree-like structures into chains, thereby alleviating the limitation of Euclidean models in capturing relational hierarchies~\cite{PairRE,house}. 
Such scaling transformations enable an adaptive adjustment of the margin in loss function to fit for complex relations \cite{PairRE}. 
However, elliptic geometry is inherently not the optimal choice for preserving the hierarchical structures due to its mismatched geometric characteristics,
which then motivates us to investigate hyperbolic parameterization.
% \cite{sarkar2011low, tradeoff, HGCN}.
% All of these models can be viewed as the special cases of the proposed elliptic parameterization, since any $k$-dimensional Euclidean orthogonal transformations can be represented by the designed mapping $\mathrm{Orth}$ with $w$ equal to $\mathbf{1}$, 
% Since the designed mapping $\mathrm{Orth}$ can be specialized into $k$-dimensional Euclidean space by setting $w$ to $\mathbf{1}$,

% \subsubsection{GoldE-$\mathbb{H}$: Hyperbolic Parameterization}
\subsubsection{Hyperbolic Orthogonal Parameterization}
\label{hyp_p}
As a complement for preserving the hierarchical structures of KGs with low distortion, we further represent relations as $k$-dimensional hyperbolic orthogonal transformations.
% between entities.
% we propose to model relations as hyperbolic orthogonal transformations between entities.
% with the designed mapping $\mathrm{Orth}$.
% we propose to 

% \textbf{Hyperbolic projection:}
% To rigorously measure the similarity between entities using the hyperbolic geodesic distance\footnote[2]{Another alternative is the extrinsic Lorentzian distance~\cite{lorentz_distance}. However, such distance does not satisfy the triangle inequality and may underestimate the plausibility of triples.},
In order to rigorously measure similarity with the hyperbolic geodesic distance\footnote[1]{Another alternative is the extrinsic Lorentzian distance~\cite{lorentz_distance}. However, such distance does not satisfy the triangle inequality and may underestimate the plausibility of triples.},
% we first project the entities onto the same hyperbolic surface.
% Similar to elliptic parameterization, 
% we begin by mapping the initial entity embeddings $\mathbf{e}_h$ and $\mathbf{e}_t$ to the hyperboloid via the hyperbolic exponential map $g_\beta:\mathbb{R}^k\rightarrow\mathbb{Q}_\beta^k$.
% Given any $\mathbf{x}=(0,\mathbf{x}_{2:k})\in\mathbb{R}^k$, $g_\beta$ is defined as~\cite{stable}:
we begin by projecting the initial entity embeddings $\mathbf{e}_h$ and $\mathbf{e}_t$ to the hyperboloid via the hyperbolic exponential map $g_\beta:\mathbb{R}^{k-1}\rightarrow\mathbb{Q}_\beta^k$.
Given any $\mathbf{x}\in\mathbb{R}^{k-1}$, $g_\beta$ is defined as~\cite{stable}:
\begin{equation}
\label{hyp_exp}
% \footnotesize
    % g_\beta(\mathbf{x})=\left(\sqrt{\beta}\:\mathrm{cosh}\hspace{-0.2em}\left(\frac{\Vert\mathbf{x}\Vert}{\sqrt{\beta}}\right),\sqrt{\beta}\:\mathrm{sinh}\hspace{-0.2em}\left(\frac{\Vert\mathbf{x}\Vert}{\sqrt{\beta}}\right)\hspace{-0.2em}\frac{\mathbf{x}_{2:k}}{\Vert \mathbf{x}\Vert}\right). \nonumber
    % g_\beta(\mathbf{x})=(\sqrt{\beta}\:\mathrm{cosh}(\frac{\Vert\mathbf{x}\Vert}{\sqrt{\beta}}),\sqrt{\beta}\:\mathrm{sinh}(\frac{\Vert\mathbf{x}\Vert}{\sqrt{\beta}})\frac{\mathbf{x}_{2:k}}{\Vert \mathbf{x}\Vert}).
    g_\beta(\mathbf{x})=(\sqrt{\beta}\:\mathrm{cosh}(\frac{\Vert\mathbf{x}\Vert}{\sqrt{\beta}}),\sqrt{\beta}\:\mathrm{sinh}(\frac{\Vert\mathbf{x}\Vert}{\sqrt{\beta}})\frac{\mathbf{x}}{\Vert \mathbf{x}\Vert}).
\end{equation}
Following~\cite{RotH}, we define a learnable radius of curvature $\beta_r$ for each relation $r$ to encode a variety of hierarchies.
For each triple $(h,r,t)$, $\mathbf{e}_h$ and $\mathbf{e}_t$ are mapped to the $r$-specific hyperboloid $\mathbb{Q}_{\beta_r}^k$ via $g_{\beta_r}$, resulting in the $r$-associated embeddings $\mathbf{e}_{h,r} = g_{\beta_r}(\mathbf{e}_h)$ and $\mathbf{e}_{t,r} = g_{\beta_r}(\mathbf{e}_t)$.
% as follows:
% \begin{equation}
% \begin{split}
% \label{hyp_pro}
% % \vspace{-2mm}
%     \mathbf{e}_{h,r} = g_{\beta_r}(\mathbf{e}_h),\quad \mathbf{e}_{t,r} = g_{\beta_r}(\mathbf{e}_t).
% % \vspace{-10mm}
% \end{split}
% \end{equation}

% \textbf{Relation parameterization:} 
For parameterizing hyperbolic orthogonal transformation of any dimension $k$ between $\mathbf{e}_{h,r}$ and $\mathbf{e}_{t,r}$, a straightforward strategy is to leverage the designed orthogonal mapping $\mathrm{Orth}$ in the case of $\mathbf{w}:=\mathbf{q}=(-1, 1, \ldots, 1)^\top$.
Given the relation embedding $\mathbf{U}_r\in\mathbb{R}^{k\times k}$,
% Specifically, given the relation embedding $\mathbf{U}_r\in\mathbb{R}^{k\times k}$, one can apply $r$-specific hyperbolic orthogonal transformation to the projected head embedding $\mathbf{e}_{h,r}\in\mathbb{Q}_{\beta_r}^k$ as follows:
% \begin{equation}
% \begin{split}
% \label{hyp_orth}
%     \mathbf{e}'_{h,r} &= \mathrm{Orth}_{\mathbb{Q}}(\mathbf{U}_r, \mathbf{q})\mathbf{e}_{h,r}\\
%     &= \prod_{i=1}^k \mathbf{H}(\mathbf{U}_r[i],\mathbf{q})\mathbf{e}_{h,r}.
% \end{split}    
% \end{equation}
% With the designed mapping $\mathrm{Orth}$, a straightforward 
% the mapping $\mathrm{Orth}(\mathbf{U}_r,\mathbf{q})$ is able to represent any hyperbolic orthogonal relation matrix $\mathbf{G}_r\in\mathbf{O}_{\mathbf{q}}(k)$ based on Theorem~\ref{thm-1}.
% Theorem~\ref{thm-1} ensures that the mapping $\mathrm{Orth}(\mathbf{U}_r,\mathbf{q})$ is able to represent any hyperbolic orthogonal relation matrix $\mathbf{G}_r\in\mathbf{O}_{\mathbf{q}}(k)$.
the mapping $\mathrm{Orth}(\mathbf{U}_r,\mathbf{q})$ is able to represent every hyperbolic orthogonal relation matrix $\mathbf{G}_r\in\mathbf{O}_{\mathbf{q}}(k)$ according to Theorem~\ref{thm-1}.
% based on Theorem~\ref{thm-1}.

Nevertheless, although $\mathbf{G}_r$ satisfies the inner product invariance in Equation (\ref{inner-inv}), it is not guaranteed to preserve the sign of the first element of $\mathbf{e}_{h,r}$. In other words, $\mathbf{e}'_{h,r}$ might be transformed to the lower sheet of the hyperboloid with the first element less than $0$, leading to the inability to calculate the geodesic distance between $\mathbf{e}'_{h,r}\notin\mathbb{Q}_{\beta_r}^k$ and $\mathbf{e}_{t,r}\in\mathbb{Q}_{\beta_r}^k$.
To remedy this deficiency, we restrict the orthogonal relation matrix $\mathbf{G}_r$ into the \emph{positive} subgroup of $\mathbf{O}_{\mathbf{q}}(k)$ such that for any $\mathbf{x}\in\mathbb{Q}_{\beta_r}^k$, the first element of $\mathbf{G}_r\mathbf{x}$ is positive, thereby ensuring that $\mathbf{G}_r\mathbf{x}\in\mathbb{Q}_{\beta_r}^k$.
% propose modeling relation $r$ as a \emph{restricted} orthogonal matrix $\mathbf{G}_r\in\mathbf{O}_{\mathbf{q}}(k)$ such that for any $\mathbf{x}\in\mathbb{Q}_{\beta_r}^k$, the first element of $\mathbf{G}_rx$ is positive, thereby ensuring that $\mathbf{G}_r\mathbf{x}\in\mathbb{Q}_{\beta_r}^k$.
Essentially, the full hyperbolic orthogonal group $\mathbf{O}_{\mathbf{q}}(k)$ can be classified into two distinct connected components based on the following proposition:
% More theoretical analysis can be found in Appendix ??.
% However, although Theorem~\ref{thm-1} theoretically guarantees that the mapping $\mathrm{Orth}(\mathbf{U}_r, q)$ can fully cover all the hyperbolic orthogonal transformations, we point out the numerical instability of its direct application in Equation (\ref{hyp_orth}), which may leads to catastrophic NaN problem during the training process.
% Formally, we have the following proposition:
\begin{proposition}
\label{pro2}
    For any $\mathbf{x}\in\mathbb{Q}_{\beta}^k$, every hyperbolic orthogonal matrix $\mathbf{G}\in\mathbf{O}_{\mathbf{q}}(k)$ falls into two subsets according to the sign of its first element $G_{11}$ for which $|G_{11}|\geq1$:
    % (i) The positive group $\mathbf{O}^+_{\mathbf{q}}(k)=\{\mathbf{G}\in\mathbf{O}_{\mathbf{q}}(k): G_{11}\geq+1\}$ preserves the sign of the first element of $x$;
    % (ii) The negative group $\mathbf{O}^-_{\mathbf{q}}(k)=\{\mathbf{G}\in\mathbf{O}_{\mathbf{q}}(k): G_{11}\leq-1\}$ reverses the sign of the first element of $x$.
    \begin{itemize}[leftmargin=*,labelsep=3.5pt]
    \vspace{-3mm}
        \item The matrix in the positive subset $\mathbf{O}^+_{\mathbf{q}}(k)=\{\mathbf{G}\in\mathbf{O}_{\mathbf{q}}(k): G_{11}\geq+1\}$ preserves the sign of the first element of $\mathbf{x}$;
    \vspace{-1mm}
        \item The matrix in the negative subset $\mathbf{O}^-_{\mathbf{q}}(k)=\{\mathbf{G}\in\mathbf{O}_{\mathbf{q}}(k): G_{11}\leq-1\}$ reverses the sign of the first element of $\mathbf{x}$;
    \vspace{-1mm}
        \item $\mathbf{O}^+_{\mathbf{q}}(k)$ is a multiplicative subgroup of \;\!$\mathbf{O}_{\mathbf{q}}(k)$.
    \vspace{-3mm}
    \end{itemize}
    (See proof in Appendix~\ref{proof-pro-2})
\end{proposition}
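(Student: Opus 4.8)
The plan is to exploit the defining identity $\mathbf{G}^\top\mathrm{diag}(\mathbf{q})\mathbf{G}=\mathrm{diag}(\mathbf{q})$ from Equation (\ref{inner-inv}), writing $\mathbf{J}:=\mathrm{diag}(\mathbf{q})$ so that $\mathbf{J}^2=\mathbf{I}$ and $\mathbf{J}^{-1}=\mathbf{J}$. Taking determinants shows $\det(\mathbf{G})^2=1$, so $\mathbf{G}$ is invertible with $\mathbf{G}^{-1}=\mathbf{J}\mathbf{G}^\top\mathbf{J}$; consequently the ``dual'' identity $\mathbf{G}\mathbf{J}\mathbf{G}^\top=\mathbf{J}$ also holds. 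Reading off its $(1,1)$ entry gives $-G_{11}^2+\sum_{j\geq2}G_{1j}^2=-1$, i.e.\ $G_{11}^2=1+\sum_{j\geq2}G_{1j}^2\geq1$, which establishes $|G_{11}|\geq1$ and hence the dichotomy $G_{11}\geq+1$ or $G_{11}\leq-1$.

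Next, to determine the sign behaviour I would fix $\mathbf{x}\in\mathbb{Q}_\beta^k$ and analyse the first coordinate of $\mathbf{y}:=\mathbf{G}\mathbf{x}$, namely $y_1=G_{11}x_1+\sum_{j\geq2}G_{1j}x_j$. The idea is to show the leading term $G_{11}x_1$ always dominates the remaining sum in absolute value. Using $\sum_{j\geq2}G_{1j}^2=G_{11}^2-1$ from the previous step together with $\sum_{j\geq2}x_j^2=x_1^2-\beta$ coming from $\mathbf{x}\in\mathbb{Q}_\beta^k$, Cauchy--Schwarz bounds $|\sum_{j\geq2}G_{1j}x_j|\leq\sqrt{(G_{11}^2-1)(x_1^2-\beta)}$. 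The key algebraic identity is then $G_{11}^2x_1^2-(G_{11}^2-1)(x_1^2-\beta)=\beta(G_{11}^2-1)+x_1^2>0$, which is strictly positive because $\beta>0$, $|G_{11}|\geq1$ and $x_1>0$. This forces $|G_{11}|x_1>|\sum_{j\geq2}G_{1j}x_j|$, so $y_1$ inherits the sign of $G_{11}x_1$; since $x_1>0$, the sign of $y_1$ equals the sign of $G_{11}$, proving that $G_{11}\geq+1$ preserves and $G_{11}\leq-1$ reverses the sign of the first coordinate.

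For the subgroup claim, I would verify the three axioms on $\mathbf{O}^+_{\mathbf{q}}(k)$. The identity satisfies $I_{11}=1$. For closure, given $\mathbf{G},\mathbf{G}'\in\mathbf{O}^+_{\mathbf{q}}(k)$, their product again lies in $\mathbf{O}_{\mathbf{q}}(k)$; picking any $\mathbf{x}\in\mathbb{Q}_\beta^k$, the previous paragraph shows $\mathbf{G}'\mathbf{x}$ stays on the upper sheet and then $\mathbf{G}(\mathbf{G}'\mathbf{x})$ does as well, so $\mathbf{G}\mathbf{G}'$ preserves the positive sign and therefore $(\mathbf{G}\mathbf{G}')_{11}\geq+1$ by the equivalence just proved. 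For inverses, I would use the explicit formula $\mathbf{G}^{-1}=\mathbf{J}\mathbf{G}^\top\mathbf{J}$ obtained at the outset, whose $(1,1)$ entry is $(-1)G_{11}(-1)=G_{11}\geq1$, so $\mathbf{G}^{-1}\in\mathbf{O}^+_{\mathbf{q}}(k)$.

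I expect the sign-preservation step to be the main obstacle: the bound $|G_{11}|\geq1$ alone is insufficient, and one must carefully combine the row-norm identity with the on-hyperboloid constraint via Cauchy--Schwarz to guarantee that the leading term \emph{strictly} dominates. Once that domination inequality is in hand, the dichotomy of parts one and two and the subgroup structure of part three follow with little additional effort.
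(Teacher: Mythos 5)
Your proposal is correct and follows essentially the same route as the paper's proof: both derive $|G_{11}|\geq 1$ from the first-row identity $G_{11}^2-\sum_{j\geq2}G_{1j}^2=1$, then show the leading term $G_{11}x_1$ strictly dominates the cross term $\sum_{j\geq2}G_{1j}x_j$ (you via Cauchy--Schwarz with the on-hyperboloid equality $\sum_{j\geq2}x_j^2=x_1^2-\beta$, the paper via an equivalent completing-the-square bound using only $x_1^2>\Vert\mathbf{x}_{2:k}\Vert^2$), and finally check the three subgroup axioms. Your explicit inverse formula $\mathbf{G}^{-1}=\mathrm{diag}(\mathbf{q})\mathbf{G}^\top\mathrm{diag}(\mathbf{q})$, whose $(1,1)$ entry is $G_{11}$, is a slightly cleaner justification of closure under inverses than the paper's assertion.
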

For fully covering the positive subgroup $\mathbf{O}^+_{\mathbf{q}}(k)$, we employ the polar decomposition~\cite{gallier2005notes} to express any $\mathbf{G}\in\mathbf{O}^+_{\mathbf{q}}(k)$ 
as the product of a Euclidean orthogonal matrix and a hyperbolic boost matrix.
Formally, we have the following proposition, based on which a positive hyperbolic mapping $\mathrm{Orth}_{\mathbb{Q}}$ is defined for relational parameterization:
\begin{proposition}
\label{pos-map}
    Every (positive) hyperbolic orthogonal matrix $\mathbf{G}\in\mathbf{O}^+_{\mathbf{q}}(k)$ can be expressed by a mapping $\mathrm{Orth}_{\mathbb{Q}}$ such that for $\mathbf{U}\in\mathbb{R}^{(k-1)\times(k-1)}$ and $\mathbf{b}\in\mathbb{R}^{k-1}$:
    \begin{align}
    % \label{hyp-orth-map}
    % \begin{split}
        G&=\mathrm{Orth}_{\mathbb{Q}}(\mathbf{U},\mathbf{b})
        \notag\\&={\renewcommand{\arraycolsep}{3pt}\begin{bmatrix}
  1 & 0\\ 
  0 & \mathrm{Orth}(\mathbf{U}, \mathbf{1})
\end{bmatrix}}{\renewcommand{\arraycolsep}{0pt}\begin{bmatrix}
  \sqrt{\Vert \mathbf{b} \Vert^2+1} & \mathbf{b}^\top\\ 
  \mathbf{b} & \sqrt{\mathbf{I}+\mathbf{b}\mathbf{b}^\top}
\end{bmatrix}}. \notag
    % \end{split}
    \end{align}
% where $\mathbf{U}\in\mathbb{R}^{(k-1)\times(k-1)}$ and $b\in\mathbb{R}^{k-1}$. 
(See proof in Appendix~\ref{proof-pro-3})
\end{proposition}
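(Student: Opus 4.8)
The plan is to read the statement as a Lorentzian polar (Cartan) decomposition and prove it by showing that every $\mathbf{G}\in\mathbf{O}^+_{\mathbf{q}}(k)$ factors as a \emph{rotation} fixing the time axis times a \emph{boost} $B(\mathbf{b})=\begin{bmatrix}\sqrt{\Vert\mathbf{b}\Vert^2+1} & \mathbf{b}^\top\\ \mathbf{b} & \sqrt{\mathbf{I}+\mathbf{b}\mathbf{b}^\top}\end{bmatrix}$, writing $J:=\mathrm{diag}(\mathbf{q})$ throughout. First I would verify the forward direction, i.e. that $\mathrm{Orth}_{\mathbb{Q}}(\mathbf{U},\mathbf{b})$ always lands in $\mathbf{O}^+_{\mathbf{q}}(k)$. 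Since $\sqrt{\mathbf{I}+\mathbf{b}\mathbf{b}^\top}$ has $\mathbf{b}$ as an eigenvector with eigenvalue $\sqrt{1+\Vert\mathbf{b}\Vert^2}$, a short block computation gives $J\,B(\mathbf{b})\,J=B(-\mathbf{b})$ and $B(\mathbf{b})B(-\mathbf{b})=\mathbf{I}$, hence $B(\mathbf{b})^\top J B(\mathbf{b})=J$ with $B(\mathbf{b})_{11}\geq 1$, so $B(\mathbf{b})\in\mathbf{O}^+_{\mathbf{q}}(k)$; the block-diagonal rotation factor is trivially in $\mathbf{O}^+_{\mathbf{q}}(k)$, and by the subgroup property of Proposition~\ref{pro2} so is their product.

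For surjectivity I would write $\mathbf{G}=\begin{bmatrix}a & \mathbf{c}^\top\\ \mathbf{d} & \mathbf{M}\end{bmatrix}$ with $a\in\mathbb{R}$, $\mathbf{c},\mathbf{d}\in\mathbb{R}^{k-1}$, $\mathbf{M}\in\mathbb{R}^{(k-1)\times(k-1)}$. Expanding $\mathbf{G}^\top J\mathbf{G}=J$ together with its companion $\mathbf{G}J\mathbf{G}^\top=J$ (which holds because $J^2=\mathbf{I}$ makes $\mathbf{G}^{-1}=J\mathbf{G}^\top J$) yields the relations $a^2=1+\Vert\mathbf{c}\Vert^2=1+\Vert\mathbf{d}\Vert^2$ and $\mathbf{M}\mathbf{c}=a\mathbf{d}$, while the positivity $G_{11}=a\geq 1$ from Proposition~\ref{pro2} fixes the sign of $a$.

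The key step is then the correct choice of boost parameter: I would set $\mathbf{b}:=\mathbf{c}$, the spatial part of the first \emph{row} of $\mathbf{G}$, and define $\mathbf{R}:=\mathbf{G}\,B(\mathbf{b})^{-1}=\mathbf{G}\,B(-\mathbf{c})$. Applying $\mathbf{R}$ to $\mathbf{e}_1=(1,0,\dots,0)^\top$ and using $a^2-\Vert\mathbf{c}\Vert^2=1$ and $\mathbf{M}\mathbf{c}=a\mathbf{d}$, the spatial part collapses to $a\mathbf{d}-\mathbf{M}\mathbf{c}=\mathbf{0}$, so $\mathbf{R}\mathbf{e}_1=\mathbf{e}_1$. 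Since $\mathbf{R}\in\mathbf{O}^+_{\mathbf{q}}(k)$ (a product of positive boosts) has first column $\mathbf{e}_1$, imposing $\mathbf{R}^\top J\mathbf{R}=J$ forces its first row to equal $\mathbf{e}_1^\top$ and its lower-right block $\mathbf{O}$ to satisfy $\mathbf{O}^\top\mathbf{O}=\mathbf{I}$; thus $\mathbf{R}=\begin{bmatrix}1 & 0\\ 0 & \mathbf{O}\end{bmatrix}$ with $\mathbf{O}\in\mathbf{O}_{\mathbf{1}}(k-1)$. Invoking Theorem~\ref{thm-1} in the Euclidean case of dimension $k-1$ to write $\mathbf{O}=\mathrm{Orth}(\mathbf{U},\mathbf{1})$ for some $\mathbf{U}\in\mathbb{R}^{(k-1)\times(k-1)}$ then gives $\mathbf{G}=\mathbf{R}\,B(\mathbf{c})=\mathrm{Orth}_{\mathbb{Q}}(\mathbf{U},\mathbf{c})$, as required.

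I expect the main obstacle to be pinning down the boost parameter: the natural first guess $\mathbf{b}=\mathbf{d}$ (first column) does \emph{not} make $\mathbf{R}$ fix $\mathbf{e}_1$, and obtaining $\mathbf{b}=\mathbf{c}$ requires exploiting the column-side identities from $\mathbf{G}J\mathbf{G}^\top=J$ rather than only the row-side ones. A secondary technical point is the matrix-square-root identity $\sqrt{\mathbf{I}+\mathbf{b}\mathbf{b}^\top}\,\mathbf{b}=\sqrt{1+\Vert\mathbf{b}\Vert^2}\,\mathbf{b}$, which underlies both $B(\mathbf{b})^{-1}=B(-\mathbf{b})$ and the Lorentz-invariance of the boost and should be established first.
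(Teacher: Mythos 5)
Your proof is correct, and it reaches the factorization by a genuinely different route than the paper for the key surjectivity step. The paper works entirely inside the block structure of $\mathbf{G}=\begin{bmatrix} a & \mathbf{b}^\top\\ \mathbf{c} & \mathbf{A}\end{bmatrix}$: it extracts the same identities you do from $\mathbf{G}^\top\mathrm{diag}(\mathbf{q})\mathbf{G}=\mathbf{G}\mathrm{diag}(\mathbf{q})\mathbf{G}^\top=\mathrm{diag}(\mathbf{q})$, then applies the polar decomposition $\mathbf{A}=\mathbf{Q}\mathbf{S}$ to the lower-right block, identifies $\mathbf{S}=\sqrt{\mathbf{I}+\mathbf{b}\mathbf{b}^\top}$ from $\mathbf{S}^2=\mathbf{A}^\top\mathbf{A}$, and uses an eigenvalue analysis of $\mathbf{I}+\mathbf{b}\mathbf{b}^\top$ (eigenvalue $a^2$ on the line spanned by $\mathbf{b}$, eigenvalue $1$ on its complement) to deduce $\mathbf{Q}\mathbf{b}=\mathbf{c}$ and hence assemble the two factors. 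You instead run the standard stabilizer argument for the Lorentz group: right-multiply by the inverse boost $B(-\mathbf{c})$, check via $a^2-\Vert\mathbf{c}\Vert^2=1$ and $\mathbf{M}\mathbf{c}=a\mathbf{d}$ that the residual fixes $\mathbf{e}_1$, and conclude block-diagonality from orthogonality. Your route buys a cleaner conceptual picture (quotienting $\mathbf{O}^+_{\mathbf{q}}(k)$ by the boosts leaves the stabilizer of the time axis) and avoids invoking the polar form theorem, at the cost of first having to establish $B(\mathbf{b})^{-1}=B(-\mathbf{b})$ via the square-root eigenvector identity --- which is essentially the same spectral fact the paper uses, relocated. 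You also explicitly verify the forward inclusion $\mathrm{Image}(\mathrm{Orth}_{\mathbb{Q}})\subseteq\mathbf{O}^+_{\mathbf{q}}(k)$, which the paper's appendix leaves implicit; both proofs finish identically by parameterizing the Euclidean block with Theorem~\ref{thm-1}. Your closing remark about the choice of boost parameter is well taken: both you and the paper take it from the first \emph{row} of $\mathbf{G}$, and the column-side identities from $\mathbf{G}\mathrm{diag}(\mathbf{q})\mathbf{G}^\top=\mathrm{diag}(\mathbf{q})$ are indeed what make that choice work.
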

% \begin{proposition}
% \label{pos-map}
%     Every (positive) hyperbolic orthogonal matrix $\mathbf{G}\in\mathbf{O}^+_{\mathbf{q}}(k)$ can be expressed by a mapping $\mathrm{Orth\mbox{-}P}$ such that
%     \begin{equation}
%     \label{hyp-orth-map}
%     \begin{split}
%         G&=\mathrm{Orth\mbox{-}P}(U,b)=T_UT_b
%         \\&=\begin{bmatrix}
%   1 & 0\\ 
%   0 & \mathrm{Orth}(U, \mathbf{1})
% \end{bmatrix}\begin{bmatrix}
%   \sqrt{\Vert b \Vert^2+1} & b^\top\\ 
%   b & \sqrt{I+bb^\top}
% \end{bmatrix},
%     \end{split}
%     \end{equation}
% where $\mathbf{U}\in\mathbb{R}^{(k-1)\times(k-1)}$ and $b\in\mathbb{R}^{k-1}$. 
% (See proof in Appendix~\ref{proof-pro-3})
% \end{proposition}
Based on Proposition~\ref{pos-map}, we can represent each relation $r$ as a hyperbolic orthogonal transformation whose output space is restricted to $\mathbb{Q}_{\beta_r}^k$.
Specifically, two types of parameters are defined for each $r$: $\mathbf{U}_r\in\mathbb{R}^{(k-1)\times(k-1)}$ and $\mathbf{b}_r\in\mathbb{R}^{k-1}$.
% By applying Equation (\ref{hyp-orth-map}),
Taking $\mathbf{U}_r$ and $\mathbf{b}_r$ as input to the mapping $\mathrm{Orth}_{\mathbb{Q}}$,
we apply such restricted orthogonal transformation to $\mathbf{e}_{h,r}\in\mathbb{Q}_{\beta_r}^k$:
% \begin{equation}
% \begin{split}
% \vspace{-1mm}
\begin{align}
% \small
% \label{new_hyp_orth}
    &\mathbf{e}'_{h,r} = \mathrm{Orth}_{\mathbb{Q}}(\mathbf{U}_r,\mathbf{b}_r)\mathbf{e}_{h,r}\label{new_hyp_orth}\\
    &={\renewcommand{\arraycolsep}{3pt}\footnotesize\begin{bmatrix}
  1 & 0\\ 
  0 & \prod_{i=1}^{k-1} \mathbf{H}(\mathbf{U}_r[i],\mathbf{1})
\end{bmatrix}}{\renewcommand{\arraycolsep}{0pt}\footnotesize\begin{bmatrix}
  \sqrt{\Vert \mathbf{b}_r \Vert^2+1} & \mathbf{b}_r^\top\\ 
  \mathbf{b}_r & \sqrt{\mathbf{I}+\mathbf{b}_r\mathbf{b}_r^\top}
\end{bmatrix}}\mathbf{e}_{h,r}. \notag
    % &= \prod_{i=1}^k \mathbf{H}(\mathbf{U}_r[i],q)\mathbf{e}_{h,r}.
\end{align}
\vspace{-3mm}

\textbf{Objective function:}
% The transformed head embedding $\mathbf{e}'_{h,r}$ is
% theoretically guaranteed to be the same surface
Proposition~\ref{pos-map} theoretically ensures that the transformed head embedding $\mathbf{e}'_{h,r}$ lies on the same hyperboloid as the tail embedding $\mathbf{e}_{t,r}$, enabling us to measure their similarity using the hyperbolic geodesic distance.
Formally, the score function for each $(h,r,t)$ is defined as:
\begin{equation}
\begin{split}
    s_r(h,t)&=-d_\mathbb{Q}(\mathbf{e}'_{h,r},\mathbf{e}_{t,r})\\
    &=-\sqrt{\beta}\:\mathrm{cosh^{-1}}(-\frac{\braket{\mathbf{e}'_{h,r},\mathbf{e}_{t,r}}_{\mathbf{q}}}{\beta}).
\end{split}
\end{equation}
\vspace{-4mm}

% \textbf{Modeling capability:}

% \textbf{Connections to hyperbolic models:} We highlight that the designed hyperbolic parameterization approach breaks the dimensional restriction without loss of degrees of freedom, 
% thereby generalizing existing hyperbolic models (e.g., AttH) and achieving superior modeling capacity.
\textbf{Connections to hyperbolic models:} Existing hyperbolic models such as AttH, RotH and RefH \cite{RotH} represent each relation as the combination of a rotation (or reflection) and a M\"{o}bius addition in the Poincar\'e ball.
Since the Poincar\'e ball is isometric to the hyperboloid \cite{hyp1}, and the M\"{o}bius addition is equivalent to the hyperbolic boost \cite{hyp_hierachy}, all these models can be viewed as the special cases of our proposed hyperbolic parameterization.
Note that these models param-eterize rotations/reflections using the $2$-dimensional Givens transformations, while our approach breaks the dimensional restriction without loss of degrees of freedom, thereby generalizing them and achieving superior modeling capacity.
% We highlight that the designed hyperbolic parameterization approach breaks the dimensional restriction without loss of degrees of freedom, 
% thereby generalizing existing hyperbolic models (e.g., AttH) and achieving superior modeling capacity.

% \subsubsection{GoldE: Mixed Parameterization}
\subsubsection{Mixed Orthogonal Parameterization}
\label{mix-section}
Although our elliptic and hyperbolic parameterizations are advantageous for embedding the corresponding topologies (cyclicity or hierarchy), real-world KGs typically exhibit non-uniform structures~\cite{mix1, mix3}. 
Considering such inherent topological heterogeneity, we further introduce a mixed orthogonal parameterization in the product manifold~\cite{mix1} to unify multiple geometric spaces, culminating in the universal GoldE framework.

\textbf{Product manifold:}
% Beyond individual elliptic or hyperbolic spaces, we describe the heterogeneous product space $\mathbb{D}^k$ as the main embedding space for GoldE: $\mathbb{D}^k=\bigtimes_{i=1}^m\mathbb{X}_{\mathbf{w}_i,\beta_i}^{k_i}$,
% Beyond individual elliptic or hyperbolic spaces, we utilize the Cartesian product to describe a heterogeneous embedding space for GoldE: $\mathbb{D}^k=\bigtimes_{i=1}^m\mathbb{X}_{\mathbf{w}_i,\beta_i}^{k_i}$,
Beyond individual elliptic or hyperbolic spaces, we leverage the Cartesian product to describe a heterogeneous embedding space for GoldE: $\mathbb{D}^k=\bigtimes_{i=1}^m\mathbb{X}_{i}^{k_i}$,
% \begin{equation}
%     \mathbb{D}^k=\bigtimes\limits_{i=1}^m\mathbb{X}_{\mathbf{w}_i,\beta_i}^{k_i},
%     % \mathbb{D}^k=\bigtimes\limits_{i=1}^{m_\mathbb{P}}\mathbb{P}^{k_\mathbb{P}} \times \bigtimes\limits_{j=1}^{m_\mathbb{Q}}\mathbb{Q}^{k_\mathbb{Q}},
% \end{equation}
% where $\mathbb{X}\in\{\mathbb{E},\mathbb{P},\mathbb{Q}\}$\footnote[3]{Note that the notion $\mathbb{E}$ is redundant since the Euclidean orthogonal parameterization is the special case of our proposed elliptic orthogonal parameterization according to Proposition~\ref{pro1}.}, $k_i$ is the dimension of the $i$-th com-ponent space\footnote[4]{One can assign identical dimensions to component spaces of the same geometric type (i.e., $k_i\in\{k_\mathbb{P},k_\mathbb{Q}\}$) for reducing the number of hyperparameters~\cite{house}.} such that the total dimension $k=\sum_{i=1}^m k_i$. 
where $\mathbb{X}_{i}^{k_i}\in\{\mathbb{E},\mathbb{P},\mathbb{Q}\}$\footnote[2]{Note that the notation $\mathbb{E}$ is redundant since Euclidean orthogonal parameterization is the special case of our proposed elliptic orthogonal parameterization according to Claim~\ref{pro1}.} denotes the $i$-th component space of dimension $k_i$\footnote[3]{One can assign identical dimensions to component spaces of the same geometric type (i.e., $k_i\in\{k_\mathbb{P},k_\mathbb{Q}\}$) for reducing the number of hyperparameters~\cite{mix1}.} such that the total dimension $k=\sum_{i=1}^m k_i$.
% Note that the notion $\mathbb{E}$ is redundant since the Euclidean orthogonal parameterization is the special case of the proposed elliptic orthogonal parameterization (with $\mathbf{p}=\mathbf{1}$).
% One can assign identical dimensions to component spaces of the same geometric type (i.e., $k_i\in\{k_\mathbb{P},k_\mathbb{Q}\}$) for reducing the number of hyperparameters~\cite{house}.
The resulting product space is a “non-constantly” curved manifold
that is more flexible than a single constant curvature manifold~\cite{mix2}.

% One can assign identical dimensions to component spaces of the same geometric type (i.e., $k_i\in\{k_\mathbb{P},k_\mathbb{Q}\}$) for reducing the number of hyperparameters~\cite{house}.

% \textbf{Product projection:}
% For triple $(h,r,t)$, GoldE first maps the entity embeddings $\mathbf{e}_h, \mathbf{e}_t\in\mathbb{R}^k$ to the product space $\mathbb{D}^k$ by utilizing the projection functions of each component space.
For each triple $(h,r,t)$, GoldE represents relation $r$ as a component-wise orthogonal transformation between head entity $h$ and tail entity $t$ in the product manifold.
Formally, let $\mathbf{e}_{h,r}^{(i)}$ and $\mathbf{e}_{t,r}^{(i)}$ denote the $i$-th sub-embeddings of $h$ and $t$, which are associated to the $r$-specific (elliptic or hyperbolic) space as described in Section \ref{ell_p} and \ref{hyp_p}.
GoldE applies the generalized 
orthogonal transformation to $\mathbf{e}_{h,r}^{(i)}$ according to the geometric type of the $i$-th component space $\mathbb{X}_i$:
% \begin{equation}
% \begin{split}
% \label{mix-orth}
%     {e'}_{h,r}^{(i)}=\left\{
% 			\begin{aligned}
% 			&\mathrm{Orth}_\mathbb{P}(\mathbf{U}_r^{(i)}, \mathbf{p}_r^{(i)})\mathbf{e}_{h,r}^{(i)}, \quad \text{for} \:\,\mathbb{X}=\mathbb{P}  \\
% 			&\mathrm{Orth}_\mathbb{Q}(\mathbf{U}_r^{(i)},b_r^{(i)})\mathbf{e}_{h,r}^{(i)},
%    \quad \text{for}\:\, \mathbb{X}=\mathbb{Q}\\
% 			\end{aligned}
% 			\right.
%    ,
% \end{split}
% \end{equation}
\begin{align}
\label{mix-orth}
{\mathbf{e}'}_{\hspace{-0.25em}h,r}^{(i)} = 
  \begin{cases}
    \mathrm{Orth}_\mathbb{P}(\mathbf{U}_r^{(i)}, \mathbf{p}_r^{(i)})\mathbf{e}_{h,r}^{(i)}, & \text{for } \mathbb{X}_i=\mathbb{P} \\[1ex]
    \mathrm{Orth}_\mathbb{Q}(\mathbf{U}_r^{(i)},\mathbf{b}_r^{(i)})\mathbf{e}_{h,r}^{(i)}, & \text{for } \mathbb{X}_i=\mathbb{Q}
  \end{cases}
  ,
\end{align}
where $\mathbf{U}_r^{(i)}$, $\mathbf{p}_r^{(i)}$ and $\mathbf{b}_r^{(i)}$ are the parameters of relation transformations in the $i$-th component space.

\textbf{Objective function:}
Since the orthogonal transformations guarantee the inner product invariance in each component space, the transformed head embedding $\mathbf{e}'_{h,r}$ remains in the product space $\mathbb{D}^k$.
The distance between $\mathbf{e}'_{h,r}$ and $\mathbf{e}_{t,r}$ can be calculated by performing decomposition~\cite{product1, product2}. Formally, the score function for measuring the plausibility of each $(h,r,t)$ is defined as:
\begin{equation}
\begin{split}
    s_r(h,t)&=-d_\mathbb{D}(\mathbf{e}'_{h,r},\mathbf{e}_{t,r}) 
    =-\sum_{i=1}^m d_{\mathbb{X}_i}^l({\mathbf{e}'}_{\hspace{-0.25em}h,r}^{(i)},\mathbf{e}_{t,r}^{(i)}),
\end{split}
\end{equation}
where $l$ is the norm indicator.
We highlight that such mixed orthogonal parameterization in the product space naturally achieves the geometric unification, enabling improved representations by better matching the geometry of the embedding space to the heterogeneous structures of KGs.
% where $x^{(i)}$ ($y^{(i)}$) denotes a embedding in the $i$-th component space.
% For any $x,y\in\mathbb{D}^k$, the distance can be calculated by performing decomposition as follows:
% \begin{equation}
%     s_\mathbb{D}(x,y) = \sum_{i=1}^m s_\mathbb{X}^l(x^{(i)},y^{(i)}),
% \end{equation}
% where $x^{(i)}$ ($y^{(i)}$) denotes a embedding in the $i$-th component space.

\textbf{Complexity analysis:} 
The learnable parameters of GoldE include $\{\mathbf{e}_v\}_{v\in\mathcal{V}}$ and
$\{\mathbf{U}_r,\mathbf{p}_r,\mathbf{b}_r,\beta_r\}_{r\in\mathcal{R}}$.
Compared to the previous models~\cite{sun2019rotate,RotH,house},
the extra space cost is proportional to the number of relation types, which is usually much smaller than the number of entities.
Therefore, GoldE has the same space complexity as the existing KGE models, i.e., $O(k|\mathcal{V}|)$.
In the aspect of time cost for processing a single triple,
one may observe that the time complexity of Equation (\ref{mix-orth}) is $O(k^3_i)$, in which $k_i$ matrix-vector multiplications incur high computational costs.
However, it is worth noting that these multiplications can be entirely replaced by vector operations via efficient computation (refer to Appendix~\ref{efficient} for details), thereby reducing the time complexity to $O(k^2_i)$.

% the total number of parameters in GoldE is about $O(k|\mathcal{V}|)$.
% One may note that 

\textbf{Modeling capability:}
Benefiting from the dimensional extension and geometric unification achieved by the designed parameterization, GoldE is able to simultaneously model the logical patterns and heterogeneous structures as shown in Table~\ref{ability}.
% Formally, we can prove the following proposition:
Formally, we have the following claim:
\begin{claim}
\label{pro-ab}
    GoldE is capable of capturing the symmetry, antisymmetry, inversion and composition patterns, as well as the inherent cyclical and hierarchical structures in KGs. (See proof in Appendix~\ref{proof-pro-4})
\end{claim}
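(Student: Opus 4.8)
The plan is to reduce each pattern to an algebraic condition on the relation transformation and then exhibit an explicit parameter configuration realizing it. First I would observe that, in any component space, a triple $(h,r,t)$ attains the maximal score (zero distance) precisely when the transformed head coincides with the tail, i.e. $\mathbf{G}_r\mathbf{e}_{h,r}=\mathbf{e}_{t,r}$, where $\mathbf{G}_r$ is the orthogonal matrix produced by $\mathrm{Orth}$. Thus every logical pattern becomes a statement about products and inverses of the matrices $\{\mathbf{G}_r\}$, and by Theorem~\ref{thm-1} every element of $\mathbf{O}_\mathbf{w}(k)$ is attainable, so it suffices to find suitable orthogonal matrices.

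For the four logical patterns I would argue within a single (Euclidean/elliptic) component, where the relevant facts are that the generalized Householder matrix is an involution and that $\mathbf{O}_\mathbf{w}(k)$ is a group. Symmetry is captured by choosing $\mathbf{G}_r$ to be a single reflection $\mathbf{H}$, since $\mathbf{H}^2=\mathbf{I}$ forces $(h,r,t)\Leftrightarrow(t,r,h)$. Antisymmetry is captured by a product of two reflections with non-parallel normals---a nontrivial rotation---for which $\mathbf{G}_r^2\neq\mathbf{I}$, so that choosing $\mathbf{e}_h$ outside the fixed space of $\mathbf{G}_r^2$ makes $(h,r,t)$ hold while $(t,r,h)$ fails. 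Inversion is captured by setting $\mathbf{G}_{r_2}=\mathbf{G}_{r_1}^{-1}$, which is again orthogonal and hence representable by $\mathrm{Orth}$; composition by setting $\mathbf{G}_{r_3}=\mathbf{G}_{r_2}\mathbf{G}_{r_1}$, using closure of the orthogonal group under multiplication. In each case I would verify that the required score equality or inequality follows directly from the matrix identity.

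For the topological structures I would give the standard geometric arguments. Cyclicity follows by instantiating $\mathbf{G}_r$ as a rotation through an angle $2\pi/N$ inside an elliptic component: its powers generate a cyclic orbit of period $N$, matching periodic relation chains. Hierarchy follows from the hyperbolic component: since the exponential volume growth of $\mathbb{Q}_\beta^k$ embeds tree-structured data with arbitrarily low distortion, and since the boost factor in Proposition~\ref{pos-map} realizes the radial translations that generate such trees, the hyperbolic parameterization preserves hierarchy---exactly the property that Euclidean/elliptic geometry lacks.

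The main obstacle I anticipate is the interaction of the relation-specific geometric parameters (the weighting vectors $\mathbf{p}_r$ for the elliptic case and the curvatures $\beta_r$ for the hyperbolic case) when a pattern couples several relations, as in inversion and composition: transformations belonging to relations with different $\mathbf{p}_r$ or $\beta_r$ act on differently weighted spaces and do not compose verbatim. I would resolve this by noting that the claim only asserts \emph{capability}, i.e. the existence of one realizing configuration, so it is legitimate to fix a shared geometry across the relevant relations (e.g. a common $\mathbf{p}_r$, reducing via Claim~\ref{pro1} to the Euclidean setting where composition and inversion are exact). The remaining care is bookkeeping: confirming that restricting $\mathbf{G}_r$ to the positive hyperbolic subgroup (Proposition~\ref{pos-map}) does not obstruct these constructions, since $\mathbf{O}^+_\mathbf{q}(k)$ is itself a group and thus still closed under the products and inverses the patterns require.
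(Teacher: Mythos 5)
Your proposal is correct and follows essentially the same route as the paper's proof: the logical patterns are reduced to algebraic identities on the relation matrices ($\mathbf{G}_r^2=\mathbf{I}$ for symmetry, $\mathbf{G}_r^2\neq\mathbf{I}$ for antisymmetry, inverse and product relations for inversion and composition, all realizable by Theorem~\ref{thm-1} and the group structure of $\mathbf{O}_\mathbf{w}(k)$), while cyclicity and hierarchy are handled by the same qualitative geometric arguments about elliptic rotations and the hyperbolic rotation-plus-boost decomposition. Your additional care about reconciling relation-specific $\mathbf{p}_r$/$\beta_r$ across coupled relations and about closure of $\mathbf{O}^+_\mathbf{q}(k)$ addresses details the paper leaves implicit, but does not change the argument.
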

% \textbf{Ultimate generalization:}

% Following the previous works~\cite{TransE,sun2019rotate}, we expect the plausibility score of the positive triple

\subsection{Optimization}
% Following the previous works~\cite{TransE,sun2019rotate}, we expect the plausibility score of the positive triple $(h,r,t)\in\mathcal{F}$ to be larger than the corrupted negative triples $(h',r,t)$ or $(h,r,t')$, which are generated by randomly replacing the entity $h$ or $t$ with other entities.
% Following the previous works~\cite{TransE,sun2019rotate}, we expect the plausibility score of the positive triple $(h,r,t)\in\mathcal{F}$ to be larger than the corrupted negative triples $(h',r,t)$ or $(h,r,t')$, which are generated by randomly replacing the entity $h$ or $t$ with other entities.
% Following a series of previous works~\cite{TransE,sun2019rotate,RotH,house}, we expect the plausibility score of the positive triple $(h,r,t)\in\mathcal{F}$ to be larger than the corrupted negative triples $(h',r,t)$ or $(h,r,t')$, which are generated by randomly replacing the entity $h$ or $t$ with other entities.

% Formally, the self-adversarial negative sampling~\cite{sun2019rotate} is used to optimize the designed GoldE in the training process.
Following \cite{sun2019rotate, rotate3d, house}, we use the self-adversarial negative sampling loss \cite{sun2019rotate} to optimize our GoldE framework.
Formally, given a positive triple $(h,r,t)$, the loss function is defined as:
% the self-adversarial negative sampling~\cite{sun2019rotate} is used to optimize the designed GoldE in the training process.
% Given a positive triple and the constructed negative samples, the loss function is defined as follows:
\begin{equation}
\begin{split}
% \vspace{-1mm}
    L = &- \log{\sigma(\gamma+s_r(h,t))}\\
    &- \sum_{i=1}^g p(h'_i,r,t'_i)\log{\sigma(-s_r(h'_i,t'_i)-\gamma)},
\end{split}
% \vspace{-1mm}
\end{equation}
where $\gamma$ is a pre-defined margin, $\sigma$ is the sigmoid function, $g$ is the number of negative samples,  $(h'_i,r,t'_i)$ is the $i$-th negative sample against $(h,r,t)$, and 
$p(h'_i,r,t'_i)$ determines the proportion of $(h'_i,r,t'_i)$ in the current optimization as defined in \cite{sun2019rotate}.
% $p(h'_i,r,t'_i)$ is the weight defined in \cite{sun2019rotate} that determines the proportion of $(h'_i,r,t'_i)$ in the current optimization.
% Note that we initially define the entity embeddings of GoldE in Euclidean space, and map such Euclidean embeddings to corresponding manifolds with the differentiable projections.
% This enables us to learn all the parameters using standard Euclidean optimization techniques for the numerical stability \cite{RotH, stable}.
Note that we initially define the entity embeddings of GoldE in Euclidean space, and all the relation transformations (including $g_\beta$) are differentiable and bijective \cite{HGCN}.
This enables us to learn the parameters using the standard Euclidean optimization techniques for the numerical stability \cite{stable}.
% in Equation (\ref{hyp_pro}). 
% the weight $p(h'_i,r,t'_i)$
% defined in \cite{sun2019rotate} determines the proportion of $(h'_i,r,t'_i)$ in the current optimization.
\vspace{-0.1mm}

\section{Experiments}
\begin{table*}[t]
% \caption{Link prediction results on WN18RR, FB15k-237 and YAGO3-10. Best results are in \textbf{bold} and second best results are \underline{underlined}. $[\dagger]$: Results are taken from~\cite{ConvKB}; $[\diamond]$: Results are taken from~\cite{wn18rr}. $[\ast]$: We report HousE~\cite{house} with the Householder rotations for fair comparisons. Other results are taken from the corresponding original papers.}
\caption{Link prediction results on WN18RR, FB15k-237 and YAGO3-10. Best results are in \textbf{bold} and second best results are \underline{underlined}. $[\dagger]$: Results are taken from~\cite{ConvKB}; $[\diamond]$: Results are taken from~\cite{wn18rr}. $[\ast]$: HousE-pro is a variant of HousE with additional projection operations \cite{house}. Other results are taken from the corresponding original papers.}
\label{WN18RR-and-FB15k237}
% \vspace{-1.0mm}
\begin{center}
\begin{small}
\resizebox{0.9\textwidth}{!}{
\begin{tabular}{lccccccccccccccc}
\toprule
 \multirow{3}{*}{Model}    & \multicolumn{5}{c}{WN18RR}                                                                                                                                                                           & \multicolumn{5}{c}{FB15k-237}               & \multicolumn{5}{c}{YAGO3-10}  \\ \cmidrule(r){2-6} \cmidrule(r){7-11} \cmidrule(r){12-16}
\multicolumn{1}{c}{}    & MR                                   & MRR                                   & H@1                                   & H@3                                   & H@10                                  & MR                                  & MRR                                   & H@1                                   & H@3                                   & H@10                                  & MR                   & MRR                  & H@1                  & H@3                  & H@10                 \\
\midrule
TransE\;\!$\dagger$                  & 3384                                 & .226                                 & -                                     & -                                     & .501                                 & 357                                 & .294                                 & -                                     & -                                     & .465                                  & -                    & -                    & -                    & -                    & -                    \\
DistMult\;\!$\diamond$             & 5110                                 & .430                                  & .390                                  & .440                                  & .490                                  & 254                                 & .241                                 & .155                                 & .263                                 & .419                                 & 5926                 & .340                  & .240                  & .380                  & .540                  \\
ComplEx\;\!$\diamond$               & 5261                                 & .440                                  & .410                                  & .460                                  & .510                                  & 339                                 & .247                                 & .158                                 & .275                                 & .428                                 & 6351                 & .360                  & .260                  & .400                   & .550                  \\
ConvE\;\!$\diamond$                   & 4187                                 & .430                                  & .400                                  & .440                                  & .520                                  & 224                                 & .325                                 & .237                                 & .356                                 & .501                                 & 1671                 & .440                  & .350                  & .490                  & .620                  \\
MuRP                & -                                 & .481                                 & .440                                 & .495                                 & .566                                 & -                                 & .335                                 & .243                                 & .367                                 & .518                                 & -                    & .354                    & .249                    & .400                    & .567                    \\
\midrule
RotatE                 & 3340                                 & .476                                 & .428                                 & .492                                 & .571                                 & 177                                 & .338                                 & .241                                 & .375                                 & .533                                 & 1767                 & .495                 & .402                 & .550                  & .670                  \\
Rotate3D                 & 3328                                 & .489                                 & .442                                 & .505                                 & .579                                 & 165                                 & .347                                 & .250                                 & .385                                 & .543                                 & -                 & -                 & -                 & -                  & -                  \\
QuatE                & 3472                                 & .481                                 & .436                                 & .500                                 & .564                                 & 176                                 & .311                                 & .221                                 & .342                                 & .495                                 & -                    & -                    & -                    & -                    & -                    \\
DualE                  & -                                    & .482                                 & .440                                 & .500                                 & .561                                 & -                                   & .330                                 & .237                                 & .363                                 & .518                                 & -                    & -                    & -                    & -                    & -                    \\
% Rot-Pro                & 2815                                    & .457                                 & .397                                 & .482                                 & .577                                 & 201                                   & .344                                 & .246                                 & .383                                 & .540                         & 1797                    & .542                    & .443                    & .596                    & .669                    \\
ReflectE                & 1730                                    & .488                                 & .450                                 & .501                                 & .559                                 & \underline{148}                                   & .358                                 & .263                                 & .396                                 & .546                         & -                    & -                    & -                    & -                    & -                    \\
% \midrule
HousE  & 1885 & .496 & .452 & .511 & .585 & 165 & .348 & .254 & .384 & .534 & 1449 & .565 & .487 & .616 & .703 \\
HousE-pro\;\!$\ast$  & \underline{1303} & \underline{.511} & \underline{.465} & \underline{.528} & \underline{.602} & 153 & \underline{.361} & \underline{.266} & \underline{.399} & \underline{.551} & \underline{1415} & .571 & .491 & \underline{.620} & \underline{.714} \\
CompoundE   & - & .491 & .450 & .508 & .576 & - & .357 & .264 & .393 & .545 & - & - & - & - & - \\
\midrule
RefH                  & -                                 & .461                                 & .404                                 & .485                                 & .568                                 & -                                 & .346                                 & .252                                 & .383                                 & .536                                 & -                    & \underline{.576}                    & \underline{.502}                    & .619                    & .711                    \\
RotH                 & -                                 & .496                                 & .449                                 & .514                                 & .586                                 & -                                 & .344                                 & .246                                 & .380                                 & .535                                 & -                    & .570                    & .495                    & .612                    & .706                    \\
AttH                 & -                                 & .486                                 & .443                                 & .499                                 & .573                                 & -                                 & .348                                 & .252                                 & .384                                 & .540                                 & -                    & .568                    & .493                    & .612                    & .702                    \\
\midrule
% GoldE & $(\mathbb{P})^{m_p}$                  & 1779 & .502 & .458 & .513 & .588 & 154 & .360 & .265  & .398  & .551 &  &  &  &  & \\
% GoldE & $(\mathbb{Q})^{m_q}$                   & 1225 & .514 & .467 & .532 & .607 & 160 & .358  & .263 & .395  & .547 &  &  &  &  & \\
GoldE  &\textbf{790} & \textbf{.525} & \textbf{.476} & \textbf{.542} & \textbf{.615} & \textbf{142} & \textbf{.370} & \textbf{.277} &  \textbf{.405} & \textbf{.558} & \textbf{536} & \textbf{.588} & \textbf{.510} & \textbf{.634} & \textbf{.723}\\
\bottomrule
\end{tabular}}
\end{small}
\end{center}
\vspace{-8mm}
\end{table*}

To comprehensively validate the effectiveness of GoldE, we conduct extensive experiments on link prediction task, and attempt to answer the following questions:

(1) How does GoldE perform on link prediction compared to existing KGE approaches? (Section~\ref{main})

% (2) What is the effect of setting different geometries and dimensions for orthogonal relation transformations in GoldE on link prediction performance? (Section~\ref{analysis})
(2) What is the effect of configuring different geometries and dimensions for orthogonal relation transformations on the performance of GoldE? (Section~\ref{analysis})

(3) From a fine-grained perspective, does GoldE effectively model different relation types? (Section~\ref{fine-grained})

(4) Can GoldE still learn high-quality representations even when the embedding size $k$ is restricted? (Section~\ref{restricted_k})
% (4) How does GoldE perform when the embedding size $k$ is restricted? (Section~\ref{restricted_k})

\subsection{Experimental Setup}
\textbf{Datasets:} We evaluate GoldE framework on three  standard benchmarks: WN18RR \cite{wn18rr}, FB15k-237 \cite{fb15k237} and YAGO3-10 \cite{yago3-10}.
Refer to Appendix~\ref{Datasets} for statistical details.

% \textbf{Baselines:} For performance comparison, we select a series of advanced KGE models as baselines.
% For Euclidean models, we report 

% \textbf{Implementation Details:} 
\textbf{Baselines:}
% We compare the proposed GoldE with a series of advanced KGE approaches. For the non-orthogonal models, we report TransE \cite{TransE}, DistMult \cite{distmult}, ComplEx \cite{complex} and MuRP \cite{MuRP}. For the Euclidean orthogonal models, we report RotatE \cite{sun2019rotate}, Rotate3D \cite{rotate3d}, QuatE \cite{quate}, DualE \cite{duale}, Rot-Pro \cite{Rot-Pro} and HousE \cite{house}. For the hyperbolic orthogonal models, we report AttH \cite{RotH} and its two variants, i.e., RefH and RotH.
We compare GoldE to a series of advanced KGE approaches, including (1) non-orthogonal models: TransE \cite{TransE}, DistMult \cite{distmult}, ComplEx \cite{complex},  
% and MuRP \cite{MuRP}. 
ConvE \cite{wn18rr} and MuRP \cite{MuRP}; 
(2) Euclidean orthogonal models: RotatE \cite{sun2019rotate}, Rotate3D \cite{rotate3d}, QuatE \cite{quate}, DualE \cite{duale}, ReflectE \cite{ReflectE}, HousE \cite{house} and CompoundE \cite{CompoundE}; 
(3) hyperbolic orthogonal models: RefH, RotH and AttH \cite{RotH}.
% For the hyperbolic orthogonal models, we report AttH \cite{RotH} and its two variants, i.e., RefH and RotH.
% including HousE \cite{house}, Rot-Pro \cite{Rot-Pro}, AttH \cite{RotH}, MuRP \cite{MuRP}, QuatE \cite{quate}, RotatE \cite{sun2019rotate}, 
% ComplEx \cite{complex} and DistMult \cite{distmult}.
% For Euclidean models, we report 

\textbf{Implementation details:}
% Following \cite{house}, we fix the embedding dimension $k$ to ensure that the number of parameters is comparable to the baselines for fair comparisons.
For fair comparisons, we follow \cite{house} to fix the embedding dimension $k$, ensuring that the number of parameters is comparable to the baselines.
Moreover, we follow the most natural way \cite{mix1} to construct the product manifold $\mathbb{D}$ as the combination of both $\mathbb{P}$ and $\mathbb{Q}$ for GoldE, i.e., $\mathbb{D}^k=\bigtimes_{i=1}^{m_\mathbb{P}}\mathbb{P}_{i}^{k_\mathbb{P}}\times \bigtimes_{j=1}^{m_\mathbb{Q}}\mathbb{Q}_{j}^{k_\mathbb{Q}}$.
The hyperparameters $m_\mathbb{P}$ and $k_\mathbb{P}$ ($m_\mathbb{Q}$ and $k_\mathbb{Q}$) determine the number and dimension of elliptic (hyperbolic) components, respectively.
% of the given dimension for GoldE framework.
% Specifically, three natural configurations are considered, including
% ($\mathbb{D}_1$) the product of single type $\mathbb{P}$, ($\mathbb{D}_2$) the product of single type $\mathbb{Q}$, and ($\mathbb{D}_3$) the product of both types.
% which are built on (1) $\mathbb{D}_1$: the product of $\mathbb{P}$, (2) $\mathbb{D}_2$: the product of $\mathbb{Q}$, and (3) $\mathbb{D}_3$: the product of both geometric types, respectively.
% The induced models are built on (1) $\mathbb{D}_1=\bigtimes_{i=1}^{m_\mathbb{P}}\mathbb{P}_{i}^{k_\mathbb{P}}$ (the product of $\mathbb{P}$), (2) $\mathbb{D}_2=\bigtimes_{i=1}^{m_\mathbb{Q}}\mathbb{Q}_{i}^{k_\mathbb{P}}$ (the product of $\mathbb{Q}$), and (3) $\mathbb{D}_3=\bigtimes_{i=1}^{m_\mathbb{P}}\mathbb{P}_{i}^{k_\mathbb{P}}\times \bigtimes_{j=1}^{m_\mathbb{Q}}\mathbb{Q}_{j}^{k_\mathbb{Q}}$ (the product of both geometric types).
% Following \cite{house}, We control the number of parameters
% to be comparable to the baselines for fair comparisons.
More details can be found in Appendix~\ref{imp}.

\begin{table}[t]
\caption{Link prediction results for our GoldE framework under different geometric combinations.}
\label{geometry-ablation}
\vspace{-1.0mm}
% \vskip 0.1in
\begin{center}
\begin{small}
\resizebox{0.98\columnwidth}{!}{
\begin{tabular}{ccccccc}
\toprule
  \multirow{3}{*}{\begin{tabular}{@{}c@{}}Geometry of Orth.\end{tabular}}       & \multicolumn{2}{c}{WN18RR} & \multicolumn{2}{c}{FB15k-237} & \multicolumn{2}{c}{YAGO3-10} \\ \cmidrule(r){2-3} \cmidrule(r){4-5} \cmidrule(r){6-7}
\multicolumn{1}{c}{} & MRR          & H@10        & MRR           & H@10 & MRR           & H@10          \\ 
\midrule
% ComplEx    & .420         & .460        & .294              &     .463       & .336   & .484   \\
$\mathbb{E}^k$    & .496         & .585        & .348              &     .534       & .565   & .703   \\
$\mathbb{P}^k$    & .505         & .593        & .358              &     .547       & .572   & .708   \\
% \midrule
$\mathbb{Q}^k$  & .513         & .606        & .354              & .542        & .580   & .714       \\
\midrule
$(\mathbb{P}_{i}^{k_\mathbb{P}})^{m_\mathbb{P}}$  & .509         & .602        & \underline{.365}              & \underline{.553}        & .578   & .712       \\
$(\mathbb{Q}_{j}^{k_\mathbb{Q}})^{m_\mathbb{Q}}$ & \underline{.518}         & \underline{.610}        & .360              & .549        & \underline{.585}   & \underline{.718}      \\
\midrule
$(\mathbb{P}_{i}^{k_\mathbb{P}})^{m_\mathbb{P}}$$\times$$(\mathbb{Q}_{j}^{k_\mathbb{Q}})^{m_\mathbb{Q}}$ & \textbf{.525}         & \textbf{.615}        & \textbf{.370}              & \textbf{.558}        & \textbf{.588}   & \textbf{.723}      \\
% AttH & .466         & .551        & \underline{.324}              & \underline{.501}        & \underline{.397}   & \underline{.566}      \\
% \midrule
% $(\mathbb{P}_{i})^{m_\mathbb{P}}$$\times$$(\mathbb{Q}_{j})^{m_\mathbb{Q}}$$\times$$\mathbb{E}$    & \textbf{.482}         & \textbf{.558}        & \textbf{.335}              & \textbf{.506}        & \textbf{.420}   & \textbf{.586}      \\
% HousE$^+$   & .514         & .606        & .366              & .552        &   &      \\
\bottomrule
\end{tabular}}
\end{small}
\end{center}
\vspace{-5mm}
\end{table}

\subsection{Performance Comparison}
\label{main}
% The main experimental results are summarized in Table~\ref{WN18RR-and-FB15k237}.
Table~\ref{WN18RR-and-FB15k237} summarizes the main results on link prediction task.
One can observe that GoldE framework consistently outper-forms the baselines on all metrics across the three datasets.
% In particular, GoldE achieves an average absolute gain of approximately $2.5\%$ in terms of MRR compared to the best Euclidean orthogonal model (i.e., HousE), and also surpasses exisiting hyperbolic orthogonal models (i.e., RefH, RotH and AttH) by a clear margin.
% All these models can be viewed as special cases of GoldE in specific geometries and dimensions, demonstrating the superiority of the proposed universal orthogonal parameterization.
In particular, 
% Notably, 
% HousE is the most relevant baseline to our work, 
% which can be viewed as the Euclidean models relations as $k$-dimensional Euclidean House-holder rotations. 
compared to HousE, the most relevant baseline that
represents relations as $k$-dimensional Euclidean House-holder rotations, 
GoldE achieves an average absolute gain of $2.5\%$ in MRR, and even surpasses its projection-enhanced variant (i.e., HousE-pro) with $1.3\%$ absolute improvement in MRR.
Moreover, GoldE also outperforms exisiting hyperbolic orthogonal models (i.e., RefH, RotH and AttH) by a clear margin.
% Such new state-of-the-art results on all the three datasets achieved by GoldE demonstrate the superiority of the proposed universal orthogonal parameterization.
Such new state-of-the-art results demonstrate the superiority of our universal orthogonal parameterization.

\begin{figure}[t!]
\centering
\vspace{3mm}
\subfigure[MRR vs. $k_\mathbb{X}$ on WN18RR]{
            \begin{tikzpicture}[font=\Large, scale=0.45]
                \begin{axis}[
                    width=0.46\textwidth, 
                    height=0.35\textwidth,
                    legend cell align={left},
                    legend style={nodes={scale=0.88, transform shape}, legend columns=-1, /tikz/every even column/.append style={column sep=0.25cm}},
                    xlabel={Dimension $k_\mathbb{X}$},
                    xtick pos=left,
                    tick label style={font=\Large},
                    ylabel style={font=\Large, yshift=10pt},
                    %xmode=log,
                    %log ticks with fixed point,
                    ylabel={MRR},
                    xtick={2, 4, 6, 8, 10, 12},
                    xticklabels={$2$, $4$, $6$, $8$, $10$, $12$},
                    ytick={0.470, 0.482,0.494,0.506,0.518},
                    yticklabels={$0.470$,$0.482$,$0.494$,$0.506$,$0.518$},
                    legend pos=south east,
                    ymajorgrids=true,
                    grid style=dashed
                ]
                % \addplot[
                %     color=purple,
                %     dotted,
                %     mark options={solid},
                %     mark=diamond*,
                %     line width=1.5pt,
                %     mark size=2pt
                %     ]
                %     coordinates {
                %     (2, 0.486)
                %     (4, 0.504)
                %     (6, 0.509)
                %     (8, 0.511)
                %     (10, 0.511)
                %     (12, 0.511)
                %     };
                %     \addlegendentry{$\mathbb{Q}$}
                \addplot[
                    color=brown,
                    dotted,
                    mark options={solid},
                    mark=triangle*,
                    line width=1.5pt,
                    mark size=2pt
                    ]
                    coordinates {
                    (2, 0.471)
                    (4, 0.489)
                    (6, 0.493)
                    (8, 0.494)
                    (10, 0.496)
                    (12, 0.496)
                    };
                    \addlegendentry{$\mathbb{E}$}
                \addplot[
                    color=blue,
                    dotted,
                    mark options={solid},
                    mark=*,
                    line width=1.5pt,
                    mark size=2pt
                    ]
                    coordinates {
                    (2, 0.485)
                    (4, 0.500)
                    (6, 0.506)
                    (8, 0.508)
                    (10, 0.509)
                    (12, 0.509)
                    };
                    \addlegendentry{$\mathbb{P}$}
                % \addplot[
                %     color=brown,
                %     dotted,
                %     mark options={solid},
                %     mark=triangle*,
                %     line width=1.5pt,
                %     mark size=2pt
                %     ]
                %     coordinates {
                %     (2, 0.471)
                %     (4, 0.489)
                %     (6, 0.493)
                %     (8, 0.494)
                %     (10, 0.496)
                %     (12, 0.496)
                %     };
                %     \addlegendentry{$\mathbb{E}$}
                \addplot[
                    color=purple,
                    dotted,
                    mark options={solid},
                    mark=diamond*,
                    line width=1.5pt,
                    mark size=2pt
                    ]
                    coordinates {
                    (2, 0.496)
                    (4, 0.513)
                    (6, 0.516)
                    (8, 0.518)
                    (10, 0.518)
                    (12, 0.518)
                    };
                    \addlegendentry{$\mathbb{Q}$}
                \end{axis}
                \end{tikzpicture}
            %\label{fig:flops_intro}
    }\hfill
    \subfigure[MRR vs. $k_\mathbb{X}$ on FB15k-237]{
            \begin{tikzpicture}[font=\Large,scale=0.45]
                \begin{axis}[
                    width=0.46\textwidth, 
                    height=0.35\textwidth,
                    legend cell align={left},
                    legend style={nodes={scale=0.88, transform shape}, legend columns=-1, /tikz/every even column/.append style={column sep=0.25cm}},
                    xlabel={Dimension $k_\mathbb{X}$},
                    xtick pos=left,
                    tick label style={font=\Large},
                    ylabel style={font=\Large, yshift=10pt},
                    %xmode=log,
                    %log ticks with fixed point,
                    ylabel={MRR},
                    xtick={2, 6, 10, 14, 18, 22},
                    xticklabels={$2$, $6$, $10$, $14$, $18$, $22$},
                    ytick={0.337, 0.344,0.351,0.358,0.365},
                    yticklabels={$0.337$,$0.344$,$0.351$,$0.358$,$0.365$},
                    legend pos=south east,
                    ymajorgrids=true,
                    grid style=dashed
                ]
                % \addplot[
                %     color=purple,
                %     dotted,
                %     mark options={solid},
                %     mark=diamond*,
                %     line width=1.5pt,
                %     mark size=2pt
                %     ]
                %     coordinates {
                %     (2, 0.486)
                %     (4, 0.504)
                %     (6, 0.509)
                %     (8, 0.511)
                %     (10, 0.511)
                %     (12, 0.511)
                %     };
                %     \addlegendentry{$\mathbb{Q}$}
                \addplot[
                    color=brown,
                    dotted,
                    mark options={solid},
                    mark=triangle*,
                    line width=1.5pt,
                    mark size=2pt
                    ]
                    coordinates {
                    (2, 0.338)
                    (6, 0.340)
                    (10, 0.345)
                    (14, 0.347)
                    (18, 0.348)
                    (22, 0.348)
                    };
                    \addlegendentry{$\mathbb{E}$}
                \addplot[
                    color=blue,
                    dotted,
                    mark options={solid},
                    mark=*,
                    line width=1.5pt,
                    mark size=2pt
                    ]
                    coordinates {
                    (2, 0.351)
                    (6, 0.355)
                    (10, 0.360)
                    (14, 0.363)
                    (18, 0.365)
                    (22, 0.365)
                    };
                    \addlegendentry{$\mathbb{P}$}
                % \addplot[
                %     color=brown,
                %     dotted,
                %     mark options={solid},
                %     mark=triangle*,
                %     line width=1.5pt,
                %     mark size=2pt
                %     ]
                %     coordinates {
                %     (2, 0.471)
                %     (4, 0.489)
                %     (6, 0.493)
                %     (8, 0.494)
                %     (10, 0.496)
                %     (12, 0.496)
                %     };
                %     \addlegendentry{$\mathbb{E}$}
                \addplot[
                    color=purple,
                    dotted,
                    mark options={solid},
                    mark=diamond*,
                    line width=1.5pt,
                    mark size=2pt
                    ]
                    coordinates {
                    (2, 0.347)
                    (6, 0.350)
                    (10, 0.356)
                    (14, 0.359)
                    (18, 0.360)
                    (22, 0.360)
                    };
                    \addlegendentry{$\mathbb{Q}$}
                \end{axis}
                \end{tikzpicture}
            %\label{fig:flops_intro}
    }
    \vspace{-3.0mm}
    \caption{MRR results of GoldE under the product of $\mathbb{X}$ with varying dimension $k_\mathbb{X}$ $(\mathbb{X}\in\{\mathbb{E},\mathbb{P},\mathbb{Q}\})$ on WN18RR and FB15k-237.}
    \label{exp-fig}
    \vspace{-5mm}
\end{figure}
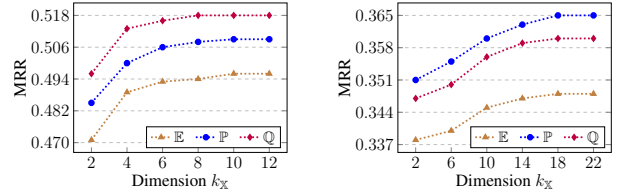

\subsection{Effect of Universal Orthogonal Parameterization}
\label{analysis}

\textbf{Effect of geometric unification:}
To investigate the effect of geometries for orthogonal relation transformations, we explore the performance of GoldE under different geometric combinations\footnote[4]{To simplify, we use a shorthand notation for repeated components: $(\mathbb{X}_i^{k_\mathbb{X}})^{m_\mathbb{X}}$$=$$\bigtimes_{i=1}^{m_\mathbb{X}}\mathbb{X}_{i}^{k_\mathbb{X}}$. Note that the Cartesian product of $\mathbb{E}$ is redundant since it satisfies $\mathbb{E}^k=\bigtimes_{i=1}^{m_\mathbb{E}}\mathbb{E}_{i}^{k_\mathbb{E}}$, while this equality does not hold for non-Euclidean manifolds \cite{mix2}.}.
From the results summarized in Table~\ref{geometry-ablation}, one can observe the following: 
(1) Both elliptic and hyperbolic parameterizations are more effective than the Euclidean counterpart, indicating the importance of additional scaling operations and modeling inherent hierarchies, respectively; 
(2) As we expected, hyperbolic parameterization performs better on the two datasets (i.e., WN18RR and YAGO3-10) dominated by hierarchies, while the elliptic counterpart excels on the dataset (i.e., FB15k-237) dominated by cycles.
% (3) The product of multiple $\mathbb{P}$ or $\mathbb{Q}$ surpasses its single case, since the “non-constantly” curved manifolds can flexibly represent a variety of cycles or hierarchies \cite{mix1};
Note that the product of multiple $\mathbb{P}$ (or $\mathbb{Q}$) outperforms its single case, since such homogeneous combination is able to embed a variety of cycles (or hierarchies) \cite{mix1};
(3) By combining multiple $\mathbb{P}$ and $\mathbb{Q}$, our GoldE can naturally preserve both cyclical and hierarchical structures, thereby achieving the optimal performance on all the three datasets.
Such improvements confirm the effectiveness of the geometric unification in our universal orthogonal parameterization.

\textbf{Effect of dimensional extension:}
% To verify the expressiveness of the dimensional extension in our parameterization, we further follow \cite{house} to conduct experiments for GoldE under the product of $\mathbb{X}$ with varying dimension $k_\mathbb{X}$. 
To verify the expressiveness of the dimensional extension in our parameterization, we further conduct experiments for GoldE under the product of $\mathbb{X}$ with varying dimension $k_\mathbb{X}$. 
Figure~\ref{exp-fig} exhibits the MRR results on WN18RR and FB15k-237.
As expected, whether in Euclidean, elliptic, or hyperbolic parameterizations, 
the higher-dimensional orthogonal transformations enhance modeling capability, leading to superior performance compared to their lower-dimensional counterparts. 
% As we expected, whether in Euclidean, elliptic, or hyperbolic parameterizations, the higher-dimensional orthogonal transformations bring the superior modeling capacity, thereby yielding better performance than the lower-dimensional ones.
% Such improvements owe to the superior modeling capacity 

\begin{table}[t]
\caption{MRR for the models tested on each relation of WN18RR.}
\label{Case-WN18RR}
\vspace{-0.5mm}
\begin{center}
\begin{small}
\resizebox{0.93\columnwidth}{!}{
\begin{tabular}{lcccc}
\toprule
Relation Name   & \#Triple  & RotH  & HousE  & GoldE  \\
\midrule
hypernym         &  1,251 & \underline{0.190} & 0.182    & \textbf{0.226} \\
%derivationally\_related\_form & N-N & 0.957 & 0.952 & \underline{ 0.958} & \textbf{0.958}    \\
instance\_hypernym   &  122 & 0.362 & \underline{0.395}    & \textbf{0.453} \\
%also\_see                     & N-N & 0.627 & 0.607 & \textbf{0.644} & \underline{ 0.640}    \\
member\_meronym      & 253 & 0.266 & \underline{0.275}    & \textbf{0.296} \\
synset\_domain\_topic\_of    &  114 & \underline{0.406} & 0.396    & \textbf{0.430} \\
has\_part            & 172 & 0.213 & \underline{0.217}    & \textbf{0.255} \\
member\_of\_domain\_usage     & 24 & 0.350 & \underline{0.415}    & \textbf{0.455} \\
member\_of\_domain\_region    & 26 & \underline{0.356} & 0.281    & \textbf{0.403} \\
%\midrule
derivationally\_related\_form  & 1,074 & 0.947 & \underline{0.958} & \textbf{0.961}    \\
also\_see           & 56 & 0.583 & \underline{0.638} & \textbf{0.665}    \\
verb\_group                  & 39 & 0.820 & \underline{0.968}    & \textbf{0.975} \\
similar\_to             & 3 & \textbf{1.000} & \textbf{1.000}    & \textbf{1.000} \\
\bottomrule
\end{tabular}}
\end{small}
\end{center}
\vspace{-6mm}
\end{table}

\subsection{Fine-grained Performance Analysis}
\label{fine-grained}
To verify the modeling capability of our GoldE from a fine-grained perspective, we report its performance per relation on WN18RR in Table~\ref{Case-WN18RR}.
Compared to the hyperbolic model RotH and the Euclidean model HousE, our GoldE achieves the best performance on all 11 relation types.
Specifically, GoldE not only obtains significant improvements on hierarchical relations (e.g., \emph{hypernym} and \emph{instance\_hypernym}), but also performs well on symmetric relations (e.g., \emph{derivationally\_related\_form} and \emph{verb\_group}).
These advanced fine-grained results demonstrate the superior and comprehensive modeling  capability of our GoldE framework.

\subsection{Results with Restricted Embedding Size}
\label{restricted_k}
We also follow \cite{RotH} to evaluate our GoldE under the restriction of fixing the embedding size $k$ to $32$.
It is challenging for KGE models to offer qualified representations in such a restricted setting.
Table \ref{low-setting} reports the experimental results on the three datasets.
We observe that GoldE consistently surpasses the baselines by a clear margin across all datasets, confirming the higher-quality representations learned by our universal orthogonal parameterization.
See Appendix \ref{app_curve} for more results of GoldE with varying $k$.
% For the performance curve of GoldE with varying $k$, please refer to Appendix \ref{app_curve}.

\section{Related Work}
Early KGE approaches explore various geometric operations such as translation~\cite{TransE, transh, transr, transd} and scaling~\cite{distmult,complex} to model relations.
% Beyond these models, RotatE~\cite{sun2019rotate} 
As a representative work, RotatE~\cite{sun2019rotate} goes beyond
the previous methods
% approaches
by modeling relations as $2$-dimensional rotations between entities to handle the crucial logical patterns in KGs.
Inspired by the effectiveness of orthogonal transformations, subsequent research mainly advances in two directions.

\textbf{Dimension direction:} A series of works~\cite{quate,rotate3d,duale,house} seek higher-dimensional orthogonal transformations for better modeling capacity. 
% For example, Rotate3D~\cite{rotate3d} and QuatE~\cite{quate} extend the rotations to $3$-dimensional and $4$-dimensional spaces by introducing the quaternion number system.
Recently, HousE~\cite{house} introduces the Euclidean Householder reflections to represent relations as $k$-dimensional Euclidean rotations, which can be viewed as the Euclidean specialization of our proposal.
% as the basic mathematical operator to represent any $k$-dimensional orthogonal transformations, which can be viewed as the Euclidean specialization of our proposal.

\textbf{Geometry direction:} Another branch of KGE models~\cite{MuRP,RotH} exploits hyperbolic geometry for preserving the hierarchical structures in KGs.
AttH~\cite{RotH} represents relations as hyperbolic orthogonal transformations to accommodate both hierarchical structures and logical patterns, yet remains constrained by low-dimensional space and homogeneous geometry.
% However, it 

Notably, our GoldE framework generalizes existing KGE models in terms of both dimension and geometry.
Moreover, the designed parameterization is orthogonal to advanced techniques 
such as regularizers \cite{N3} and graph encoders \cite{COMPGCN,NBFNet}, allowing for their integration to achieve better performance.
% , which can be integrated to achieve better performance.

\begin{table}[t]
\caption{Link prediction results for $32$-dimensional embeddings. 
% Due to the space constraint, we only report MRR and H@10 here.  
% For full results on all metrics, please refer to Appendix~\ref{restricted}.
}
\label{low-setting}
\vspace{-0.5mm}
% \vskip 0.1in
\begin{center}
\begin{small}
\resizebox{0.93\columnwidth}{!}{
\begin{tabular}{lcccccc}
\toprule
         & \multicolumn{2}{c}{WN18RR} & \multicolumn{2}{c}{FB15k-237} & \multicolumn{2}{c}{YAGO3-10} \\ \cmidrule(r){2-3} \cmidrule(r){4-5} \cmidrule(r){6-7}
Model & MRR          & H@10        & MRR           & H@10 & MRR           & H@10          \\ 
\midrule
% ComplEx    & .420         & .460        & .294              &     .463       & .336   & .484   \\
RotatE    & .387         & .491        & .290              &     .458       & -   & -   \\
QuatE    & .421         & .467        & .293              &     .460       & -   & -   \\
% \midrule
MuRE  & .458         & .525        & .313              & .489        & .283   & .478       \\
MuRP  & .465         & .544        & .323              & \underline{.501}        & .230   & .392       \\
RefH & .447         & .518        & .312              & .489        & .381   & .530      \\
RotH & \underline{.472}         & \underline{.553}        & .314              & .497        & .393   & .559      \\
AttH & .466         & .551        & \underline{.324}              & \underline{.501}        & \underline{.397}   & \underline{.566}      \\
\midrule
GoldE    & \textbf{.482}         & \textbf{.558}        & \textbf{.335}              & \textbf{.506}        & \textbf{.420}   & \textbf{.586}      \\
% HousE$^+$   & .514         & .606        & .366              & .552        &   &      \\
\bottomrule
\end{tabular}}
\end{small}
\end{center}
\vspace{-8mm}
\end{table}

% \section{Conclusion and Discussion}
% \textbf{Conclusion:}
% We introduce GoldE to generalize KGE models by designing the universal orthogonal parameterization.
% GoldE achieves dimensional extension and geometric unification for orthogonal relation transformations, enabling the full modeling of logical patterns and topological structures.
\section{Conclusion}
In this paper, we introduce GoldE,
a powerful and general framework that generalizes existing KGE approaches based on a universal orthogonal parameterization.
% with a universal orthogonal parameterization. 
% in both dimension and geometry of orthogonal relation transformations.
Empowered by such parameterization, our framework can simultaneously achieve dimensional extension and geometric unification for relational orthogonalization, thereby possessing the superior capability of capturing logical patterns and heterogeneous structures in KGs.
Experimental results over three datasets comprehensively validate the effectiveness of our proposal.
% Experimental results on three datasets comprehensively validate the effectiveness of our proposal.
In future work, we will explore the potential of GoldE in modeling hyper-relational and temporal knowledge graphs.

% Acknowledgements should only appear in the accepted version.
\section*{Acknowledgements}

This work is supported in part by National Natural Science Foundation of China (No. 62102420), Beijing Outstanding Young Scientist Program NO. BJJWZYJH 012019100020098, Intelligent Social Governance Platform, Major Innovation \& Planning Interdisciplinary Platform for the ``Double-First Class'' Initiative, Renmin University of China, Public Computing Cloud, Renmin University of China, fund for building world-class universities (disciplines) of Renmin University of China, Intelligent Social Governance Platform.
% \textbf{Do not} include acknowledgements in the initial version of
% the paper submitted for blind review.

% If a paper is accepted, the final camera-ready version can (and
% usually should) include acknowledgements.  Such acknowledgements
% should be placed at the end of the section, in an unnumbered section
% that does not count towards the paper page limit. Typically, this will 
% include thanks to reviewers who gave useful comments, to colleagues 
% who contributed to the ideas, and to funding agencies and corporate 
% sponsors that provided financial support.

\section*{Impact Statement}

This work presents GoldE based on a universal orthogonal parameterization, aiming to advance the field of knowledge graph embedding in terms of both dimension and geometry.
% Due to its universality and simplicity, the designed orthogonal parameterization holds the potential to enhance a variety of foundational models such as RNN~\cite{orth-rnn}, GNN~\cite{orth-gnn} and Transformer~\cite{orth-transformer}, 
% thus possessing the capability to facilitate a broad spectrum of downstream tasks.
% For the potential positive impacts, we highlight that the designed orthogonal parameterization is applicable across various scenarios, and holds the potential to enhance the mainstream foundational models such as RNN \cite{orth-rnn}, GNN \cite{orth-gnn} and Transformer \cite{orth-transformer}.
We highlight that the designed orthogonal parameterization is a general technique, even with the potential to enhance various foundational models such as RNN \cite{orth-rnn}, GNN \cite{orth-gnn} and Transformer \cite{orth-transformer}, thus possessing the capability of facilitating a wide range of downstream tasks.
% On the other side of the coin
As for the negative impact, GoldE may encode the biases present in the training graph, which can lead to stereotyped predictions for missing links between certain nodes (e.g., users on a social or e-commerce platform).
We expect future studies to mitigate this issue.

% In the unusual situation where you want a paper to appear in the
% references without citing it in the main text, use \nocite
% \nocite{langley00}

\bibliography{example_paper}
\bibliographystyle{icml2024}

%%%%%%%%%%%%%%%%%%%%%%%%%%%%%%%%%%%%%%%%%%%%%%%%%%%%%%%%%%%%%%%%%%%%%%%%%%%%%%%
%%%%%%%%%%%%%%%%%%%%%%%%%%%%%%%%%%%%%%%%%%%%%%%%%%%%%%%%%%%%%%%%%%%%%%%%%%%%%%%
% APPENDIX
%%%%%%%%%%%%%%%%%%%%%%%%%%%%%%%%%%%%%%%%%%%%%%%%%%%%%%%%%%%%%%%%%%%%%%%%%%%%%%%
%%%%%%%%%%%%%%%%%%%%%%%%%%%%%%%%%%%%%%%%%%%%%%%%%%%%%%%%%%%%%%%%%%%%%%%%%%%%%%%
\newpage
\appendix
\onecolumn
% \section{You \emph{can} have an appendix here.}

% You can have as much text here as you want. The main body must be at most $8$ pages long.
% For the final version, one more page can be added.
% If you want, you can use an appendix like this one.  

% The $\mathtt{\backslash onecolumn}$ command above can be kept in place if you prefer a one-column appendix, or can be removed if you prefer a two-column appendix.  Apart from this possible change, the style (font size, spacing, margins, page numbering, etc.) should be kept the same as the main body.
\section{Glossary of Symbols}
\label{notations}
% Table~\ref{params} provides a glossary of commonly-used terms in our paper.
\begin{table}[h]
\vspace{-3mm}
\caption{Glossary of variables and symbols used in this paper.}
\label{symbols}
% \vspace{0.15in}
\vspace{2mm}
\begin{center}
\begin{small}
\renewcommand{\arraystretch}{1.5}
\begin{tabular}{ccl}
\toprule
Symbol     & Shape & Description  \\
\midrule
$\mathcal{V}$ & - & Set of entities \\
$\mathcal{R}$ & - &  Set of relations \\
$\mathcal{F}$ & - &  Set of factual triples \\
$h, t$  & -  & Head entity and tail entity \\
$r$     & -    & Relation type \\
% $d$     &  $\mathbb{R}$      &  Embedding size\\
% $k$     &  $\mathbb{R}$      &  Embedding size\\
% $m$   &   $\mathbb{R}$      &  Number of component spaces\\
$\mathbb{E}^k$   &   -      &  $k$-dimensional Euclidean space, equivalent to $\mathbb{R}^k$\\
$\mathbb{P}^k$   &   -      &  Elliptic subspace embedded in $\mathbb{R}^k$\\
$\mathbb{Q}^k$   &   -      &  Hyperbolic subspace embedded in $\mathbb{R}^k$\\
$\mathbb{D}^k$   &   -      &  Product manifold with total dimension $k$\\
$\mathbb{X}_i^{k_i}$   &   -      &  The $i$-th component space of product manifold $\mathbb{D}$ \\
% $k$     &  $\mathbb{R}$      &  Embedding size\\
$m$   &   $\mathbb{R}_{+}$      &  Number of component spaces\\
$\beta_r$     &  $\mathbb{R}_{+}$      &  Param. of $r\in \mathcal{R}$ that determines the curvature of manifold\\
$\mathbf{p}_r$     &  $\mathbb{R}_{+}^{k}$    & $r$-specific elliptic weighting vector \\
$\mathbf{q}$ & $\mathbb{R}^k$ & Hyperbolic weighting vector $(-1, 1, \ldots, 1)$ \\
$\mathbf{e}_v$  & $\mathbb{R}^k$  & Embedding of entity 
$v\in\mathcal{V}$ \\
$\mathbf{e}_{v,r}$  & $\mathbb{R}^k$  & $r$-associated representation of entity 
$v\in\mathcal{V}$ \\
% $n$   &   $\mathbb{R}$      &  \makecell[l]{Number of Householder reflections}\\
% $m$   &   $\mathbb{R}$      &  Number of component spaces\\
%$H_u$   & $\mathbb{R}^{k\times k}$      &  Householder matrix derived from unit vector $u\in \mathbb{R}$\\
% $S_e$  & $\mathbb{R}^{d \times k}$    & Representation of entity $e\in \mathcal{E}$ \\
% $S_{e,r}$ & $\mathbb{R}^{d \times k}$ & $r$-specific representation of entity $e\in \mathcal{E}$ \\
% $\mathbf{p}_r$     &  $\mathbb{R}_{+}^{k}$    & $r$-specific elliptic weighting vector \\
% $\beta_r$     &  $\mathbb{R}_{+}$      &  Param. of $r\in \mathcal{R}$ that determines the curvature of manifold\\
$\mathbf{b}_r$     &  $\mathbb{R}^{k}$    & Param. of $r\in \mathcal{R}$ for hyperbolic boost \\
$\mathbf{U}_r$     &  $\mathbb{R}^{k \times k}$    & Param. of $r\in \mathcal{R}$ for generalized orthogonal transformation \\
% $P_r$   &  $\mathbb{R}^{d \times m \times k}$   & Param. of $r\in \mathcal{R}$ for Householder projection axes\\
% $T_r$  & $\mathbb{R}^{d \times m}$  & Param. of $r\in \mathcal{R}$ for Householder projection scalars\\

\bottomrule
\end{tabular}
\end{small}
\end{center}
\vskip -0.1in
\end{table}

\section{Proof of Theorem~\ref{thm-1}}
\label{proof-thm-1}
This theorem is a special case of the ``Cartan-Dieudonn\'e Theorem"~\cite{cartan2012theory,dieudonne1958groupes}, which is one of the fundamental results of geometric algebra~\cite{gallier2011geometric}.
Since this result is not used in the remainder of this paper and the proof is somewhat intricate, we encourage the beginning reader to read the statement and then skip to the next part.
\subsection{Proof of Theorem~\ref{CD} (Cartan-Dieudonn\'e Theorem)}
\begin{theorem}
\label{CD}
{\rm{(Cartan-Dieudonn\'e Theorem).}} In the non-degenerate quadratic space $V$ endowed with $\braket{\cdot,\cdot}_\mathbf{w}$, every element $\mathbf{G}$ of the orthogonal group $\mathbf{O}_\mathbf{w}(k)$ can be expressed as a product of at most $k$ reflections.
\end{theorem}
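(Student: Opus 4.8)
The plan is to prove Theorem~\ref{CD} by induction on the dimension $k$ of the non-degenerate quadratic space $V=(\mathbb{R}^k,\braket{\cdot,\cdot}_\mathbf{w})$, using the generalized Householder reflections $\mathbf{H}(\mathbf{u},\mathbf{w})$ of Equation~(\ref{ref-new}) as the elementary building blocks. The base case $k=1$ is immediate: every isometry of a line is $\pm\mathrm{id}$, and $-\mathrm{id}$ is itself a reflection. For the inductive step, the overarching idea is to \emph{peel off} one reflection so as to produce an isometry fixing a non-isotropic vector $\mathbf{v}$; such a map preserves the non-degenerate hyperplane $\mathbf{v}^\perp$ (with $V=\braket{\mathbf{v}}\oplus\mathbf{v}^\perp$), to which the inductive hypothesis applies, yielding at most $k-1$ further reflections.

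Concretely, I would fix a non-isotropic $\mathbf{v}$ (one exists since $\braket{\cdot,\cdot}_\mathbf{w}$ is non-degenerate). If $\mathbf{G}\mathbf{v}=\mathbf{v}$ we restrict to $\mathbf{v}^\perp$ and finish with $\le k-1$ reflections. Otherwise set $\mathbf{a}=\mathbf{v}-\mathbf{G}\mathbf{v}$ and $\mathbf{b}=\mathbf{v}+\mathbf{G}\mathbf{v}$. Using $\braket{\mathbf{G}\mathbf{v},\mathbf{G}\mathbf{v}}_\mathbf{w}=\braket{\mathbf{v},\mathbf{v}}_\mathbf{w}$ together with the symmetry of the form, one checks $\braket{\mathbf{a},\mathbf{b}}_\mathbf{w}=0$ and $\braket{\mathbf{a},\mathbf{a}}_\mathbf{w}+\braket{\mathbf{b},\mathbf{b}}_\mathbf{w}=4\braket{\mathbf{v},\mathbf{v}}_\mathbf{w}\neq 0$, so at least one of $\mathbf{a},\mathbf{b}$ is non-isotropic. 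When $\mathbf{a}$ is non-isotropic, a short computation gives $\mathbf{H}(\mathbf{a},\mathbf{w})\mathbf{G}\mathbf{v}=\mathbf{v}$, so $\mathbf{H}(\mathbf{a},\mathbf{w})\mathbf{G}$ fixes $\mathbf{v}$ and is a product of $\le k-1$ reflections by the previous paragraph, whence $\mathbf{G}$ is a product of $\le k$ reflections. (The case where $\mathbf{a}$ is isotropic but $\mathbf{b}$ is not gives $\mathbf{H}(\mathbf{b},\mathbf{w})\mathbf{G}\mathbf{v}=-\mathbf{v}$; composing further with $\mathbf{H}(\mathbf{v},\mathbf{w})$ again fixes $\mathbf{v}$, but now costs two reflections and only bounds $\mathbf{G}$ by $k+1$.)

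The main obstacle is therefore the degenerate case in which \emph{every} non-isotropic $\mathbf{v}$ has $\mathbf{v}-\mathbf{G}\mathbf{v}$ isotropic, where the naive peeling overshoots the bound by one. The plan is to extract rigid structure from this hypothesis. The condition reads $\braket{\mathbf{v}-\mathbf{G}\mathbf{v},\mathbf{v}-\mathbf{G}\mathbf{v}}_\mathbf{w}=0$, i.e.\ $\braket{\mathbf{v},\mathbf{G}\mathbf{v}}_\mathbf{w}=\braket{\mathbf{v},\mathbf{v}}_\mathbf{w}$, for all non-isotropic $\mathbf{v}$; since such vectors are dense and both sides are quadratic polynomials in $\mathbf{v}$, the identity holds for all $\mathbf{v}$. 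Polarizing and invoking $\mathbf{G}^{*}=\mathbf{G}^{-1}$ (which follows from $\mathbf{G}^\top\mathrm{diag}(\mathbf{w})\mathbf{G}=\mathrm{diag}(\mathbf{w})$) yields $\mathbf{G}+\mathbf{G}^{-1}=2\mathbf{I}$, that is $(\mathbf{G}-\mathbf{I})^2=\mathbf{0}$. Hence $N:=\mathbf{G}-\mathbf{I}$ is nilpotent with $\mathrm{im}(N)\subseteq\ker(N)$, and the polarized identity $\braket{N\mathbf{x},N\mathbf{y}}_\mathbf{w}=0$ shows that $\mathrm{im}(N)$ is \emph{totally isotropic}.

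It then remains to bound the reflections in this unipotent case. Because $\mathrm{im}(N)$ is totally isotropic inside the non-degenerate space $V$, its dimension $r=\mathrm{rank}(N)$ obeys $2r\le k$ (the Witt-index bound). A separate induction on $r$ closes the argument: for $r\ge 1$ one exhibits two reflections whose product, composed with $\mathbf{G}$, is again unipotent of the same type but with $\mathrm{rank}(N)$ decreased by one, so $\mathbf{G}$ factors into exactly $2r\le k$ reflections. I expect this unipotent sub-case---first establishing $(\mathbf{G}-\mathbf{I})^2=\mathbf{0}$ and then carrying out the rank-lowering reflection construction under the Witt bound---to be the technically delicate heart of the proof, whereas the earlier cases are routine once the generalized Householder identities above are verified.
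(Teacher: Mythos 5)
Your induction skeleton is essentially the paper's: the base case, the reduction to an isometry fixing an anisotropic vector, the one-reflection peeling when $\mathbf{v}-\mathbf{G}\mathbf{v}$ is anisotropic, and the isolation of the exceptional case where $\mathbf{v}-\mathbf{G}\mathbf{v}$ is isotropic for every anisotropic $\mathbf{v}$ all appear there (your $\mathbf{b}=\mathbf{v}+\mathbf{G}\mathbf{v}$ detour is extra, but harmless). Your structural analysis of the exceptional case is also correct and equivalent to the paper's: $(\mathbf{G}-\mathbf{I})^2=\mathbf{0}$ with $\mathrm{im}(N)$ totally isotropic is the same information as the paper's $q(W)=0$ and $W=W^{\perp}$ for $W=(\mathbf{G}-1)V$, since $\mathrm{im}(\mathbf{G}-\mathbf{I})=(\ker(\mathbf{G}-\mathbf{I}))^{\perp}$ for any isometry.

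The genuine gap is in how you close the exceptional case. The rank-lowering induction you propose --- two reflections per unit decrease of $\mathrm{rank}(N)$, ``again unipotent of the same type,'' totalling $2r$ reflections --- cannot be carried out. Unipotent isometries with totally isotropic $\mathrm{im}(\sigma-\mathbf{I})$ of rank one do not exist: writing $\sigma\mathbf{x}=\mathbf{x}+\phi(\mathbf{x})\mathbf{u}$ with $\mathbf{u}$ isotropic, the isometry condition forces $\phi(\mathbf{x})\braket{\mathbf{x},\mathbf{u}}_{\mathbf{w}}=0$ for all $\mathbf{x}$, hence $\phi\equiv 0$; more generally every such isometry has $\det=+1$ and is a product of an even number of reflections, and (by Scherk's theorem) needs exactly $\mathrm{rank}(N)+2$ of them, which forces $\mathrm{rank}(N)$ to be even. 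So an induction that decreases the rank by one must pass through objects that do not exist, and you give no construction of two reflections that would preserve total isotropy of the image while lowering the rank; the count $2r$ is also not the minimal $r+2$, another sign the bookkeeping is off. The paper avoids all of this with a parity argument: in the exceptional case $\mathbf{G}$ is a rotation and $k$ is even, so for any reflection $\tau$ the product $\tau\mathbf{G}$ has determinant $-1$, cannot itself be exceptional, and is therefore a product of at most $k$ reflections by the earlier cases; hence $\mathbf{G}$ is a product of at most $k+1$ reflections, and since $k+1$ reflections would give determinant $-1$ while $\det\mathbf{G}=+1$, the count is at most $k$. Replacing your unipotent sub-case with this parity step (everything you need for it --- evenness of $k$ and $\det\mathbf{G}=+1$ --- already follows from your own structural analysis) repairs the proof.
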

\begin{proof}
(1) The first step is to prove the following lemma: \textit{Suppose that $\mathbf{G}\in\mathbf{O}_\mathbf{w}(k)$ satisfies such a condition: for every anisotropic vector $\mathbf{x}$, the vector $\mathbf{Gx}-\mathbf{x}$ is nonzero and isotropic\footnote[5]{A vector $\mathbf{x}\in V$ is said to be \textbf{isotropic} if $\braket{\mathbf{x},\mathbf{x}}_\mathbf{w}=0$ and \textbf{anisotropic} otherwise.}. Then, $k\geq4$, $k$ is even, and $\mathbf{G}$ is a rotation.}

It is evident that we cannot have $k=1$.
If $k=2$, let $\mathbf{x}$ be an anisotropic vector. 
Since $\mathbf{Gx}-\mathbf{x}$ is isotropic and nonzero, $\mathbf{Gx}$ must be linearly independent from $\mathbf{x}$.
This implies that the determinant of the quadratic form with respect to the basis $\mathbf{Gx}-\mathbf{x}$ and $\mathbf{x}$ is equal to $0$, contradicting the non-degeneracy of the space.
We therefore must have $k\geq3$.

Since $\mathbf{Gx}-\mathbf{x}$ is isotropic for all anisotropic $\mathbf{x}$, we have $q(\mathbf{Gx}-\mathbf{x})=\braket{\mathbf{Gx}-\mathbf{x},\mathbf{Gx}-\mathbf{x}}_\mathbf{w}=0$.
In fact, this holds for all $\mathbf{x}$. To see this, let $\mathbf{y}$ be a nonzero isotropic vector. 
There is a plane containing $\mathbf{y}$ and splitting the space $V$~\cite{o2013introduction}, hence there is a vector $\mathbf{z}$ with $q(\mathbf{z})\neq0$ and $\braket{\mathbf{y},\mathbf{z}}_\mathbf{w}=0$. Then, we have $q(\mathbf{y}+\epsilon\mathbf{z})\neq0$ for any $\epsilon\in\mathbb{R}\hspace{-0.5mm}\setminus\hspace{-0.5mm}\{0\}$, hence 
\begin{equation}
q(\mathbf{G}(\mathbf{y}+\epsilon\mathbf{z})-(\mathbf{y}+\epsilon\mathbf{z}))=0, \quad q(\mathbf{G}\mathbf{z}-\mathbf{z})=0.
\nonumber
\end{equation}
It follows that
\begin{equation}
q(\mathbf{G}\mathbf{y}-\mathbf{y})+2\epsilon\braket{\mathbf{G}\mathbf{y}-\mathbf{y},\mathbf{G}\mathbf{z}-\mathbf{z}}_\mathbf{w}=0.
\nonumber
\end{equation}
If we substitute $\epsilon=1$, and then $\epsilon=-1$, and then add, we obtain $q(\mathbf{G}\mathbf{y}-\mathbf{y})=0$ as we asserted.
Therefore, if we define the space $W:=(\mathbf{G}-1)V$, we have $q(\mathbf{y})=0$ for any $y\in W$, i.e., $q(W)=0$.
Then, the orthogonal complement of $W$ is $W^\perp=\{\mathbf{x}\in V \mid \forall\mathbf{y}\in W, \braket{\mathbf{x},\mathbf{y}}_\mathbf{w} = 0\}$.
Now for any $\mathbf{x}\in V$ and $\mathbf{y}\in W^\perp$, we have 
\begin{equation}
\begin{split}
\braket{\mathbf{x},\mathbf{G}\mathbf{y}-\mathbf{y}}_\mathbf{w}&=\braket{\mathbf{G}\mathbf{x},\mathbf{G}\mathbf{y}-\mathbf{y}}_\mathbf{w}-\braket{\mathbf{G}\mathbf{x}-\mathbf{x},\mathbf{G}\mathbf{y}-\mathbf{y}}_\mathbf{w}\\
&=\braket{\mathbf{G}\mathbf{x},\mathbf{G}\mathbf{y}-\mathbf{y}}_\mathbf{w}\\
&=\braket{\mathbf{G}\mathbf{x},\mathbf{G}\mathbf{y}}_\mathbf{w}-\braket{\mathbf{G}\mathbf{x},\mathbf{y}}_\mathbf{w}\\
&=\braket{\mathbf{x},\mathbf{y}}_\mathbf{w}-\braket{\mathbf{G}\mathbf{x},\mathbf{y}}_\mathbf{w}\\
&=-\braket{\mathbf{G}\mathbf{x}-\mathbf{x},\mathbf{y}}_\mathbf{w}\\
&=0.
\nonumber
\end{split}
\end{equation}
Hence $\mathbf{G}\mathbf{y}-\mathbf{y}$ is perpendicular to all of $V$. Since $V$ is non-degenerate, we conclude that $\mathbf{G}\mathbf{y}=\mathbf{y}$ for all $\mathbf{y}\in W^\perp$.
Then, $q(W^\perp)=0$ by the given condition. Thus we get $W=W^\perp$.
Therefore, $k=\mathrm{dim}(V)=\mathrm{dim}(W)+\mathrm{dim}(W^\perp)=2\mathrm{dim}(W)$ is even, hence at least $4$.
Moreover, since $\mathbf{G}$  acts as the identity on a maximal totally isotropic subspace of $V$~\cite{clark2013quadratic}, $\mathbf{G}$ is a rotation with  the determinant equal to $+1$, completing Step 1.

(2) Now we can prove the theorem. The proof is by induction on $k$.
For $k=1$, the result is trivial, so let $k>1$.

\textbf{Case 1:} Suppose there exists an anisotropic vector $\mathbf{x}\in V$ such that $\mathbf{G}\mathbf{x}=\mathbf{x}$.
Then the restriction of $\mathbf{G}$ to the hyperplane $\zeta$ orthogonal to $\mathbf{x}$ is an element of $\mathbf{O}_\mathbf{w}(k-1)$.
By the inductive assumption, this restriction is a product of at most $k-1$ reflections taken with respect to lines in $\zeta$. 
One can naturally view each of them as the reflection on all of $V$, and the same product of reflections agrees with $\mathbf{G}$ on $\zeta$.
Moreover, it also agrees with $\mathbf{G}$ on $\mathbf{x}$, since $\mathbf{G}$ and all of the reflections are equal to the identity on $\mathbf{x}$. Thus $\mathbf{G}$ is itself equal to the product of the at most $k-1$ reflections.

\textbf{Case 2:} Next suppose that there is an anisotropic vector $\mathbf{x}$ such that $\mathbf{G}\mathbf{x}-\mathbf{x}$ is anisotropic, i.e., $q(\mathbf{G}\mathbf{x}-\mathbf{x})\neq0$.
To facilitate presentation, we denote the elementary reflection in Equation (\ref{ref-new}) as $\tau_\mathbf{u}$.
In this way, we have
\begin{equation}
\begin{split}
\tau_{\mathbf{G}\mathbf{x}-\mathbf{x}}(\mathbf{Gx})&=\mathbf{Gx}-2\frac{\braket{\mathbf{Gx},\mathbf{Gx}-\mathbf{x}}_\mathbf{w}}{\braket{\mathbf{Gx}-\mathbf{x},\mathbf{Gx}-\mathbf{x}}_\mathbf{w}}(\mathbf{Gx}-\mathbf{x})\\
&=\mathbf{Gx}-2\frac{\braket{\mathbf{x},\mathbf{x}}_\mathbf{w}-\braket{\mathbf{G}\mathbf{x},\mathbf{x}}_\mathbf{w}}{\braket{\mathbf{Gx}-\mathbf{x},\mathbf{Gx}-\mathbf{x}}_\mathbf{w}}(\mathbf{Gx}-\mathbf{x})\\
&=\mathbf{Gx}-\frac{\braket{\mathbf{Gx}-\mathbf{x},\mathbf{Gx}-\mathbf{x}}_\mathbf{w}}{\braket{\mathbf{Gx}-\mathbf{x},\mathbf{Gx}-\mathbf{x}}_\mathbf{w}}(\mathbf{Gx}-\mathbf{x})\\
&=\mathbf{Gx}-(\mathbf{Gx}-\mathbf{x})\\
&=\mathbf{x},
\nonumber
\end{split}
\end{equation}
which implies that $\tau_{\mathbf{G}\mathbf{x}-\mathbf{x}}\mathbf{G}$ leaves $\mathbf{x}$ fixed.
Based on the first case, the orthogonal matrix $\tau_{\mathbf{G}\mathbf{x}-\mathbf{x}}\mathbf{G}$ is a product of at most $k-1$ reflections, hence $\mathbf{G}$ is a product of at most $k$ reflections.

\textbf{Case 3:} According to Case 1 and 2, any $\mathbf{G}$ that does not satisfy the condition in Step 1 is a product of at most $k$ reflections. 
Now, let's consider the $\mathbf{G}$ that for every anisotropic vector $\mathbf{x}\in V$, $\mathbf{Gx}-\mathbf{x}$ is isotropic and nonzero.
Based on Step 1, we conclude that $k$ is even and $\mathbf{G}$ is a rotation matrix.
Then, $\mathbf{G}'=\tau\mathbf{G}$ is a reflection that cannot satisfy the condition in Step 1.
Hence $\mathbf{G}'$ can be expressed as a product of at most $k$ reflections based on Case 1 and 2.
Therefore, $\mathbf{G}$ is itself a product of at most $k+1$ reflections.
However, $\mathbf{G}$ cannot be a product of $k+1$ reflections, since $k+1$ is odd and $\mathbf{G}$ is a rotation matrix. 
Hence $\mathbf{G}$ is a product of at most $k$ reflections, qed.
% On the whole, every generalized orthogonal matrix $\$
\end{proof}

\subsection{Proof of Theorem~\ref{thm-1}}
\begin{proof}
First, the designed mapping $\mathrm{Orth}$ is a product of $n$ elementary reflections, and its output is a generalized orthogonal matrix that satisfies the quadratic inner product invariance, i.e., $(\mathbf{H}_n\cdots\mathbf{H}_1)^\top\mathrm{diag}(\mathbf{w})(\mathbf{H}_n\cdots\mathbf{H}_1)=\mathrm{diag}(\mathbf{w})$. Therefore, we have $\mathrm{Image}(\mathrm{Orth})\subseteq\mathbf{O}_\mathbf{w}(k)$. 
Then, according to the Cartan-Dieudonn\'e Theorem, every generalized orthogonal matrix $\mathbf{G}\in\mathbf{O}_\mathbf{w}(k)$ can be expressed as $\prod_{c=1}^z\mathbf{H}(\mathbf{u}_c,\mathbf{w})$ for a certain $z\leq k$.
Since the $k\times k$ identity matrix $\mathbf{I}\in\mathrm{Image}(\mathrm{Orth})$ for any value of $n$, we can get $\mathbf{G}=\mathbf{G}\mathbf{I}=\prod_{c=1}^k\mathbf{H}(\mathbf{u}_c,\mathbf{w})$.
Therefore, we also have $\mathbf{O}_\mathbf{w}(k)\subseteq\mathrm{Image}(\mathrm{Orth})$ when $n=k$.
On the whole, we conclude that $\mathrm{Image}(\mathrm{Orth})=\mathbf{O}_\mathbf{w}(k)$.
\end{proof}

\section{Proof of Claim~\ref{pro1}}
\label{proof-pro-1}
\begin{proof}
We show that the elliptic parameterization is equivalent to Euclidean parameterization equipped with element-wise scaling transformations determined by $\sqrt{\mathbf{p}_r}$.
Specifically, the distance function of elliptic parameterization in Equation (\ref{ell_score}) can be reformulated as follows:
\begin{equation}
\begin{split}
d_{\mathbb{P}}(\mathbf{e}'_{h,r},\mathbf{e}_{t,r})&=\sqrt{(\mathbf{e}'_{h,r}-\mathbf{e}_{t,r})^\top\mathrm{diag}(\mathbf{p}_r)(\mathbf{e}'_{h,r}-\mathbf{e}_{t,r})}\\
&=\sqrt{(\mathrm{diag}(\sqrt{\mathbf{p}_r})\mathbf{e}'_{h,r}-\mathrm{diag}(\sqrt{\mathbf{p}_r})\mathbf{e}_{t,r})^\top(\mathrm{diag}(\sqrt{\mathbf{p}_r})\mathbf{e}'_{h,r}-\mathrm{diag}(\sqrt{\mathbf{p}_r})\mathbf{e}_{t,r})}\\
&=\Vert\mathrm{diag}(\sqrt{\mathbf{p}_r})\mathbf{e}'_{h,r}-\mathrm{diag}(\sqrt{\mathbf{p}_r})\mathbf{e}_{t,r}\Vert\\
&=\Vert\mathrm{diag}(\sqrt{\mathbf{p}_r})\prod_{i=1}^k \mathbf{H}(\mathbf{U}_r[i],\mathbf{p}_r)\mathbf{e}_{h,r}-\mathrm{diag}(\sqrt{\mathbf{p}_r})\mathbf{e}_{t,r}\Vert\\
&=\Vert\mathrm{diag}(\sqrt{\mathbf{p}_r})(\mathbf{I}-2\frac{\mathbf{U}[k]\mathbf{U}[k]^\top\mathrm{diag}(\mathbf{p}_r)}{\mathbf{u}^\top\mathrm{diag}(\mathbf{p}_r)\mathbf{u}})\prod_{i=1}^{k-1} \mathbf{H}(\mathbf{U}_r[i],\mathbf{p}_r)\mathbf{e}_{h,r}-\mathrm{diag}(\sqrt{\mathbf{p}_r})\mathbf{e}_{t,r}\Vert\\
&=\Vert(\mathbf{I}-2\frac{\mathrm{diag}(\sqrt{\mathbf{p}_r})\mathbf{U}[k]\mathbf{U}[k]^\top\mathrm{diag}(\sqrt{\mathbf{p}_r})^\top}{\mathbf{U}[k]^\top\mathrm{diag}(\sqrt{\mathbf{p}_r})^\top\mathrm{diag}(\sqrt{\mathbf{p}_r})\mathbf{U}[k]})\mathrm{diag}(\sqrt{\mathbf{p}_r})\prod_{i=1}^{k-1} \mathbf{H}(\mathbf{U}_r[i],\mathbf{p}_r)\mathbf{e}_{h,r}-\mathrm{diag}(\sqrt{\mathbf{p}_r})\mathbf{e}_{t,r}\Vert\\
&=\Vert\mathbf{H}(\mathrm{diag}(\sqrt{\mathbf{p}_r})\mathbf{U}_r[k],\mathbf{1})\mathrm{diag}(\sqrt{\mathbf{p}_r})\prod_{i=1}^{k-1} \mathbf{H}(\mathbf{U}_r[i],\mathbf{p}_r)\mathbf{e}_{h,r}-\mathrm{diag}(\sqrt{\mathbf{p}_r})\mathbf{e}_{t,r}\Vert\\
&=\Vert\prod_{i=1}^{k} \mathbf{H}(\mathrm{diag}(\sqrt{\mathbf{p}_r})\mathbf{U}_r[i],\mathbf{1})\mathrm{diag}(\sqrt{\mathbf{p}_r})\mathbf{e}_{h}-\mathrm{diag}(\sqrt{\mathbf{p}_r})\mathbf{e}_{t}\Vert\\
&=\Vert\mathrm{Orth}(\mathrm{diag}(\sqrt{\mathbf{p}_r})\mathbf{U}_r,\mathbf{1})\mathrm{diag}(\sqrt{\mathbf{p}_r})\mathbf{e}_{h}-\mathrm{diag}(\sqrt{\mathbf{p}_r})\mathbf{e}_{t}\Vert.
\nonumber
\end{split}
\end{equation}
This expression implies that the head embedding $\mathbf{e}_h\in\mathbb{R}^k$ is first scaled by $\sqrt{\mathbf{p}_r}$, and then performs a Euclidean orthogonal transformation, with the expectation of being similar to the tail embedding $\mathbf{e}_t\in\mathbb{R}^k$, which is also scaled by $\sqrt{\mathbf{p}_r}$.
% In the final formula, we can observe that a Euclidean orthogonal relation transformation is modeled between the head and tail embeddings scaled by $\sqrt{\mathbf{p}_r}$.
\end{proof}

\section{Proof of Proposition~\ref{pro2}}
\label{proof-pro-2}
\begin{proof}
% We first prove that
For any $\mathbf{G}\in\mathbf{O}_\mathbf{q}(k)$ and $\mathbf{x}\in\mathbb{Q}_\beta^k$, the hyperbolic orthogonal transformation $\mathbf{Gx}$ can be expressed in the following block form:
\begin{equation}
\begin{split}
\mathbf{Gx} = \begin{bmatrix}
  a & \mathbf{b}^\top\\ 
  \mathbf{c} & \mathbf{A}
\end{bmatrix}\begin{bmatrix}
  x_1\\ 
  \mathbf{x}_{2:k}
\end{bmatrix}=
\begin{bmatrix}
  ax_1 + \mathbf{b}^\top\mathbf{x}_{2:k}\\ 
  x_1\mathbf{c} + \mathbf{A}\mathbf{x}_{2:k}
\end{bmatrix},\nonumber
\end{split}
\end{equation}
where $a\in\mathbb{R}$, $\mathbf{b},\mathbf{c}\in\mathbb{R}^{k-1}$ and $\mathbf{A}\in\mathbb{R}^{(k-1)\times(k-1)}$.
We prove that (1) the absolute value of the first element of $\mathbf{G}$ is greater than or equal to $1$, i.e., $|G_{11}|=|a|\geq1$; (2) when $a\geq1$ (or $a\leq-1$), the transformation $\mathbf{Gx}$ preserves (or reverses) the sign of the first element of $\mathbf{x}$, i.e., $\mathrm{sign}(x_1)=\mathrm{sign}(ax_1 + \mathbf{b}^\top\mathbf{x}_{2:k})$ for $a\geq1$ and $\mathrm{sign}(x_1)\neq\mathrm{sign}(ax_1 + \mathbf{b}^\top\mathbf{x}_{2:k})$ for $a\leq-1$.

According to the definition of generalized orthogonal matrix in Equation (\ref{inner-inv}), one can verify that the hyperbolic orthogonal matrix $\mathbf{G}\in\mathbf{O}_\mathbf{q}(k)$ satisfies the inner product invariance:
\begin{equation}
% \label{inner-inv}
% \small
    \mathbf{G}^\top\mathrm{diag}(\mathbf{q})\mathbf{G}=\mathrm{diag}(\mathbf{q}) \Rightarrow \mathbf{G}\mathrm{diag}(\mathbf{q})\mathbf{G}^\top=\mathrm{diag}(\mathbf{q}).\nonumber
\end{equation}
% One can easily verify that $\mathbf{G}$ also satisfies:
% \begin{equation}
% % \label{inner-inv}
% % \small
%     \mathbf{G}\mathrm{diag}(\mathbf{q})\mathbf{G}^\top=\mathrm{diag}(\mathbf{q}).
% \end{equation}
Since $\mathbf{q}=(-1,1,\ldots,1)\in\mathbb{R}^k$, the first element of $\mathbf{G}$ (i.e., $G_{11}=a$) requires that
% \begin{equation}
%     G_{11}^2-\sum_{i=2}^kG_{i1}^2=1 \Rightarrow
%     |G_{11}|=\sqrt{1+\sum_{i=2}^kG_{i1}^2} \geq 1.
% \end{equation}
\begin{equation}
    a^2-\Vert\mathbf{b}\Vert^2=1 \Rightarrow
    |a|=\sqrt{1+\Vert\mathbf{b}\Vert^2} \geq 1.\nonumber
\end{equation}
% % \end{proof}
% % We then prove that every hyperbolic orthogonal matrix $\mathbf{G}\in\mathbf{O}_\mathbf{q}(k)$ falls into two subgroups according to the sign of $G_{11}$.
% For $\mathbf{A}\in\mathbb{R}^{(k-1)\times(k-1)}$, $\mathbf{b},\mathbf{c}\in\mathbb{R}^{k-1}$ and $a\in\mathbb{R}$, the hyperbolic orthogonal transformation $\mathbf{Gx}$ can be expressed in the following block form:
% \begin{equation}
% \begin{split}
% \mathbf{Gx} = \begin{bmatrix}
%   a & \mathbf{b}^\top\\ 
%   \mathbf{c} & \mathbf{A}
% \end{bmatrix}\begin{bmatrix}
%   x_1\\ 
%   \mathbf{x}_{2:k}
% \end{bmatrix}=
% \begin{bmatrix}
%   ax_1 + \mathbf{b}^\top\mathbf{x}_{2:k}\\ 
%   x_1\mathbf{c} + \mathbf{A}\mathbf{x}_{2:k}
% \end{bmatrix}.
% \end{split}
% \end{equation}
% % When $a=G_{11}>0$, 
% % Since the vector $\mathbf{x}$ lies on the hyperboloid defined in Equation (\ref{hyp_define}), we have that $x_1>\Vert\mathbf{x}_{2:k}\Vert$.
% We show that when $G_{11}=a\geq1$ (or $G_{11}=a\leq-1$), the transformation $\mathbf{Gx}$ preserves (or reverses) the sign of the first element of $\mathbf{x}$, i.e., $\mathrm{sign}(x_1)=\mathrm{sign}(ax_1 + \mathbf{b}^\top\mathbf{x}_{2:k})$ for $a\geq1$ and $\mathrm{sign}(x_1)\neq\mathrm{sign}(ax_1 + \mathbf{b}^\top\mathbf{x}_{2:k})$ for $a\leq-1$.
We then consider the following square equation:
\begin{equation}
\begin{split}
    (\frac{\mathbf{b}}{a}+\frac{\mathbf{x}_{2:k}}{x_1})^2&=(\frac{\mathbf{b}}{a})^2+2\frac{\mathbf{b}^\top\mathbf{x}_{2:k}}{ax_1}+(\frac{\mathbf{x}_{2:k}}{x_1})^2\\&=(\frac{\Vert\mathbf{b}\Vert^2}{a^2})+2\frac{\mathbf{b}^\top\mathbf{x}_{2:k}}{ax_1}+(\frac{\Vert\mathbf{x}_{2:k}\Vert^2}{x_1^2})
    \\&=(\frac{a^2-1}{a^2})+2\frac{\mathbf{b}^\top\mathbf{x}_{2:k}}{ax_1}+(\frac{\Vert\mathbf{x}_{2:k}\Vert^2}{x_1^2})\geq 0.\nonumber
\end{split}
\end{equation}
Since $\mathbf{x}\in\mathbb{Q}_\beta^k$, one can easily derive that $x_1^2>\Vert\mathbf{x}_{2:k}\Vert^2$ according to the definition of  hyperboloid in Equation (\ref{hyp_define}).
Based on this, we further have the following inequality:
\begin{equation}
    -2\frac{\mathbf{b}^\top\mathbf{x}_{2:k}}{ax_1}\leq(\frac{a^2-1}{a^2})+(\frac{\Vert\mathbf{x}_{2:k}\Vert^2}{x_1^2})<(\frac{a^2-1}{a^2})+1<2.\nonumber
\end{equation}
When $a\geq+1$ and $x_1>0$, we have 
\begin{equation}
   2ax_1>-2\mathbf{b}^\top\mathbf{x}_{2:k} \Rightarrow ax_1+\mathbf{b}^\top\mathbf{x}_{2:k}>0.\nonumber
\end{equation}
When $a\geq+1$ and $x_1<0$, we have 
\begin{equation}
   2ax_1<-2\mathbf{b}^\top\mathbf{x}_{2:k} \Rightarrow ax_1+\mathbf{b}^\top\mathbf{x}_{2:k}<0.\nonumber
\end{equation}
When $a\leq-1$ and $x_1>0$, we have 
\begin{equation}
   2ax_1<-2\mathbf{b}^\top\mathbf{x}_{2:k} \Rightarrow ax_1+\mathbf{b}^\top\mathbf{x}_{2:k}<0.\nonumber
\end{equation}
When $a\leq-1$ and $x_1<0$, we have 
\begin{equation}
   2ax_1>-2\mathbf{b}^\top\mathbf{x}_{2:k} \Rightarrow ax_1+\mathbf{b}^\top\mathbf{x}_{2:k}>0.\nonumber
\end{equation}
% $-2\mathbf{b}^\top\mathbf{x}_{2:k}<2ax_1 \Rightarrow ax_1+\mathbf{b}^\top\mathbf{x}_{2:k}>0$
Therefore, we can observe that the hyperbolic orthogonal matrix $\mathbf{G}$ with its first element $G_{11}=a\geq+1$ preserves the sign of $x_1$, while $\mathbf{G}$ with $G_{11}\leq-1$ reverses the sign of $x_1$.
Moreover, the set of all $\mathbf{G}$ with $G_{11}\geq+1$, namely the positive subset $\mathbf{O}^+_{\mathbf{q}}(k)=\{\mathbf{G}\in\mathbf{O}_{\mathbf{q}}(k): G_{11}\geq+1\}$, forms a multiplicative subgroup of $\mathbf{O}_{\mathbf{q}}(k)$, since (1) for any $\mathbf{G}_1,\mathbf{G}_2\in\mathbf{O}^+_{\mathbf{q}}(k)$, their multiplication also preserves the quadratic inner product of $\mathbf{x}$ and the sign of $x_1$, i.e., $\mathbf{G}_1\mathbf{G}_2\in\mathbf{O}^+_{\mathbf{q}}(k)$; (2) $\mathbf{O}^+_{\mathbf{q}}(k)$ contains the identity matrix $\mathbf{I}$; (3) every matrix $\mathbf{G}\in\mathbf{O}^+_{\mathbf{q}}(k)$ has an inverse $\mathbf{G}^{-1}\in\mathbf{O}^+_{\mathbf{q}}(k)$ such that $\mathbf{G}\mathbf{G}^{-1}=\mathbf{I}$.
% Specifically, for any $\mathbf{G}_1,\mathbf{G}_2\in\mathbf{O}^+_{\mathbf{q}}(k)$, their  multiplication also preserves the quadratic inner product and the sign of $x_1$, i.e., $\mathbf{G}_1\mathbf{G}_2\in\mathbf{O}^+_{\mathbf{q}}(k)$
\end{proof}

\section{Proof of Proposition~\ref{pos-map}}
\label{proof-pro-3}
\begin{proof}
The positive hyperbolic orthogonal matrix $\mathbf{G}\in\mathbf{O}^+_\mathbf{q}(k)$ can be written in block form as follows:
\begin{equation}
\mathbf{G} = \begin{bmatrix}
  a & \mathbf{b}^\top\\ 
  \mathbf{c} & \mathbf{A}
\end{bmatrix},
\nonumber
\end{equation}
where $a\geq+1$, $\mathbf{b},\mathbf{c}\in\mathbb{R}^{k-1}$ and $\mathbf{A}\in\mathbb{R}^{(k-1)\times(k-1)}$. Since $\mathbf{G}^\top\mathrm{diag}(\mathbf{q})\mathbf{G}=\mathbf{G}\mathrm{diag}(\mathbf{q})\mathbf{G}^\top=\mathrm{diag}(\mathbf{q})$, we get
\begin{equation}
    \begin{bmatrix}
  a & \mathbf{c}^\top\\ 
  \mathbf{b} & \mathbf{A}^\top
\end{bmatrix}\begin{bmatrix}
  -a & -\mathbf{b}^\top\\ 
  \mathbf{c} & \mathbf{A}
\end{bmatrix}=\begin{bmatrix}
  a & \mathbf{b}^\top\\ 
  \mathbf{c} & \mathbf{A}
\end{bmatrix}\begin{bmatrix}
  -a & -\mathbf{c}^\top\\ 
  \mathbf{b} & \mathbf{A}^\top
\end{bmatrix}=\begin{bmatrix}
  -1 & 0\\ 
  0 & \mathbf{I}
\end{bmatrix},
\nonumber
\end{equation}
then
\begin{equation}
\begin{split}
\mathbf{A}^\top\mathbf{A}&=\mathbf{I}+\mathbf{b}\mathbf{b}^\top,\\
\mathbf{c}^\top\mathbf{c}&=a^2-1,\\
\mathbf{A}^\top\mathbf{c}&=a\mathbf{b},\\
\mathbf{b}^\top\mathbf{b}&=a^2-1,\\
\mathbf{A}\mathbf{b}&=a\mathbf{c}.
\nonumber
\end{split}
\end{equation}
From $\mathbf{A}^\top\mathbf{A}=\mathbf{I}+\mathbf{b}\mathbf{b}^\top$, we get that $\mathbf{A}^\top\mathbf{A}$ is symmetric and positive definite.
Geometrically, it is well known that $\frac{\mathbf{b}\mathbf{b}^\top}{\mathbf{b}^\top\mathbf{b}}$ is the orthogonal projection onto the line determined by $\mathbf{b}$.
Consequently, $\mathbf{b}\mathbf{b}^\top$ has the eigenvalue $0$ with multiplicity $k-2$ and the eigenvalue $\mathbf{b}^\top\mathbf{b}=a^2-1$ with multiplicity $1$.
The eigenvectors associated with $0$ are orthogonal to $\mathbf{b}$ and the eigenvectors associated with $a^2-1$ are proportional with $\mathbf{b}$. It follows that $\mathbf{I}+\mathbf{b}\mathbf{b}^\top$ has the eigenvalue $1$ with multiplicity $k-2$ and the eigenvalue $a^2$ with multiplicity $1$, the eigenvectors being as before. 
Now, $\mathbf{A}$ has the polar form $\mathbf{A}=\mathbf{Q}\mathbf{S}$, where $\mathbf{Q}$ is a Euclidean orthogonal matrix, $\mathbf{S}$ is a positive semi-definite symmetric matrix and $\mathbf{S}^2=\mathbf{S}^\top\mathbf{S}=\mathbf{A}^\top\mathbf{A}=\mathbf{I}+\mathbf{b}\mathbf{b}^\top$.
Since $a\geq+1$, $\mathbf{S}=\sqrt{\mathbf{I}+\mathbf{b}\mathbf{b}^\top}$ has the eigenvalue $1$ with multiplicity $k-2$ and the eigenvalue $a$ with multiplicity $1$, the eigenvectors being as before.
Then, since $\mathbf{b}$ is an eigenvector of $\mathbf{S}$ for the eigenvalue $a$, we have
\begin{equation}
\mathbf{A}\mathbf{b}=\mathbf{Q}\mathbf{S}\mathbf{b}=\mathbf{Q}(\mathbf{S}\mathbf{b})=\mathbf{Q}(a\mathbf{b})=a\mathbf{Q}\mathbf{b}.
\nonumber
\end{equation}
Since we also have $\mathbf{A}\mathbf{b}=a\mathbf{c}$, this implies
\begin{equation}
a\mathbf{Q}\mathbf{b}=a\mathbf{c} \Rightarrow \mathbf{Q}\mathbf{b}=\mathbf{c}.
\nonumber
\end{equation}
It follows that 
\begin{equation}
\mathbf{G} = \begin{bmatrix}
  a & \mathbf{b}^\top\\ 
  \mathbf{c} & \mathbf{A}
\end{bmatrix}=\begin{bmatrix}
  a & \mathbf{b}^\top\\ 
  \mathbf{Q}\mathbf{b} & \mathbf{Q}\mathbf{S}
\end{bmatrix}=\begin{bmatrix}
  1 & 0\\ 
  0 & \mathbf{Q}
\end{bmatrix}\begin{bmatrix}
  \sqrt{\Vert\mathbf{b}\Vert^2+1} & \mathbf{b}^\top\\ 
  \mathbf{b} & \sqrt{\mathbf{I}+\mathbf{b}\mathbf{b}^\top}
\end{bmatrix}.
\nonumber
\end{equation}
Based on Theorem~\ref{thm-1}, we can further parameterize the Euclidean orthogonal matrix $\mathbf{Q}\in\mathbb{R}^{(k-1)\times(k-1)}$ as the composition of $k-1$ Euclidean Householder reflections:
\begin{equation}
\mathbf{G} = \begin{bmatrix}
  1 & 0\\ 
  0 & \mathrm{Orth}(\mathbf{U},\mathbf{1})
\end{bmatrix}\begin{bmatrix}
  \sqrt{\Vert\mathbf{b}\Vert^2+1} & \mathbf{b}^\top\\ 
  \mathbf{b} & \sqrt{\mathbf{I}+\mathbf{b}\mathbf{b}^\top}
\end{bmatrix}.
\nonumber
\end{equation}
\end{proof}

\section{Proof of Claim~\ref{pro-ab}}
\label{proof-pro-4}
\begin{proof}
Our proposed GoldE framework models each relation $r$ as a generalized orthogonal transformation between entities. 
To facilitate presentation, we denote the $r$-specific generalized orthogonal matrix derived from Equation (\ref{mix-orth}) as $\mathbf{G}_r$:
\begin{align}
{\mathbf{G}}_r^{(i)} = 
  \begin{cases}
    \mathrm{Orth}_\mathbb{P}(\mathbf{U}_r^{(i)}, \mathbf{p}_r^{(i)}), & \text{for } \mathbb{X}_i=\mathbb{P} \\[1ex]
    \mathrm{Orth}_\mathbb{Q}(\mathbf{U}_r^{(i)},\mathbf{b}_r^{(i)}), & \text{for } \mathbb{X}_i=\mathbb{Q}
  \end{cases}
  .\nonumber
\end{align}
% \begin{equation}
%     \begin{split}
% \widetilde{Q}_r&={\rm Rot\mbox{-}H}({U_r[i]}),\\
% W_{r,1}&={\rm Pro\mbox{-}H}(P_{r,1}[i], T_{r,1}[i]),\\
% W_{r,2}&={\rm Pro\mbox{-}H}(P_{r,2}[i], T_{r,2}[i]). \nonumber
%     \end{split}
% \end{equation}
% our GoldE framework represents each relation $r$ as a generalized orthogonal transformation between entities. 
Due to the orthogonality of the relation matrix and the geometric unification of the embedding space, GoldE is capable of simultaneously capturing the crucial logical patterns and the inherent topological structures in KGs.
For simplicity, we omit the indices $(i)$ of component spaces in the following proofs. 
% \subsection{Proof of Claim \ref{claim-4}}
% \label{proof-claim-4}

\textbf{Symmetry/antisymmetry pattern:} A relation $r$ is symmetric (antisymmetric) if $\forall{x, y}$
\begin{equation}
    r(x,y)\Rightarrow r(y,x) \quad (r(x,y)\Rightarrow \neg r(y,x)). \nonumber
\end{equation}
% \begin{proof}
If $r(x,y)$ and $r(y,x)$ hold, we have
\begin{equation}
\mathbf{e}_y=\mathbf{G}_r\mathbf{e}_x \land \mathbf{e}_x=\mathbf{G}_r\mathbf{e}_y \Rightarrow \mathbf{G}_r\mathbf{G}_r=\mathbf{I}.\nonumber
\end{equation}
Otherwise, if $r(x,y)$ and $\neg r(y,x)$ hold, we have
\begin{equation}
\mathbf{e}_y=\mathbf{G}_r\mathbf{e}_x \land \mathbf{e}_x \neq \mathbf{G}_r\mathbf{e}_y \Rightarrow \mathbf{G}_r\mathbf{G}_r\neq\mathbf{I}.\nonumber
\end{equation}
% \end{proof}

% \subsection{Proof of Claim \ref{claim-5}}
% \label{proof-claim-5}
\textbf{Inversion pattern:} A relation $r_1$ is inverse to relation $r_2$ if $\forall{x, y}$
\begin{equation}
    r_2(x,y)\Rightarrow r_1(y,x) \nonumber.               
\end{equation}
% \begin{proof}
If $r_1(x,y)$ and $r_2(y,x)$ hold, we have
\begin{equation}
\mathbf{e}_y=\mathbf{G}_{r_1}\mathbf{e}_x \land \mathbf{e}_x=\mathbf{G}_{r_2}\mathbf{e}_y \Rightarrow \mathbf{G}_{r_1}=\mathbf{G}_{r_2}^\top.\nonumber
\end{equation}
% \end{proof}

% \subsection{Proof of Claim \ref{claim-6}}
% \label{proof-claim-6}
\textbf{Composition pattern:} A relation $r_1$ is composed of relation $r_2$ and relation $r_3$ if $\forall{x, y, z}$
\begin{equation}
    r_2(x,y)\land r_3(y,z)\Rightarrow r_1(x,z). \nonumber
\end{equation}
% \begin{proof}
If $r_1(x,z), r_2(x,y)$ and $r_3(y,z)$ hold, we have
\begin{equation}
\mathbf{e}_z=\mathbf{G}_{r_1}\mathbf{e}_x \land \mathbf{e}_y=\mathbf{G}_{r_2}\mathbf{e}_x \land \mathbf{e}_z=\mathbf{G}_{r_3}\mathbf{e}_y \Rightarrow \mathbf{G}_{r_1}=\mathbf{G}_{r_3}\mathbf{G}_{r_2}.\nonumber
\end{equation}
% \end{proof}

\textbf{Cyclical and hierarchical structures:} The fidelity of representations arises from the correspondence between the structure of the data and the geometry of the space~\cite{mix1}.
Our GoldE is endowed with a product manifold composed of multiple model spaces, enabling improved representations by better matching the geometry of the embedding space to the heterogeneous structures of KGs.
On the one hand, the orthogonal relation transformations in the elliptic component spaces are naturally suitable for capturing the cyclical structures~\cite{sph1,sph2}.
On the other hand, the hyperbolic orthogonal transformation in Equation (\ref{new_hyp_orth}) inherently encodes the  hierarchical structures---the Euclidean orthogonal matrix models the transformation between entities at the same level of hierarchies, and the hyperbolic boost matrix models the transformation between entities at different levels of hierarchies~\cite{hyp_hierachy}.
\end{proof}

\section{Efficient Computation}
\label{efficient}
The training time cost of GoldE primarily stems from the elliptic and hyperbolic orthogonal relation transformations in Equation (\ref{ell_orth}) and (\ref{new_hyp_orth}), where $k$ matrix-vector multiplications incur the time complexity of $O(k^3)$.
We show that these matrix-vector multiplications can be entirely replaced by vector-vector operations to achieve efficient computation.

For the elliptic orthogonal relation transformations in Equation (\ref{ell_orth}), the entity embedding $\mathbf{e}_{h,r}$ iteratively performs $k$ elliptic Householder reflections.
Each iteration can be achieved by vector-vector operations as follows:
\begin{equation}
\begin{split}
\label{fast_orth}
    \mathbf{e}_{h,r}^{i+1}&=\mathbf{H}(\mathbf{U}_r[i],\mathbf{p}_r)\mathbf{e}_{h,r}^i 
    % &=\mathbf{x}-2\mathrm{Pro}_\mathbf{u}(\mathbf{x})\\
    = (\mathbf{I}-2\frac{\mathbf{U}_r[i]\mathbf{U}_r[i]^\top\mathrm{diag}(\mathbf{p}_r)}{\mathbf{U}_r[i]^\top\mathrm{diag}(\mathbf{p}_r)\mathbf{U}_r[i]})\mathbf{e}_{h,r}^i =\mathbf{e}_{h,r}^i-2\frac{\braket{\mathbf{U}_r[i],\mathbf{e}_{h,r}^i}_{\mathbf{p}_r}}{\braket{\mathbf{U}_r[i],\mathbf{U}_r[i]}_{\mathbf{p}_r}}\mathbf{U}_r[i],
    % = \underbrace{(\mathbf{I}-2\frac{\mathbf{u}\mathbf{u}^\top\mathrm{diag}(\mathbf{w})}{\mathbf{u}^\top\mathrm{diag}(\mathbf{w})\mathbf{u}})}_{\mathbf{H}(\mathbf{u},\mathbf{w})}\mathbf{x},
    % &=\mathbf{H}(\mathbf{u},\mathbf{w})x,
\end{split}
\end{equation}
where $\mathbf{e}_{h,r}^1$ is the initial entity embedding $\mathbf{e}_{h,r}$.
In this way, the time complexity of Equation (\ref{ell_orth}) is reduced to $O(k^2)$.
% The subsequent Euclidean orthogonal transformation can also be achieved by vector-vector operations, which is analogous to Equation (\ref{fast_orth}).
% Therefore, 

For the hyperbolic orthogonal relation transformations in Equation (\ref{new_hyp_orth}), the entity embedding $\mathbf{e}_{h,r}$ is transformed by a hyperbolic boost matrix and a Euclidean orthogonal matrix.
We first replace the hyperbolic boost matrix with its equivalent form~\cite{equivalent_boost} to avoid calculating the matrix square root, and then accelerate the transformation via block matrix multiplication.
% Specifically, given a 
Formally, the hyperbolic boost transformation can be achieved by vector-vector operations as follows:
\begin{equation}
\begin{split}
    {\renewcommand{\arraycolsep}{0pt}\begin{bmatrix}
  \sqrt{\Vert \mathbf{b}_r \Vert^2+1} & \mathbf{b}_r^\top\\[3pt] 
  \mathbf{b}_r & \sqrt{\mathbf{I}+\mathbf{b}_r\mathbf{b}_r^\top}
\end{bmatrix}}\mathbf{e}_{h,r}&={\renewcommand{\arraycolsep}{3pt}\begin{bmatrix}
  \gamma & -\gamma\mathbf{v}_r^\top\\[3pt]
  -\gamma\mathbf{v}_r & \mathbf{I}+\frac{\gamma^2}{1+\gamma}\mathbf{v}_r\mathbf{v}_r^\top
\end{bmatrix}}{\renewcommand{\arraycolsep}{0pt}\begin{bmatrix}
  e_{h,r}^1\\[3pt]
  \mathbf{e}_{h,r}^{2:k}
\end{bmatrix}} = {\renewcommand{\arraycolsep}{0pt}\begin{bmatrix}
  \gamma e_{h,r}^1 - \gamma\braket{\mathbf{v}_r, \mathbf{e}_{h,r}^{2:k}}_\mathbf{1}\\[3pt] 
  -\gamma e_{h,r}^1\mathbf{v}_r + \mathbf{e}_{h,r}^{2:k} + \frac{\gamma^2\braket{\mathbf{v}_r, \mathbf{e}_{h,r}^{2:k}}_\mathbf{1}}{1+\gamma}\mathbf{v}_r
\end{bmatrix}},
\end{split}
\end{equation}
where $\mathbf{v}_r\in\mathbb{R}^{k-1}$, $\Vert\mathbf{v}_r\Vert<1$, $\gamma=\frac{1}{\sqrt{1-\Vert\mathbf{v}_r\Vert^2}}$, $e_{h,r}^1$ and $\mathbf{e}_{h,r}^{2:k}$ are the first and the remaining $k-1$ entries of $e_{h,r}$, respectively.
The subsequent Euclidean orthogonal transformation can also be replaced by vector-vector operations, which is analogous to Equation (\ref{fast_orth}).
Therefore, the time complexity of Equation (\ref{new_hyp_orth}) is also reduced to $O(k^2)$.

% Table \ref{time} shows the convergence time required for training the models on the three datasets. 
% RotatE is the simplest rotation-based model with the linear time complexity, which is selected as the baseline.   
% Compared to RotatE and HousE, our GoldE costs comparable or even less training time on these datasets by using the efficient computation. 
Table \ref{time} shows the convergence time required for training the models on three standard benchmarks. 
We select RotatE (for its simplicity) and HousE (for its advanced performance) as the baselines.
By using the efficient computation, our GoldE framework costs comparable training time to these two models. 
Combined with the link prediction results in Table \ref{WN18RR-and-FB15k237}, one can see that the GoldE is capable of achieving superior effectiveness without sacrificing the efficiency.
% excessively sacrificing the efficiency.
\begin{table}[h]
\vspace{-2mm}
\caption{Training time of RotatE, HousE and our GoldE on three datasets.}
\label{time}
% \vskip 0.1in
\vspace{0.5mm}
\begin{center}
\begin{small}
\begin{tabular}{cccc}
\toprule
Model   & WN18RR & FB15k-237  & YAGO3-10 \\
\midrule
RotatE  & 4h         &  6h         & 10h         \\
HousE & 2h       &  3h         & 11h         \\
% \midrule
GoldE   & 2h       & 5h        &  13h      \\
\bottomrule
\end{tabular}
\end{small}
\end{center}
%\vskip -0.1in
\vspace{-4mm}
\end{table}

\section{Datasets}
\label{Datasets}
% \begin{table}[h]
% \caption{Statistics of five standard benchmarks.}
% \label{data_statistics}
% \vspace{2mm}
% \begin{center}
% \begin{small}
% %\setlength{\tabcolsep}{0.5mm}{
% \resizebox{0.55\columnwidth}{!}{\begin{tabular}{cccccc}
% \toprule
% Dataset & \#entity & \#relation & \#training & \#validation & \#test \\
% \midrule
% % WN18    & 40,943 & 18 & 141,442 & 5,000 &  5,000 \\
% % FB15k    & 14,951 & 1,345 & 483,142 & 50,000 & 59,071 \\
% WN18RR    & 40,943 & 11 & 86,835 & 3,034 & 3,134 \\
% FB15k-237    & 14,541 & 237 & 272,115 & 17,535 & 20,466 \\
% YAGO3-10    & 123,182 & 37 & 1,079,040 & 5,000 & 5,000 \\
% \bottomrule
% \end{tabular}}
% \end{small}
% \end{center}
% \vskip -0.1in
% \end{table}

Table~\ref{data_statistics} summarizes the detailed statistics of three benchmark datasets.

WN18RR \cite{wn18rr} and FB15k-237 \cite{fb15k237} datasets are subsets of WN18 \cite{TransE} and FB15k \cite{TransE} respectively with inverse relations removed to avoid test leakage.
WN18 is extracted from WordNet \cite{miller1995wordnet},  a database featuring lexical relations between words. FB15k is extracted from Freebase \cite{bollacker2008freebase}, a large-scale KG containing general knowledge facts.

YAGO3-10 is a subset of YAGO3 \cite{yago3-10}, containing 123,182 entities and 37 relations. Most of the triples in YAGO3-10 are descriptive attributes of people, such as citizenship, gender, profession and marital status.
% Three metrics
% The key of link prediction on WN18RR and FB15k-237 is to 
\begin{table}[h]
\vspace{-2mm}
\caption{Statistics of five standard benchmarks.}
\label{data_statistics}
% \vspace{2mm}
% \vskip 0.1in
\vspace{1.0mm}
\begin{center}
\begin{small}
\resizebox{0.55\columnwidth}{!}{\begin{tabular}{cccccc}
\toprule
Dataset & \#Entity & \#Relation & \#Training & \#Validation & \#Test \\
\midrule
% WN18    & 40,943 & 18 & 141,442 & 5,000 &  5,000 \\
% FB15k    & 14,951 & 1,345 & 483,142 & 50,000 & 59,071 \\
WN18RR    & 40,943 & 11 & 86,835 & 3,034 & 3,134 \\
FB15k-237    & 14,541 & 237 & 272,115 & 17,535 & 20,466 \\
YAGO3-10    & 123,182 & 37 & 1,079,040 & 5,000 & 5,000 \\
\bottomrule
\end{tabular}}
\end{small}
\end{center}
\vskip -0.1in
\end{table}

\section{Implementation Details}
\label{imp}
\textbf{Evaluation:} We follow the filtered ranking protocol~\cite{TransE} for evaluation. For a test triple $(h,r,t)$, we rank it against all negative triplets $(h,r,t')$ or $(h',r,t)$ that do not appear in the knowledge graph. We report mean rank (MR), mean reciprocal rank (MRR) and HITS at N (H@N) as performance metrics.

\textbf{Fair comparisons:} To ensure fair comparisons, we control the total number of parameters of GoldE to be similar to the baselines as shown in Table~\ref{params}.
Specifically, we follow \cite{house} to fix the embedding size $k$ of each entity as $800$, $600$, $1000$ on WN18RR, FB15k-237 and YAGO3-10 datasets, respectively.
\begin{table}[h]
\vspace{-4mm}
\caption{Comparison of the number of parameters. The results of baselines are taken from the original papers. } 
\label{params}
% \vskip 0.1in
\vspace{0.5mm}
\begin{center}
\begin{small}
\resizebox{0.5\columnwidth}{!}{
\begin{tabular}{cccccc}
\toprule
Dataset  & RotatE & Rotate3D & DualE & HousE  & GoldE  \\
\midrule
% WN18 & 40.95M & 122.90M & 49.15M & 65.53M & 40.88M \\
% FB15k   & 31.25M & 50.23M & 26.08M & 26.08M & 24.40M \\
WN18RR     & 40.95M & 61.44M & 32.76M & 32.57M & 31.23M  \\
FB15k-237    & 29.32M & 44.57M & 11.64M  & 12.13M & 15.93M  \\
YAGO3-10    & 123.18M & - & -  & 122.91M & 118.52M    \\
\bottomrule
\end{tabular}}
\end{small}
\end{center}
\vspace{-4mm}
\end{table}

\textbf{Selection of Product Manifold:} For reducing the number of hyperparameters, we set identical dimensions for component spaces of the same geometric type to describe the product space as $\mathbb{D}^k=\bigtimes_{i=1}^{m_\mathbb{P}}\mathbb{P}_{i}^{k_\mathbb{P}}\times \bigtimes_{j=1}^{m_\mathbb{Q}}\mathbb{Q}_{j}^{k_\mathbb{Q}}$, where $m_\mathbb{P}$ and $m_\mathbb{Q}$ denote the number of elliptic and hyperbolic component spaces, $k_\mathbb{P}$ and $k_\mathbb{Q}$ denote the dimension of elliptic and hyperbolic component spaces.
We omit $\mathbb{E}$ since the Euclidean orthogonal parameterization is the special case of our proposed elliptic orthogonal parameterization according to Claim~\ref{pro1}.
Considering that the embedding size $k$ is fixed to ensure fair comparisons, we can define the product manifold $\mathbb{D}^k$ with only three hyperparameters: the embedding size of one partition $k_*=m_\mathbb{P}k_\mathbb{P}$, the number of elliptic component spaces $m_\mathbb{P}$, and the number of hyperbolic component spaces $m_\mathbb{Q}$.
Note that the selected product manifold may not be the optimal case due to the simplified dimension configuration.
One potential direction is to design a signature\footnote[6]{The signature of product manifold refers to the number of component spaces of each geometric type and their dimensions.} estimator~\cite{mix1} for KGs, and we leave this as the future work.

\textbf{Hyperparameters:} We use Adam~\cite{kingma2014adam} as the optimizer and fine-tune the hyperparameters on the validation dataset.
The hyperparameters are tuned by the random search~\cite{bergstra2012random}, including batch size $b$, self-adversarial sampling temperature $\alpha$, fixed margin $\gamma$, learning rate $lr$, dimension $k_*$, number of elliptic component spaces $m_\mathbb{P}$, and number of hyperbolic component spaces $m_\mathbb{Q}$. The hyperparameter search space is shown in Table \ref{Hyper_search}.
\begin{table}[h]
\vspace{-4mm}
\caption{Hyperparameter search space.}
\label{Hyper_search}
% \vskip 0.1in
\vspace{0.5mm}
\begin{center}
\begin{small}
\resizebox{0.43\columnwidth}{!}{
\begin{tabular}{ccc}
\toprule
Hyperparameter  & Search Space & Type  \\
\midrule
$b$  & $\{500, 800, 1000, 1500, 2000\}$ & Choice\\
$\alpha$  & $[0.5, 2.0]$ & Range\\
$\gamma$  & $\{6, 8, 10, 12, 16, 20, 24, 28\}$ & Choice\\
$lr$  & $\{0.0001, 0.0005, 0.001, 0.003\}$ & Choice\\
$k_*$  & $\{0, 200, 400, 600, 800\}$ & Choice\\
$m_\mathbb{P},m_\mathbb{Q}$ & $\{2,4,8,10,15,20,25,30\}$ & Choice \\
% $m_\mathbb{Q}$  & $\{2,4,8,10,15,20,25,30\}$ & Choice \\
% $k_\mathbb{P}$  & $[0, 0.3]$ & Choice\\
% $k_\mathbb{Q}$  & $[0, 0.3]$ & Choice\\
\bottomrule
\end{tabular}}
\end{small}
\end{center}
\vspace{-5mm}
\end{table}

% \section{Results in Restricted Dimensions}
% \label{restricted}
\section{More Results of GoldE with Varying Embedding Size}
\label{app_curve}
% \subsection{}
Table \ref{full-low-setting} shows the full results of GoldE with embedding size $k$ equal to $32$.
Such low-dimensional setting follows \cite{RotH}.
One can observe that our GoldE framework consistently outperforms the baselines on all metrics across the three datasets, demonstrating the superior modeling capability of our designed universal orthogonal parameterization.
\vspace{-4mm}
\begin{table}[h]
\caption{Full results of GoldE with embedding size $k$ equal to $32$. 
% Due to the space constraint, we only report MRR and H@10 here.  
% For full results on all metrics, please refer to Appendix~\ref{restricted}.
}
\label{full-low-setting}
\vspace{0.5mm}
% \vskip 0.1in
\begin{center}
\begin{small}
\resizebox{0.7\columnwidth}{!}{
\begin{tabular}{lcccccccccccc}
\toprule
         & \multicolumn{4}{c}{WN18RR} & \multicolumn{4}{c}{FB15k-237} & \multicolumn{4}{c}{YAGO3-10} \\ \cmidrule(r){2-5} \cmidrule(r){6-9} \cmidrule(r){10-13}
Model & MRR   & H@1 & H@3       & H@10        & MRR      & H@1 & H@3     & H@10 & MRR    & H@1 & H@3       & H@10          \\ 
\midrule
% ComplEx    & .420         & .460        & .294              &     .463       & .336   & .484   \\
RotatE    & .387 & .330   & .417      & .491        & .290  & .208   & .316         &     .458    & - & -   & -   & -   \\
QuatE    & .421   & .396 & .430      & .467        & .293  & .212 & .320            &     .460    & -  & -   & -   & -   \\
% \midrule
MuRE  & .458   & .421 & .471      & .525        & .313  & .226       & .340     & .489        & .283  & .187 & .317  & .478       \\
MuRP  & .465  & .420 & .484       & .544        & .323  & .235 & .353            & \underline{.501}        & .230  & .150 & .247  & .392       \\
RefH & .447 & .408 & .464         & .518        & .312   & .224 & .342           & .489        & .381  & .302 & .415  & .530      \\
RotH & \underline{.472}   & \underline{.428}  & \underline{.490}    & \underline{.553}        & .314    & .223  & .346        & .497        & .393 & .307 & .435   & .559      \\
AttH & .466   & .419 & .484      & .551        & \underline{.324}     & \underline{.236} & \underline{.354}         & \underline{.501}        & \underline{.397}  & \underline{.310} & \underline{.437}  & \underline{.566}      \\
\midrule
GoldE    & \textbf{.482}  & \textbf{.447}    & \textbf{.493}       & \textbf{.558}        & \textbf{.335}   & \textbf{.246}   & \textbf{.360}        & \textbf{.506}        & \textbf{.420}     & \textbf{.330}     & \textbf{.465}   & \textbf{.586}      \\
% HousE$^+$   & .514         & .606        & .366              & .552        &   &      \\
\bottomrule
\end{tabular}}
\end{small}
\end{center}
% \vspace{-8mm}
\end{table}

To delve deeper into the impact of different embedding sizes, we further conduct experiments for GoldE with varying values of $k$.
The parameters in all cases are tuned, and the results are averaged over 5 runs with different random initializations.
Figure \ref{final-k} exhibits the MRR performance curves on WN18RR.
% Our GoldE framework consistently outperforms all baselines, suggesting that the designed  still attain high-accuracy 
It reveals that GoldE consistently surpasses the baselines across a broad range of embedding sizes, comprehensively confirming the effectiveness of our proposal.
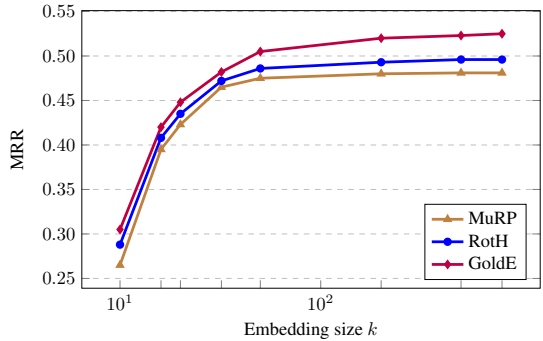
\begin{figure}[t!]
\vspace{5mm}
% \label{exp-fig}
\centering
\begin{tikzpicture}[font=\normalsize, scale=0.7]
                \begin{axis}[
                    width=0.6\textwidth, 
                    height=0.4\textwidth,
                    legend cell align={left},
                    legend style={nodes={scale=1.0, transform shape}},
                    xlabel={Embedding size $k$},
                    xtick pos=left,
                    tick label style={font=\normalsize},
                    ylabel style={font=\normalsize},
                    %xmode=log,
                    %log ticks with fixed point,
                    ylabel={MRR},
                    xmode=log,
                    xtick={10, 16, 20, 32, 50, 100, 200, 500, 800},
                    xticklabels={$10^1$, , , , ,$10^2$},
                    ytick={0.25, 0.30,0.35,0.40,0.45,0.50,0.55},
                    yticklabels={$0.25$,$0.30$,$0.35$,$0.40$,$0.45$,$0.50$,$0.55$},
                    legend pos=south east,
                    ymajorgrids=true,
                    grid style=dashed
                ]
                % \addplot[
                %     color=purple,
                %     dotted,
                %     mark options={solid},
                %     mark=diamond*,
                %     line width=1.5pt,
                %     mark size=2pt
                %     ]
                %     coordinates {
                %     (2, 0.486)
                %     (4, 0.504)
                %     (6, 0.509)
                %     (8, 0.511)
                %     (10, 0.511)
                %     (12, 0.511)
                %     };
                %     \addlegendentry{$\mathbb{Q}$}
                \addplot[
                    color=brown,
                    solid,
                    mark options={solid},
                    mark=triangle*,
                    line width=1.5pt,
                    mark size=1.5pt
                    ]
                    coordinates {
                    (10, 0.265)
                    (16, 0.395)
                    (20, 0.423)
                    (32, 0.465)
                    (50, 0.475)
                    (200, 0.480)
                    (500, 0.481)
                    (800, 0.481)
                    };
                    \addlegendentry{MuRP}
                \addplot[
                    color=blue,
                    solid,
                    mark options={solid},
                    mark=*,
                    line width=1.5pt,
                    mark size=1.5pt
                    ]
                    coordinates {
                    (10, 0.288)
                    (16, 0.408)
                    (20, 0.435)
                    (32, 0.472)
                    (50, 0.486)
                    (200, 0.493)
                    (500, 0.496)
                    (800, 0.496)
                    };
                    \addlegendentry{RotH}
                % \addplot[
                %     color=brown,
                %     dotted,
                %     mark options={solid},
                %     mark=triangle*,
                %     line width=1.5pt,
                %     mark size=2pt
                %     ]
                %     coordinates {
                %     (2, 0.471)
                %     (4, 0.489)
                %     (6, 0.493)
                %     (8, 0.494)
                %     (10, 0.496)
                %     (12, 0.496)
                %     };
                %     \addlegendentry{$\mathbb{E}$}
                \addplot[
                    color=purple,
                    solid,
                    mark options={solid},
                    mark=diamond*,
                    line width=1.5pt,
                    mark size=1.5pt
                    ]
                    coordinates {
                    (10, 0.305)
                    (16, 0.420)
                    (20, 0.448)
                    (32, 0.482)
                    (50, 0.505)
                    (200, 0.520)
                    (500, 0.523)
                    (800, 0.525)
                    };
                    \addlegendentry{GoldE}
                \end{axis}
                \end{tikzpicture}
    \vspace{-3.0mm}
    \caption{MRR results of the models with embedding size $k\in\{10,16,20,32,50,200,500,800\}$ on WN18RR.}
    \label{final-k}
    \vspace{-5mm}
\end{figure}

\section{Limitations and Future Work}
The proposed GoldE framework follows the traditional learning paradigm of KGE~\cite{VLP}. Specifically, GoldE (as well as existing KGE approaches) models each training triple independently to implicitly capture the multi-hop logical patterns. However, the single-triple constraints inevitably have biases during optimization. These biases will accumulate with the increase of relation hops and make the perceived logical patterns unreliable, thereby hindering the generalization ability. A possible solution is to combine our framework with a path/graph encoder~\cite{VLP, zhang2023att} to simultaneously process multiple structurally-dependent training triples. Furthermore, GoldE purely accesses the graph structures for symbolic inference. We plan to incorporate the textual attributes~\cite{li2017ppne, yan2023benchmark, zhao2023glem} of entities and relations in our future work to enhance the quality of learned representations. In addition, we would also like to investigate other theories and methodologies~\cite{yang2022continuous, liu2023study, li2024ripple, hu2024graphflow+, feng2024corporate} to unveil their potential significance in the realm of knowledge mining~\cite{rui2022knowledge}.

%%%%%%%%%%%%%%%%%%%%%%%%%%%%%%%%%%%%%%%%%%%%%%%%%%%%%%%%%%%%%%%%%%%%%%%%%%%%%%%
%%%%%%%%%%%%%%%%%%%%%%%%%%%%%%%%%%%%%%%%%%%%%%%%%%%%%%%%%%%%%%%%%%%%%%%%%%%%%%%

\end{document}